\newcommand{\bm}{\boldsymbol}
\newtheorem{theorem}{Theorem}[section]
\newtheorem{lemma}[theorem]{Lemma}
\newtheorem{corollary}[theorem]{Corollary}
\newtheorem*{theorem*}{Theorem}
\newtheorem*{lemme*}{Lemme}
\newtheorem*{proposition*}{Proposition}
\newtheorem{assumption}[theorem]{Assumption}
\newcommand{\eps}{\varepsilon}
\newcommand{\E}{\mathbb{E}}
\newcommand{\btheta}{\boldsymbol{\theta}}
\theoremstyle{definition}
\theoremstyle{remark}
\newtheorem{remark}[theorem]{Remark}
\author{Inbar Seroussi\footnote{Department of Mathematics, Weizmann Institute of Science, Rehovot, Israel}\;\;\;\;\;\;\;
Ofer Zeitouni\footnote{
Department of Mathematics, Weizmann Institute of Science, Rehovot, Israel}}
\date{}
\begin{document}
\title{Lower Bounds on the Generalization Error of Nonlinear Learning Models}
\maketitle
\begin{abstract}
 We study in this paper lower bounds for the generalization error of
 models derived from
 multi-layer neural networks, in the regime where the
 size of the layers is commensurate with the number of samples in the
 training data. We derive explicit
 generalization lower bounds for
 general biased estimators, in the cases of two-layered networks. For linear activation function, the bound is asymptotically
 tight. In the nonlinear case,
 we provide a comparison of our bounds with an empirical study
 of the stochastic gradient descent algorithm. In addition, we derive
 %compare to similar 
 bounds for unbiased estimators, which show that the latter have unacceptable
 performance for truly nonlinear networks.
 %. Our results show that they have unacceptable
 %performance
 %for such nonlinear networks in this regime. 
 The analysis uses
 elements from the theory of large random matrices.
\end{abstract}
\section{Introduction}

The empirical success of deep learning is notable in a vast array of applications such as image recognition \cite{krizhevsky2012imagenet},
speech recognition \cite{hinton2012deep} and other applications \cite{goodfellow2016deep,schmidhuber2015deep}.
In spite of many recent advances, this empirical success continues to outpace the development of a concrete theoretical
understanding of the optimization process; in particular, the answer to the question of
 how and when deep learning algorithms generalize well is still open.
One of the difficulties stems from the nonlinear and complex structure of these networks which are highly
non-convex functions, often with millions of parameters. Moreover, the design of such networks for specific applications is currently mainly done in practice by trial and error.

In this paper, we derive  Cram\'er-Rao (CR) type lower bounds \cite{rao1947minimum,cramer1946contribution} on the generalization error of these networks, when the training data is noisy. These bounds may provide insights in the process of designing and evaluating learning algorithms. Our analysis is motivated by the use of CR bounds in engineering as a benchmark for performance evaluations, see e.g.  \cite{tichavsky1998posterior,eldar2008rethinking}  and references therein,
and as a guideline in the process of improving the experimental design.
To calculate the CR bounds we combine tools from statistical estimation theory and random matrix theory. Our analysis is flexible and can be generalized to other learning tasks and architectures.

Our main findings are as follows.
We provide  lower bounds for general estimators, and evaluate them
using random matrix theory 
%with
%no assumption on the estimator bias. 
%We calculate these bounds 
for linear models and for feed-forward neural networks.
%using random matrix theory.
%The main matrix that appears when calculating these bounds is the Fisher information matrix.
We show that the bounds are tight for two layers neural network 
with linear activation function. As a comparison, we provide also a lower bound on the generalization error for  unbiased estimators in nonlinear models,
based on the classical CR bound; that bound is
%. The error in this case is bounded by 
the ratio of expected \textit{rank} of the Fisher information matrix to the number of samples, times the variance of the noise. 
We show that the expected rank of the Fisher information matrix in 
high dimension is large, for standard nonlinear network architectures, thereby
confirming that 
%(Motivated by the ``double descent'' phenomena in high dimension \cite{nakkiran2019deep,mei2019generalization,belkin2019reconciling,belkin2019two}, the expected rank may be
%viewed as the interpolation threshold for nonlinear models, if unbiased
%estimators are used.)
successful learning algorithms for nonlinear
networks in high dimension need to be biased.

%We emphasize that we consider the case of noisy training data, see
%\eqref{eq:model} below. It would be very interesting to derive a-priori lower bounds on the performance of estimators
%in the (somewhat unrealistic) setup of
%noiseless training data. However, our methods completely break down in that case.

\subsection{Related literature}
The connection between random matrix theory and deep learning is not new. It first appeared  in the study of neural networks at initialization,
for deterministic \cite{louart2018random,fan2020spectra} and  random \cite{pennington2017nonlinear,benigni2019eigenvalue} data. In these works, the matrix of interest is the conjugated kernel, i.e., the output of the last hidden layer at different data points transposed with itself.
The spectrum of this kernel can then be used in the evaluation
 of the training error, as well as of the generalization error in the random features model in which only the last layer weights are learned
 \cite{louart2018random,pennington2017nonlinear,benigni2019eigenvalue,mei2019mean}.
Random matrix theory is also used in the context of kernel learning. It is shown in \cite{jacot2018neural} that in the limit of large number of parameters and finite training samples,
 deep networks can be viewed as a kernel learning problem. Here, one of the matrices of interest is the Neural Tangent Kernel \cite{jacot2018neural}, whose
 spectrum in the  ``linear width limit'' when the number of samples is proportional to the hidden layers or features vector size, is calculated in \cite{fan2020spectra}.
%This kernel is also important for the understanding the optimization landscape.

Another matrix of interest is the Fisher information matrix which  measures the amount of information about unknown parameters of the true model distribution that the training samples carry. This matrix appears  in the natural gradient algorithm, since the latter
is  the steepest descent algorithm induced by the Fisher geometry metric. This algorithm has the advantage of being invariant under re-parametrization \cite{amari1998natural}.
The Fisher matrix is also used to define a notion of complexity using  the Fisher-Rao norm, en route to providing  an upper bound on the generalization error of deep network with
the Relu activation function \cite{liang2019fisher}. The spectrum of the Fisher information  matrix at initialization for one hidden layer is calculated in \cite{pennington2018spectrum}. The Fisher matrix for deep neural network in the mean field limit is studied in \cite{karakida2018universal}.

There are several existing generalization upper bounds. Most of these bounds aim to estimate the capacity of the model by offering new measures of complexity excluding knowledge about the true prior of the model's parameters. This idea is used in order to bound the generalization error from above, examples for such bounds are the PAC-Bayes bounds \cite{mcallester1999pac,neyshabur2017pac,dziugaite2017computing}, VC dimension \cite{vapnik2015uniform} parameters norms \cite{liang2019fisher,bartlett2017spectrally,neyshabur2017exploring}. For the empirical evaluation of some of these generalization bounds see  \cite{jiang2019fantastic}. These approaches suggest that modern network architectures have very large capacity.
Recently, lower bounds on the generalization error of linear regression models in the  overparametrized regime are derived
in \cite{bartlett2020benign,dicker2016ridge,hastie2019surprises,dobriban2018high} for specific estimators and for any interpolating estimator in \cite{muthukumar2020harmless}.

\subsection{Organization}
The remainder of this paper is organized as follows. In Sec. \ref{sec: general settings}, we present the model and assumptions. In Sec. \ref{sec: main results}, we present our main analytical results.
%about the bounds for general estimators and for unbiased estimator. 
In Sec. \ref{sec:numerics}, we compare our bounds to some known estimators.
In Sec. \ref{sec: proofs} we provide the proof of the main theorems.  For the reader's convenience,
a review of the CR bounds is provided in Appendix \ref{Sec:CRB}.

\subsection{Notation} Throughout,  boldface lowercase letters denote  (column) vectors. $\boldsymbol{x}^T$ denotes the transpose of a vector
$\boldsymbol{x}$.
Uppercase letters denote matrices. For two vectors $\boldsymbol{v},\boldsymbol{w}$ of the same length, $\boldsymbol{v}\circ\boldsymbol{w}$ denotes the vector whose $i$th entry is $\boldsymbol{v}_i\cdot \boldsymbol{w}_i$.  For reals $a,b$, we write $a\wedge b=\min(a,b)$. We denote by $\otimes$ the Kronecker product.

 For a random vector $\boldsymbol{\theta}$, the statement
$\boldsymbol{\theta}\sim p(\boldsymbol{\theta})$ means that $\boldsymbol{\theta}$ is distributed according to the law $p(\boldsymbol{\theta})$. When a law has density with
respect to Lebesgue measure on Euclidean space, we continue to use $p$ for the density; no confusion should arise from this.

We use the standard $O$ notation. Thus, sequences $a=a(d)$ and $b=b(d)$ satisfy $a=O(b)$ if there exists a constant $C$ so that $a(d)\leq C b(d)$ for all $d$.
Similarly, $a=o(b)$ if $\lim_{d\to\infty} |a/b|=0$.

We write $\mathbb{E}$ for expectation. When we want to emphasize over which variables expectation is taken, we
often write e.g. $\mathbb{E}_{x,y}$. When a conditional law is involved, we write e.g. $\mathbb{E}_{x,y|\boldsymbol{\theta}}$. Thus, in the last expression, the expectation is taken with respect to the law
$p(x,y|\boldsymbol{\theta})$.

For an $N\times N$  matrix $A$  with eigenvalues $\lambda_i$, we use $\rho_A=N^{-1}\sum_{i=1}^N \delta_{\lambda_i}$ to denote the empirical measure of eigenvalues of $A$.
We use $\rho_{\gamma}$ to denote the Marchenko--Pastur distribution with parameter $\gamma\in (0,\infty)$, i.e., with  $\lambda_{\pm}=(1\pm\sqrt{\gamma})^{2}$,
	\begin{equation}
\label{eq-PM}
	d\rho_{\gamma}(s)={\bf 1}_{s\in[\lambda_{-},\lambda_{+}]}\sqrt{(\lambda_{+}-s)(s-\lambda_{-})}/(2\pi\gamma s)ds+(1-{\gamma}^{-1})_{+}\delta_{0}(s).
	\end{equation}

	\noindent
	{\bf Acknowledgment} We thank Yonina Eldar and Ohad Shamir for
	useful discussions at the initiation of this project.
	 This work was partially supported by the Israel Science Foundation
	grant \# 421/20 and by the European Research Council (ERC) under the European Union's Horizon 2020 research and innovation program (grant agreement No. 692452). This paper was presented at TOPML, April 20-21 2021, as a short presentation.
\section{Model and problem statement \label{sec: general settings}}

We begin by setting a  general framework for learning problems. We then specialize it to the feed-forward networks that are studied in this paper.
%, which is treated in this paper.
\subsection{A general learning model}
\label{sec-genlearning}
We are given
 a set of $M$ training samples $\{\boldsymbol{z}^{(i)}\}_{i=1}^{M}$, such that $\boldsymbol{z}^{(i)}=(\boldsymbol{x}^{(i)};{y}^{(i)})\in\mathbb{R}^{d}\times \mathbb{R}$ are independently drawn from a distribution $p(\boldsymbol{z}^{(i)}|\boldsymbol{\theta})$, parameterized by a vector $\boldsymbol{\theta}\in\mathbb{R}^P$.
The parameters $\boldsymbol{\theta}$ are assumed random,  such that $\boldsymbol{\theta}\sim p(\boldsymbol{\theta})$, with $p(\boldsymbol{\theta})$  the  \textit{prior distribution}.
We write $X\in\mathbb{R}^{d\times M}$ for the matrix whose columns are $\boldsymbol{x}^{(i)}$,  $\boldsymbol{y}\in\mathbb{R}^{M}$ for the vector  whose
entries are $y^{(i)}$,  and set  $Z=\begin{pmatrix}X\\ \boldsymbol{y}^T \end{pmatrix}\in\mathbb{R}^{(d+1)\times M}$.

We specialize the model to the situation where
\begin{equation}
{y}^{(i)}=f_{\boldsymbol{\theta}}(\boldsymbol{x}^{(i)})+{\varepsilon}^{(i)}
\label{eq:model}
\end{equation}
where $f_{\boldsymbol{\theta}}:\mathbb{R}^d  \rightarrow \mathbb{R}$ is some representation of the real world,  ${\varepsilon}^{(i)}\sim\mathcal{N}(0,\sigma_{{\varepsilon}}^{2})$ are iid, and
$\boldsymbol{x}^{(i)}\sim p(\boldsymbol{x})$; that is, the independent training samples $\boldsymbol{x}^{(i)}$ have a distribution independent of $\boldsymbol{\theta}$, while the
$y^{(i)}$s do depend on $\boldsymbol{\theta}$ and observation noise $\varepsilon^{(i)}$.

The estimation task at hand is to predict the output of the network $\tilde{{y}}=f_{\boldsymbol{\theta}}(\tilde{\boldsymbol{x}})$ on a new sample $\boldsymbol{\tilde{x}}\sim p(\boldsymbol{{x}})$.
Formally, we seek a (measurable) \textit{estimator}
$\hat{{y}}(\tilde{\boldsymbol{x}},Z)$. The performance of  an estimator is given in terms of a \textit{loss function} $\ell:\mathbb{R}\times \mathbb{R}\to \mathbb{R}$, which we take to be the quadratic loss $\ell(y,y')=(y-y')^2$; that is we consider  the generalization error (population risk) $\mathbb{E}_{Z,\tilde{\boldsymbol{x}}}[ \ell(\tilde{{y}};\hat{{y}})]$.
One seeks  of course to minimize the population risk, and we will derive limitations on how small a risk can one achieve.

It is worthwhile to emphasize that  we do not impose any constraint on the estimator $\hat y$ (except for the technical measurability condition). In particular, we do \textit{not}
require that $\hat y=f_{\hat {\boldsymbol{\theta}}}(\tilde{ \boldsymbol{x}})$ for some $\hat{\boldsymbol{\theta}}$, where $\hat{\boldsymbol{\theta}}$ is a measurable function of
$Z$; various learning algorithms may of course use such estimators, and our lower bounds on the mean square loss will apply to them. However, the derivation of the bounds does not postulate
such a structure for the estimator.

\iffalse
It is convenient to decompose the generalization error as
\begin{equation}
\mathbb{E}_{\tilde{\boldsymbol{x}},Z,\boldsymbol{\theta}}\left(\tilde{y}-\hat{y}\right)^{2}=\underset{\mathrm{Variance}}{\underbrace{\mathbb{E}_{\tilde{\boldsymbol{x}},\boldsymbol{\theta}}\mathrm{Var}_{Z|\boldsymbol{\theta},\tilde{\boldsymbol{x}}} \left[\hat{y}\right]}}+\underset{\mathrm{Bias}}{\underbrace{\mathbb{E}_{\tilde{\boldsymbol{x}},\boldsymbol{\theta}}\left[b_{\boldsymbol{\theta}}(\tilde{\boldsymbol{x}})^{2}\right]}},
\label{eq: bias variance decompossion}
\end{equation}
where the bias is denoted by $b_{\boldsymbol{\theta}}(\tilde{\boldsymbol{x}})=f_{\boldsymbol{\theta}}(\tilde{\boldsymbol{x}})-\mathbb{E}_{Z|\boldsymbol{\theta},\tilde{\boldsymbol{x}}}\hat{y}(\tilde{\boldsymbol{x}},Z)$. \textit{Unbiased estimators} are those which satisfy $b_{\boldsymbol{\theta}}(\tilde{\boldsymbol{x}})=0$, almost surely.
Note that the minimum mean square error (MMSE) estimator
$\hat{y}_{\mathrm{opt}}=\mathbb{E}\left[\tilde{y}|\tilde{\boldsymbol{x}},Z\right]$ is typically not unbiased.
\fi
We often make the following assumption on the structure of the data.
\begin{assumption}
\label{subsec:assumptions}
	The $\boldsymbol{x}^{(i)}$, $i=1,\ldots,M$, are iid centered  Gaussian vectors with
	covariance matrix $\mathbb{E}\boldsymbol{x}^{(i)} (\boldsymbol{x}^{(i)})^{T} = \sigma^2_xI_d$.
\end{assumption}
This is a standard assumption, which simplifies the calculations. We believe it can be relaxed significantly (i.i.d. entries with sub-Gaussian tails should
present no difficulty), however this would involve a significant amount of technical work that we chose to avoid.

\subsection{The feed-forward setup}
A feed-forward network with $L$ layers and parameters $W^{(l)}\in \mathbb{R}^{N_l\times N_{l-1}}$, $ l=1,\ldots,L$, see Figure \ref{fig-NN},  is one where
\begin{equation}
f_{\boldsymbol{\theta}}(\boldsymbol{x})=\frac{1}{\sqrt{N_{L-1}}}W^{(L)}\sigma(\boldsymbol{q}^{L-1}), %+\boldsymbol{b}^L,
\label{eq: deep model}
\end{equation}
with  the $q^l$s, $l=1,\ldots,L$,  defined recursively as $\boldsymbol{q}^{1}=W^{(1)}\boldsymbol{x}/\sqrt{d}$
and  $\boldsymbol{q}^{l}=W^{(l)}\sigma(\boldsymbol{q}^{l-1})/\sqrt{N_{l-1}}$ for $l\geq 2$; here, $\sigma:\mathbb{R}\to\mathbb{R}$ is a continuous  function,
$\sigma(\boldsymbol{x})_i=\sigma(\boldsymbol{x}_i)$, and $W^{(l)}$ are  the \textit{weights} of the network. Since our output is a scalar, we always have $N_L=1$.
The vector $\boldsymbol{\theta}
\in \mathbb{R}^P$ is then taken as the collection
of weights $W^{(i)}$, $i=1,\ldots,L$, with $P=\sum_{l=1}^{L}N_{l}N_{l-1}$.
\begin{figure}[h!]
	\begin{centering}
		\includegraphics[trim={3cm 0cm 1cm 0cm},scale=0.3]{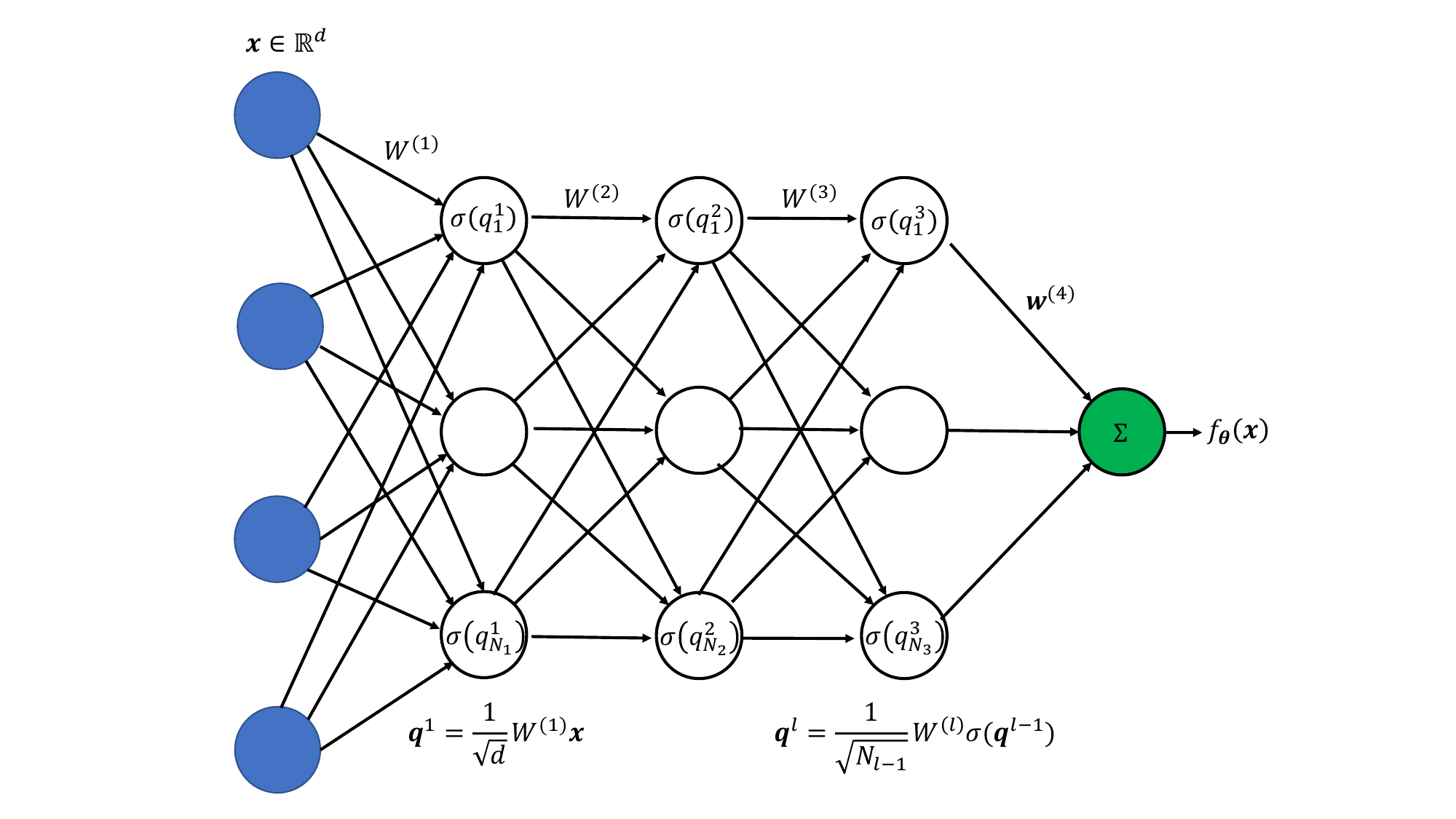}
		\par\end{centering}
	\caption{\label{fig-NN}Feed-forward network architecture, with $L=4$.}
\end{figure}

We often make the following assumption on the structure of feed-forward networks.
%Recall that in our setup, those are really assumptions on the
%structure of the data.
\begin{assumption}
\label{subsec:assumptionsiFF1}
	$W_{ij}^{(l)}\sim \mathcal{N}(0, \alpha_l^{-1})$, independent, with  ${\alpha_l}>0$, $\alpha_l=\alpha$ for $1 \leq l<L$.
\end{assumption}
 Assumption \ref{subsec:assumptionsiFF1} relates to the
 prior distribution of the weights, and
probably could be relaxed
to allow for more general smooth distributions; here again, using non-Gaussian weights would increase the complexity of the computations, because
the Hessian of the (log of the) prior will not be deterministic.
\iffalse
This assumption can be relax to any smooth distribution yielding more elaborate Fisher matrix for the prior. We note that this limit the space of function that the data represents. Exploring a larger function class can be done also by considering a multi layer model with Gaussian prior.
\fi

Our last assumption is a technical assumption that allows for the use of calculus in the derivation of our bounds.
It can be relaxed as in e.g., \cite{mei2019generalization}. We note
that Assumption \ref{subsec:assumptionsiFF2}
is satisfied by all commonly-used activation functions,
including the ReLU and sigmoid functions.
\begin{assumption}
\label{subsec:assumptionsiFF2}
 The activation function $\sigma$ is five times  differentiable and bounded, together with its derivatives,
	by some constant $K_\sigma<\infty$.
\end{assumption}

For convenience, we introduce the constants
\begin{gather}
\eta_{0}(v)=\mathbb{E}\left[\sigma^{2}(vz)\right]-\mathbb{E}\left[\sigma(vz)\right]^{2},\quad %	\eta_{1}(v)=
\xi(v)=	\left(v\mathbb{E}\left[\sigma'(vz)\right]\right)^{2},
\label{eq: constants} \\
\eta_{1}(v)=v^{2}\mathbb{E}\left[\sigma'(vz)^{2}\right]
\end{gather}
where we recall that $z\sim\mathcal{N}(0,1)$, and $v={\sigma_x}/{\sqrt{\alpha}}$.
During our analysis we will sometime use for brevity 
$\xi(v)=\xi$, and $\eta_{0}(v)=\eta_0$.

\section{Main results \label{sec: main results}}
As discussed in the introduction,
we  provide lower bounds on the generalization error of general estimator for feed-forward networks. For completeness, we also
show a bound for unbiased estimators for the general
learning model of Section \ref{sec-genlearning}.
Our bounds are based on
both Bayesian and non-Bayesian versions of the CR bound;
we review the background in Appendix \ref{Sec:CRB}.
The following is our main result.
\iffalse
To derive the b, we use a Bayesian version of the CR bound. The unbiased bound is derived using the classical CR. A review of the CR bounds for unbiased estimator and general estimator is provided in Section \ref{Sec:CRB}. %\subsection{Fully connected neural network \label{subsubsec: nonlinear_bounds_any_estimator}}
%We next turn to the case of true feed-forward networks, that is with nonlinear activation functions.
\fi
\begin{theorem}
		Consider the model
in \eqref{eq:model}, (\ref{eq: deep model}) with $L=2$, and let Assumptions \ref{subsec:assumptions},  \ref{subsec:assumptionsiFF1} and \ref{subsec:assumptionsiFF2}
hold. Let $\hat{y}(\tilde{\boldsymbol{x}},Z)$ be an arbitrary  measurable and square integrable  estimator. Then,
 in the regime $M,N_1,d\rightarrow \infty$ such that $\beta_{1}=\lim_{d\to\infty} {N_1}/d\in (0,\infty)$
	and $\gamma_0=\lim_{d\to\infty} {d}/M\in(0,\infty)$, it holds that
\begin{equation}
\mathbb{E}\left(\tilde{y}-\hat{y}\right)^{2}\geq  \max\left(B^{(1)},B^{(2)}\right)\label{eq:Layer_wise_L=2},
\end{equation}
where
\begin{equation}
	B^{(1)}=
\sigma_{\varepsilon}^{2}\xi\int\frac{1+\xi(1-\gamma_{0}^{-1})s/(\sigma_{\varepsilon}^{2}\alpha_{2})}{\xi s+\alpha_{2}\sigma_{\varepsilon}^{2}}d\rho_{\gamma_{0}^{-1}}(s)+o(1),
	\label{eq:B1Tight}
\end{equation}
and with $a_1=m_1(\sigma_{\varepsilon}\sqrt{{\alpha_2}{\beta_{1}}})$, and $a_2=m_2(\sigma_{\varepsilon}\sqrt{{\alpha_2}{\beta_{1}}})$,
	\begin{equation}
B^{(2)}=\frac{\sigma_{\varepsilon}}{\sqrt{\alpha_2\beta_{1}}}a_{1}\left(\frac{\xi}{1+\xi a_{1}a_{2}}+\eta_{0}-\xi\right)+o(1)\label{eq:B2Tight}.
\end{equation}	
Here, the functions $m_1=m_{1}(u),m_2=m_{2}(u):\mathbb{C}_{+}\rightarrow\mathbb{C}_{+}$,
satisfy for $\Re(u)>0$:	
\[
m_{1}={\beta_{1}}\left(u+\left(\eta_{0}-\xi\right)m_{2}+\frac{\xi m_{2}}{\xi m_{1}m_{2}+1}\right)^{-1},
\]	
\[
m_{2}=\frac{1}{\gamma_{0}}\left(u+\left(\eta_{0}-\xi\right)m_{1}+\frac{\xi m_{1}}{\xi m_{1}m_{2}+1}\right)^{-1}
.\]\label{Thm:B1B2VanTrees}
\label{Thm: B1_B2}
\end{theorem}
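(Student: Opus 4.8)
The plan is to combine the Bayesian (Van Trees) form of the Cram\'er--Rao bound reviewed in Section~\ref{Sec:CRB} with a random--matrix analysis of the Fisher information, and to produce $B^{(1)}$ and $B^{(2)}$ by two genuinely different arguments. Each is a valid lower bound on the \emph{same} generalization error, so their maximum is again a lower bound, and the content of \eqref{eq:Layer_wise_L=2} is that neither dominates the other throughout parameter space.

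First I would reduce the error to a resolvent functional of an \emph{empirical} Fisher matrix. Conditioning on the training inputs $X$ and on the test point $\tilde{\boldsymbol x}$, and applying the Bayesian CR bound to the scalar target $f_{\boldsymbol\theta}(\tilde{\boldsymbol x})$ with the Gaussian prior of Assumption~\ref{subsec:assumptionsiFF1} supplying the prior precision matrix $H=\mathrm{diag}(\alpha_1 I_{N_1 d},\alpha_2 I_{N_1})$, one is led to
\[
\mathbb{E}\left(\tilde y-\hat y\right)^2\;\geq\;\mathbb{E}_{X,\boldsymbol\theta}\,\mathrm{tr}\!\left(A(\boldsymbol\theta)\Big(\tfrac{1}{\sigma_\varepsilon^2}\textstyle\sum_{i=1}^M \nabla_{\boldsymbol\theta} f_{\boldsymbol\theta}(\boldsymbol x^{(i)})\nabla_{\boldsymbol\theta} f_{\boldsymbol\theta}(\boldsymbol x^{(i)})^T+H\Big)^{-1}\right)+o(1),
\]
where $A(\boldsymbol\theta)=\mathbb{E}_{\tilde{\boldsymbol x}}[\nabla_{\boldsymbol\theta} f_{\boldsymbol\theta}(\tilde{\boldsymbol x})\nabla_{\boldsymbol\theta} f_{\boldsymbol\theta}(\tilde{\boldsymbol x})^T]$ is the test-point gradient covariance of \eqref{eq:Fisher_model}. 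The crucial structural point is that the \emph{test} covariance appears in the numerator while the \emph{empirical} (data-dependent) Fisher matrix appears in the denominator; this is exactly what makes the linear bound \eqref{eq:BLR} tight (there the posterior is Gaussian with precision $\sigma_\varepsilon^{-2}XX^T/d+\alpha I$), and it is what produces Marchenko--Pastur integrals in the limit rather than a point mass.

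To obtain $B^{(2)}$ I would evaluate the trace above via random matrix theory, exploiting the block structure of $\nabla_{\boldsymbol\theta} f$: the second-layer block has entries $\partial f/\partial W^{(2)}_{j}=\sigma(q^1_j)/\sqrt{N_1}$ and the first-layer block has entries $\partial f/\partial W^{(1)}_{jk}=W^{(2)}_{j}\sigma'(q^1_j)x_k/\sqrt{N_1 d}$. Writing the Hermite/Mehler decomposition $\sigma(\boldsymbol q^1)\approx c_0\mathbf{1}+\sqrt{\theta_{11}}\,\zeta+\sqrt{\eta_0-\theta_{11}}\,\xi$, with $c_0=\mathbb{E}[\sigma(vz)]$, $\zeta$ the linear ($W^{(1)}$-dependent) mode and $\xi$ an independent Gaussian part, and justifying this replacement spectrally as in the conjugate-kernel literature, the empirical Fisher matrix becomes a structured Gaussian matrix with a $2\times2$ variance profile coupling the two layer-blocks at aspect ratios $\beta_1=N_1/d$ and $\gamma_0=d/M$. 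The normalized traces of the two diagonal blocks of its resolvent then converge to the solution $(m_1,m_2)$ of the stated self-consistent system; evaluating at $u=\sigma_\varepsilon\sqrt{\alpha_2\beta_1}$ (the shift fixed by the ridge term $\alpha_2$) and extracting the trace against $A(\boldsymbol\theta)$ gives $B^{(2)}$, in which the term $\theta_{11}/(1-\theta_{11}a_1a_2)$ is the contribution of the linear Hermite mode (through the low-rank factor $\theta_{11}m_1m_2-1$) and $\eta_0-\theta_{11}$ that of the higher modes.

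For the second, closed-form bound I would instead lower-bound the error by that of estimating the linear-in-$\tilde{\boldsymbol x}$ component of the target. Decomposing $f_{\boldsymbol\theta}(\tilde{\boldsymbol x})=\ell_{\boldsymbol\theta}(\tilde{\boldsymbol x})+n_{\boldsymbol\theta}(\tilde{\boldsymbol x})$ into its $L^2(\tilde{\boldsymbol x})$-orthogonal linear and nonlinear parts and projecting any estimator onto linear-in-$\tilde{\boldsymbol x}$ functions, a Pythagorean identity gives $\mathbb{E}(\tilde y-\hat y)^2\geq\mathbb{E}(\hat\ell-\ell_{\boldsymbol\theta})^2$ for the projected estimator $\hat\ell$; estimating the linear functional $\ell_{\boldsymbol\theta}$ is an effective $d$-dimensional ridge regression with signal strength $\theta_{11}$ and prior precision $\alpha_2$ (inherited from $W^{(2)}$, since $\boldsymbol w_\star\propto (W^{(1)})^T W^{(2)}\,\mathbb{E}[\sigma'(vz)]$), to which the computation of Theorem~\ref{Thm:absulutLR} applies and the law $\rho_{\gamma_0^{-1}}$ yields $B^{(1)}$. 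Because $\eta_0-\theta_{11}$ is discarded here, $B^{(1)}$ omits the nonlinear modes but can exceed $B^{(2)}$ in parts of parameter space, whence the maximum. The main obstacle is the random-matrix step behind $B^{(2)}$: establishing a quantitative Gaussian-equivalence estimate controlling the spectral distance between the true nonlinear empirical Fisher matrix and its linearized surrogate in the proportional regime — where Assumption~\ref{subsec:assumptionsiFF2} (five bounded derivatives) feeds the Hermite expansion and the concentration of the off-diagonal correlations $r_{jj'}=\tfrac{\alpha}{d}(W^{(1)})_j\cdot(W^{(1)})_{j'}$ — together with existence and uniqueness for the $2\times2$ matrix Dyson equation in the upper half-plane, the interchange of the $\boldsymbol\theta$-average with the large-$d$ limit, and the passage from the conditional Fisher information to its deterministic equivalent (including integrability up to the spectral edge so that the traces converge to the stated integrals).
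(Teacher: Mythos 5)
Your overall strategy (Bayesian Cram\'er--Rao plus random matrix asymptotics of the Fisher information) is the right one, but both halves of your proposal deviate from what actually works, and each has a genuine gap. In the paper, \emph{both} $B^{(1)}$ and $B^{(2)}$ come from the same device, the layer-wise conditional Van Trees bound of Corollary \ref{Cor: Layer-wise bound}: one layer is taken as the estimated block $\boldsymbol{\theta}_c$ and the other is conditioned on, so the numerator is the rank-deficient matrix $J J^T$ with $J=\mathbb{E}_{\boldsymbol{\theta}_c|\boldsymbol{\theta}_l}\nabla_{\boldsymbol{\theta}_c}f$ and the denominator is the conditional Fisher matrix of that single block plus its prior precision. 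Your version of $B^{(2)}$ instead applies the bound to the full parameter vector with the full $P\times P$ empirical Fisher matrix. That is a legitimate bound in principle, but it forces you to control the resolvent of a $(N_1d+N_1)\times(N_1d+N_1)$ matrix including the cross-layer blocks, and your identification of the stated $(m_1,m_2)$ system as the ``$2\times2$ variance profile coupling the two layer-blocks'' is not correct: the two layer blocks have sizes $N_1d$ and $N_1$, whose ratio diverges, so they cannot produce a fixed-point system at aspect ratios $\beta_1$ and $\gamma_0$. In the paper the two components of the Dyson system index \emph{neurons versus samples}: $B^{(2)}$ comes from conditioning on $W^{(1)}$, so the Fisher matrix is the empirical conjugate kernel $\frac1N X^{(1)}(X^{(1)})^T+\alpha_2\sigma_\varepsilon^2 I_N$ with $X^{(1)}=\sigma(W^{(1)}X/\sqrt d)\in\mathbb{R}^{N\times M}$, the numerator is the population kernel $\Sigma=\mathbb{E}_{\boldsymbol{x}}[\sigma(\boldsymbol{q})\sigma(\boldsymbol{q})^T]$ (linearized via Lemma \ref{lem:population kernel}), and $(m_1,m_2)$ are the partial Stieltjes transforms of the $(N+M)$-dimensional linearization pencil in the sense of Mei--Montanari. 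Your Hermite-decomposition intuition is compatible with this, but as stated your route does not lead to the claimed formulas.

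Your derivation of $B^{(1)}$ has a more serious gap. The projection step $\mathbb{E}(\tilde y-\hat y)^2\ge\mathbb{E}(\hat\ell-\ell_{\boldsymbol{\theta}})^2$ is fine, but the reduction of the remaining problem to ``an effective $d$-dimensional ridge regression to which the computation of Theorem \ref{Thm:absulutLR} applies'' is not justified: the training labels are $y^{(i)}=\ell_{\boldsymbol{\theta}}(\boldsymbol{x}^{(i)})+n_{\boldsymbol{\theta}}(\boldsymbol{x}^{(i)})+\varepsilon^{(i)}$, where the nonlinear residual $n_{\boldsymbol{\theta}}$ is correlated with both the effective coefficient $\boldsymbol{w}_\star\propto (W^{(1)})^T(\boldsymbol{w}^{(2)}\circ\mathbb{E}[\sigma'])$ and the inputs, so the Fisher information for $\boldsymbol{w}_\star$ is not $XX^T/(\sigma_\varepsilon^2 d)+\alpha I_d$, and the induced prior on $\boldsymbol{w}_\star$ is only asymptotically Gaussian. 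A lower bound cannot simply pretend the extra signal in the labels is absent. The paper sidesteps all of this by taking $\boldsymbol{\theta}_c=W^{(1)}$ and $\boldsymbol{\theta}_l=\boldsymbol{w}^{(2)}$ in Corollary \ref{Cor: Layer-wise bound}, computing the exact conditional Fisher matrix \eqref{eq:I^{(1)}}, and proving operator-norm and nuclear-norm replacement lemmas (Lemmas \ref{lem:Mone} and \ref{lem:FisheraveragedW}) that reduce it to $\alpha\theta_{11}J_{X,w}J_{X,w}^T+\alpha\sigma_\varepsilon^2 I$; a Woodbury identity then collapses the $N_1d$-dimensional trace to a Marchenko--Pastur integral, yielding \eqref{eq:B1Tight}. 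If you want to keep your projection heuristic as intuition, you still need the conditional-Fisher computation to make it a theorem.
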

\noindent
We note that the fixed point equations  for $m_1,m_2$
have a unique solution, see 
in \cite[Lemma 10.2]{mei2019generalization} in the domain relevant for the lower bound.

The proof of Theorem \ref{Thm: B1_B2} is presented in
Section \ref{proof of the any estimator}. We provide here
several amplifying remarks.
\begin{remark} \label{rem:scaling} It is instructive to consider
  several limiting cases in Theorem \ref{Thm: B1_B2}. In the high SNR limit,
  that is  $\sigma_\varepsilon\to 0$,
  the parameter $\gamma_0$ plays a crucial role: in the
  underparametrized regime  $\gamma_0<1$, as expected we get that $B^{(1)},B^{(2)}\to_{\sigma_{\varepsilon}\to 0} 0$. On the other hand, in the overparametrized regime $\gamma_0>1$, we find that
  %s expected, when $\sigma(\varepsilon)\to 0$
  %then the
  %Our bounds are calculated under the assumption of existence of additive noise in the data. Interestingly, taking the variance of the noise to zero in our bounds (Eq. (\ref{eq:B1Tight}) and Eq. (\ref{eq:B2Tight})), we find that the bound
  $B^{(1)}\to_{\sigma_{\varepsilon} \rightarrow 0}  \xi (1-\gamma_0^{-1})/\alpha_2>0$, while still $B^{(2)}\to_{\sigma_{\varepsilon} \rightarrow 0} 0$.
  In the low SNR
  limit $\sigma_{\varepsilon}\to \infty$, both $B^{(1)}$ and $B^{(2)}$ converge to non-zero limits (equal to $\xi/\alpha_2$ and $\eta_0/\alpha_2$, respectively). By remark \ref{rem:eta_xi}, it follows that in that limit, $B^{(2)}$ is a better bound.
\end{remark}
\begin{remark} \label{rem:eta_xi}
	The Cauchy-Schwarz inequality together with a Gaussian integration by parts show that as soon as $\sigma$ is nonlinear, one has that
$\xi=v^2(\mathbb{E}[\sigma'(vz)])^2=(\mathbb{E}[z\sigma(vz)])^2\leq \mathbb{E}[\sigma(vz)^2]=\eta_0$, and since  $m_1(u), m_2(u)$ are positive,
 we have that $B^{(2)}> 0$.
\end{remark}
\begin{remark}
	The bound $B^{(1)}$ does not depend on the size of the hidden layer, as long as it is of the order of the first hidden layer, see Figure \ref{fig:Bounds_N}. This shows that in this regime one cannot recover the matrix $W^{(1)}$
	with asymptotically vanishing error.
\end{remark}

\begin{remark}
	By construction, the bound $B^{(2)}$ is also a bound on a
	general estimator of the random features model in which only the
	weights of the last layer are being learned.
\end{remark}
\begin{remark}
This theorem can be generalized to data generated by a multi-layer neural network, which represents a larger class of functions under the Gaussian assumption on the weights. This can be done by using Corollary \ref{Cor: Layer-wise bound} which presents the general expression for the bound in the multi-layer case.
\end{remark}
\begin{remark}
To put our result in perspective, we note
that there are several generalization upper bounds
for neural networks with ReLU activation function.
The VC dimension for such neural networks provides
essentially parameter counts, i.e. $O(LN/\sqrt{M})$ where $L$ is the number of layers, $N$ is the number of hidden units in each layers, and $M$ is the number of samples \cite{bartlett2019nearly}.
\iffalse
This provides generalization guarantees
for $M\gg N^{2}$.
\fi
These results are
improved in  \cite{bartlett2017spectrally}, where the factor $N$
is replaced by several norms of the layer weights. There are
further results,
derived
using the PAC-Bayesian framework
%generalization bounds
for ReLU activation, see
\cite{neyshabur2017pac,dziugaite2017computing}. These
analyze generalization behavior in neural networks,
and analytically derive a margin-based bound in terms of norms of the weights,
or calculate the entire bound empirically given the data.
In our setup, to be useful all these estimates require that $M\gg N^2$.
In contrast, we provide analytical expression for
lower bounds on the generalization error. These bounds are on the MSE loss
for general activation function, and in the regime $M\sim N$.
We note that we allow for general estimators,
while introducing special assumption on the structure of the data,
such as a Gaussianity assumption and
the existence of additive noise in the training samples.
%This can be equivalent to the margin in the above bounds.
%Our bounds show in particular
%that there is a constant error of order one when
%$M,N_{1},d$ are large and linearly proportional to one another.
\end{remark}

\subsection{Special cases\label{subSec: Special cases}}
\subsubsection{Linear activation \label{subSec: Linear regression}}
For linear activation,
%In this case, under the linear width assumption, 
it is a straight forward to show that the optimal estimator in the MMSE sense
%in the regime $M,N_1,d\rightarrow \infty$ such that $\beta_{1}=\lim_{d\to\infty} {N_1}/d\in (0,\infty)$
%and $\gamma_0=\lim_{d\to\infty} {d}/M\in(0,\infty)$
takes the following form:
\begin{align}
	\hat{y}_{\mathrm{opt}}
%	\mathbb{E}\left[\tilde{y}|Z,\tilde{\boldsymbol{x}}\right]
%	=\frac{1}{N_{1}d\alpha}\mathbb{E}\left[\left\Vert \boldsymbol{w}^{(2)}\right\Vert ^{2}\left(X\left(\frac{1}{N_{1}d\alpha}\left\Vert \boldsymbol{w}^{(2)}\right\Vert ^{2}X^{T}X+I_{M}\sigma_{\varepsilon}^{2}\right)^{-1}Y\right)^{T}|Z\right]\tilde{\boldsymbol{x}}
	=\frac{1}{\alpha\alpha_{2}d}Y^{T}\left(\frac{1}{\alpha\alpha_{2}d}X^{T}X+I_{M}\sigma_{\varepsilon}^{2}\right)^{-1}X^{T}\tilde{\boldsymbol{x}}
\end{align}
The minimum mean square error is then:
\begin{align}
\mathrm{MMSE} =\mathbb{E}(\hat{y}_{\mathrm{opt}}-\tilde{y})^{2}
=\sigma_{\varepsilon}^{2}v^{2}\left(\int\frac{v^{2}s(1-1/\gamma_{0})/\left(\sigma_{\varepsilon}^{2}\alpha_{2}\right)+1}{v^{2}s+\sigma_{\varepsilon}^{2}\alpha_{2}}d\rho_{\gamma_{0}^{-1}}\right) = B^{(1)}.
\end{align}
Therefore, the bound is tight in this case.
\subsubsection{Highly over-parameterized regime}
We explore two extreme cases:
\paragraph{Finite samples size}
In this regime $\beta_{1}=\lim_{d\to\infty} {N_1}/d\in (0,\infty)$ and $M$ finite i.e. $\gamma_0\rightarrow \infty$. Applying Theorem \ref{Thm: B1_B2} with $\gamma_0\rightarrow \infty$	we obtain $B^{(1)}={\xi}/{\alpha_2}$, and $B^{(2)}={\eta_{0}}/{\alpha_2}$. By Remark \ref{rem:eta_xi}, $\eta_{0}>\xi$, therefore, the bound takes the following form:
%\begin{corollary}
%	Consider the model
%	in \eqref{eq:model}, (\ref{eq: deep model}) with $L=2$, and let Assumptions \ref{subsec:assumptions},  \ref{subsec:assumptionsiFF1} and \ref{subsec:assumptionsiFF2}
%	hold. Let $\hat{y}(\tilde{\boldsymbol{x}},Z)$ be an arbitrary  measurable and square integrable  estimator. Then,
%	in the regime $N_1,d\rightarrow \infty$ such that $\beta_{1}=\lim_{d\to\infty} {N_1}/d\in (0,\infty)$ and is $M$ is finite, it holds that
\begin{equation}
\mathbb{E}\left(\tilde{y}-\hat{y}\right)^{2}\geq  \frac{\eta_{0}}{\alpha_2}
\end{equation}
%\end{corollary}
Interestingly, this does not depend on the variance
of the noise. This is also a case for which $B^{(2)}>B^{(1)}$ for non linear activation function.
\paragraph{Infinite samples size and input size}
In this regime, the number of neurons and input size, $N_1,d$, diverges together but $N_1$ is larger than any constant times $d$, such that $\gamma_0=\lim_{d\to\infty} {d}/M\in(0,\infty)$, and $\beta_1 \rightarrow \infty$. Applying Theorem \ref{Thm: B1_B2}, we obtain that  $B^{(2)}=\eta_{0}/{{\alpha_2}}+o(1)$. Since, the bound $B^{(1)}$ does not depend on the size of the number of hidden units $N_1$ as long as it is large, we obtain the following results:
%\begin{corollary}
%	Consider the model
%	in \eqref{eq:model}, (\ref{eq: deep model}) with $L=2$, and let Assumptions \ref{subsec:assumptions},  \ref{subsec:assumptionsiFF1} and \ref{subsec:assumptionsiFF2}
%	hold. Let $\hat{y}(\tilde{\boldsymbol{x}},Z)$ be an arbitrary  measurable and square integrable  estimator. Then,
%	in the regime $M,N_1,d\rightarrow \infty$ such that $\gamma_0=\lim_{d\to\infty} {d}/M\in(0,\infty)$, it holds that
\begin{equation}
\mathbb{E}\left(\tilde{y}-\hat{y}\right)^{2}\geq  	\max\left(\frac{\eta_{0}}{{\alpha_2}},\sigma_{\varepsilon}^{2}\xi\int\frac{1+\xi(1-\gamma_{0}^{-1})s/(\sigma_{\varepsilon}^{2}\alpha_{2})}{\xi s+\alpha_{2}\sigma_{\varepsilon}^{2}}d\rho_{\gamma_{0}^{-1}}(s)\right)+o(1),
\end{equation}
%\end{corollary}

%\begin{corollary}
%	Consider the model
%	in \eqref{eq:model}, (\ref{eq: deep model}) with $L=2$ with $\sigma(x)=x$, and let Assumptions \ref{subsec:assumptions},  \ref{subsec:assumptionsiFF1} and \ref{subsec:assumptionsiFF2}
%	hold. Let $\hat{y}(\tilde{\boldsymbol{x}},Z)$ be an arbitrary  measurable and square integrable  estimator. Then,
%	in the regime $M,N_1,d\rightarrow \infty$ such that $\beta_{1}=\lim_{d\to\infty} {N_1}/d\in (0,\infty)$
%	and $\gamma_0=\lim_{d\to\infty} {d}/M\in(0,\infty)$, it holds that
%	\begin{equation}
%	\mathbb{E}\left(\tilde{y}-\hat{y}\right)^{2}\geq  \max\left(B^{(1)},B^{(2)}\right)\label{eq:Layer_wise_L=2},
%	\end{equation}
%	where
%	\begin{equation}
%	B^{(1)}=
%	\sigma_{\varepsilon}^{2}v^2\int\frac{1+v^2(1-\gamma_{0}^{-1})s/\sigma_{\varepsilon}^{2}\alpha_{2}}{v^2s+\alpha_{2}\sigma_{\varepsilon}^{2}}d\rho_{\gamma_{0}^{-1}}(s)+o(1),
%	\label{eq:B1Tight}
%	\end{equation}
%	and with $a_1=m_1(\sigma_{\varepsilon}\sqrt{{\alpha_2}{\beta_{1}}})$, and $a_2=m_2(\sigma_{\varepsilon}\sqrt{{\alpha_2}{\beta_{1}}})$,
%	\begin{equation}
%	B^{(2)}=\frac{\sigma_{\varepsilon}}{\sqrt{\alpha_2\beta_{1}}}\frac{a_{1}v^2}{1-v^2a_{1}a_{2}}+o(1)\label{eq:B2Tight}.
%	\end{equation}	
%	Here, the functions $m_1=m_{1}(u),m_2=m_{2}(u):\mathbb{C}_{+}\rightarrow\mathbb{C}_{+}$,
%	satisfy for $\Re(u)>0$:	
%	\[
%	m_{1}=-{\beta_{1}}\left(u+\frac{v^2 m_{2}}{v^2m_{1}m_{2}-1}\right)^{-1},
%	\]	
%	\[
%	m_{2}=\frac{1}{\gamma_{0}}\left(u-\frac{v^2m_{1}}{v^2m_{1}m_{2}-1}\right)^{-1}
%	.\]\label{Thm:B1B2VanTrees}
%	\label{Thm: B1_B2}
%\end{corollary}

Note that in general whether or not  $B^{(1)}>B^{(2)}$ in Theorem \ref{Thm: B1_B2} depends on the activation function and the model parameters. Interestingly, in the regime of high SNR and overparametrization (see Remark \ref{rem:scaling}),
$B^{(1)}$ tends to be better, this is also observed in Figure. \ref{fig:Comparison-of-the bounds_SNR}.
In Figures \ref{fig:Comparison-of-the bounds_SNR},  \ref{fig:Comparison-of-the bounds_gamma}, and \ref{fig:Bounds_N} below we compare the bounds for different Signal to Noise Ratio (SNR),  with $SNR:=10 \log_{10} (\sigma_x^2/(\alpha \alpha_2 \sigma_{\varepsilon}^2))$, different $\gamma_0$,
and different $\beta_1$. The definition of the SNR is with respect to the total variance of a linear model,  $f_{\boldsymbol{\theta}}(\boldsymbol{x})=\frac{1}{\sqrt{dN_{1}}}(\boldsymbol{w}^{(2)})^{T}W^{(1)}\boldsymbol{x}$, divided by the noise.
 The comparison is presented for four activation functions,
 Linear: $\sigma(x)=x$, Sigmoid: $\sigma(x)=1/(1+e^{-x})$, Tanh: $\sigma(x)=\mathrm{tanh}(x)$, and Relu: $\sigma(x)=\max(x,0)$.
That there is no clear ``winner'' is consistent with the analysis in \cite{bobrovsky1987some}, in which the authors consider similar types of bounds 
not in the context of neural networks. They compare the bounds in cases where some of the parameters are being conditioned and show via toy examples that predicting which bound is better depends on the specifics of the problem.
\begin{figure}[h!]
	\begin{centering}
		\includegraphics[trim={1cm 0cm 1cm 0cm},scale=0.45]{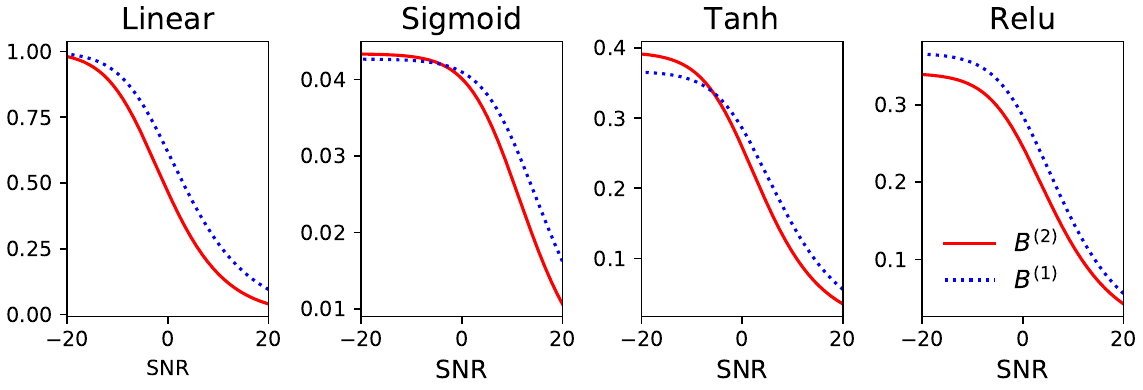}
		\par\end{centering}
	\caption{\label{fig:Comparison-of-the bounds_SNR}Comparison of the
		bounds in Theorem \ref{Thm:B1B2VanTrees} for 4 activation functions: Relu, Sigmoid, Tanh, and Linear, as function of SNR.
	Here,  $N_1=M=50$, $d=50$, $\alpha=1$,$\alpha_{2}=2$, and
		$\sigma_{x}^{2}=1$. The dashed blue line is (\ref{eq:B1Tight}). The solid  red line is (\ref{eq:B2Tight}).  }	
\end{figure}

\begin{figure}[h!]
	\begin{centering}
		\includegraphics[trim={1cm 0cm 1cm 0cm},scale=0.45]{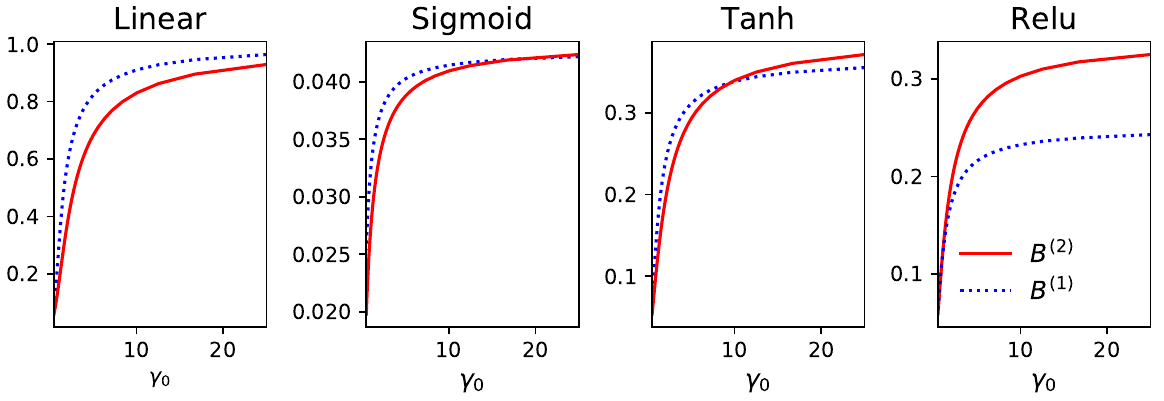}
		\par\end{centering}
	\caption{\label{fig:Comparison-of-the bounds_gamma}Comparison of the	bounds in \ref{Thm:B1B2VanTrees}, as function of $\gamma_0$.
Here, $N_1=100$, $d=100$, $\alpha=\alpha_{2}=1$, $\sigma_{\varepsilon}^{2}=0.1$ and
		$\sigma_{x}^{2}=1$. The dashed blue line is (\ref{eq:B1Tight}). The solid  red line is (\ref{eq:B2Tight}).  }	
\end{figure}

\begin{figure}[h!]
	\begin{centering}
		\includegraphics[trim={1cm 0cm 1cm 0cm},scale=0.45]{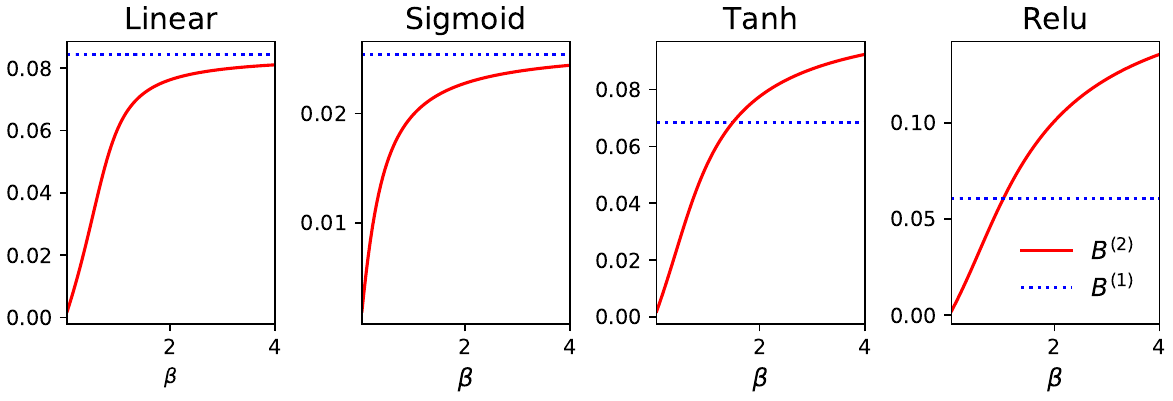}
		\par\end{centering}
	\caption{\label{fig:Bounds_N} Comparison of the bounds in Theorem \ref{Thm:B1B2VanTrees} as a function of the  $\beta = \frac{N_1}{d}$.
	Here $N_1=d=50$, $\alpha=\alpha_{2}=1$, and
		$\sigma_{x}^{2}=1$. The dashed blue line is(\ref{eq:B1Tight}). The solid red line is (\ref{eq:B2Tight}).}	
\end{figure}
\subsection{Unbiased estimators}
So far we discussed
%In the analysis above we use
the Bayesian version of the CR bound. In this subsection we provide as a
comparison the classical version of the CR bound,
which does not require knowledge about the prior distribution of the
parameters,  but makes however an unbiasedness assumption on the estimator.
Recall that
\iffalse
It is convenient to decompose the generalization error as
\begin{equation}
\mathbb{E}_{\tilde{\boldsymbol{x}},Z,\boldsymbol{\theta}}\left(\tilde{y}-\hat{y}\right)^{2}=\underset{\mathrm{Variance}}{\underbrace{\mathbb{E}_{\tilde{\boldsymbol{x}},\boldsymbol{\theta}}\mathrm{Var}_{Z|\boldsymbol{\theta},\tilde{\boldsymbol{x}}} \left[\hat{y}\right]}}+\underset{\mathrm{Bias}}{\underbrace{\mathbb{E}_{\tilde{\boldsymbol{x}},\boldsymbol{\theta}}\left[b_{\boldsymbol{\theta}}(\tilde{\boldsymbol{x}})^{2}\right]}},
\label{eq: bias variance decompossion}
\end{equation}
where 
\fi 
the bias of an estimator 
$\hat y(\tilde{\boldsymbol{x}},Z)$
is $b_{\boldsymbol{\theta}}(\tilde{\boldsymbol{x}})=f_{\boldsymbol{\theta}}(\tilde{\boldsymbol{x}})-\mathbb{E}_{Z|\boldsymbol{\theta},\tilde{\boldsymbol{x}}}\hat{y}(\tilde{\boldsymbol{x}},Z)$. \textit{Unbiased estimators} are those which satisfy $b_{\boldsymbol{\theta}}(\tilde{\boldsymbol{x}})=0$, almost surely.
Note that the minimum mean square error (MMSE) estimator
$\hat{y}_{\mathrm{opt}}=\mathbb{E}\left[\tilde{y}|\tilde{\boldsymbol{x}},Z\right]$ is typically not unbiased.

%But in order to evaluate the bound one needs to assume the estimator is unbiased unless there exists some prior knowledge on the bias of the estimator which is rarely accessible.
As experience shows, in high dimension biased estimators can significantly
outperform biased ones. Theorem \ref{Thm:CRLB_unbiased},
whose proof  is provided in Section \ref{Sec:CRLB_unbiased_proof},
together with Theorem \ref{Thm:deep_nonlinear_unbiased} in the appendix,
show that indeed unbiased estimators have bad performance in the truly nonlinear setup; see Remark \ref{rem:B.4}.

The CR bound for unbiased estimators involves
%In Appendix \ref{Sec:CRB} we provide the two versions of the CR bound. The assumption of unbiasedness is not a reasonable requirement in high dimension the theorem below emphasis it.
%In order to calculate the lower bound for the generalization error of unbiased estimators, we first present
the Fisher information $I(\boldsymbol{\bm{\theta}})\in \mathbb{R}^{P\times P}$, which in our case
%if we plug in the distribution $p(Z|\bm{\theta})$,
takes the  form
\begin{equation}
I(\boldsymbol{\theta}) = \frac{M}{\sigma_{\varepsilon}^2}A(\boldsymbol{\theta}),\quad
A(\boldsymbol{\theta})=\mathbb{E}_{\boldsymbol{x}}\left[\nabla_{\boldsymbol{\theta}} f_{\boldsymbol{\theta}}(\boldsymbol{x})\nabla_{\boldsymbol{\theta}} f_{\boldsymbol{\theta}}(\boldsymbol{x})^T\right].	\label{eq:Fisher_model}
\end{equation}
\begin{theorem}
	\label{Thm:CRLB_unbiased} Consider the model in \eqref{eq:model}.
	Assume that $\nabla_{\boldsymbol{\theta}} f_{\boldsymbol{\theta}}(\boldsymbol{x})$ exists and is square integrable under
	$p(\boldsymbol{x})$. Let $\hat{y}(\tilde{\boldsymbol{x}},Z)$ be any square integrable unbiased estimator. Then,
	\begin{equation}
	\mathbb{E}_{Z,\tilde{\bm{x}}|\bm{\theta}}\left[\left(\tilde{y}-\hat{y}\right)^{2}\right]\geq \sigma_{\varepsilon}^{2}\frac{\mathrm{Rank}\left(I(\boldsymbol{\theta})\right)}{M}.
	\label{eq:CRLB_unbiased}
	\end{equation}
\end{theorem}
The proof of Theorem \ref{Thm:CRLB_unbiased} is provided in Section \ref{Sec:CRLB_unbiased_proof}. In Appendix \ref{Sec:rankI} we evaluate the rank of the Fisher information in a few examples. We show that the reason for high rank (and then bad performance) is a combination of high dimension phenomena with
the non-linearity of the model.
\section{Performance evaluation\label{sec:numerics}}
In this section, we compare the bounds of  Theorem  \ref{Thm: B1_B2} to the performance of estimators used in practice.
%\subsection{Linear regression - the ridge regression estimator}\label{sec:LSE}
%We consideer the setup of Theorem \ref{Thm:absulutLR}. Consider the following estimator
%$\hat{y}(\tilde{\boldsymbol{x}},Z)_\mathrm{LS}=\hat{\boldsymbol{\theta}}^{T}\tilde{\boldsymbol{x}}/\sqrt{d}$,
%where $\hat{\boldsymbol{\theta}}=(XX^{T}/d+ \lambda I_{d})^{-1}X^{T}\boldsymbol{y}^T/\sqrt{d}$,
%and $\lambda$ is a regularization parameter. The generalization error for $ \lambda>0$ in this setting reads (see for example
%\cite[Theorem 5]{hastie2019surprises} or \cite[Proposition 1]{dicker2016ridge}),
%\begin{equation}
%\mathbb{E}\left(\tilde{y}-\hat{y}_\mathrm{LS}\right)^{2}={\sigma^2_x}{\gamma_{0}}\int\frac{ \lambda^{2}/(\gamma_{0}\alpha)+\sigma_{\varepsilon}^{2}s}
%{\left(s+ \lambda\right)^{2}}d\rho_{\gamma_0}(s)+o(1)
%\end{equation}
%By
%\cite[Theorem 5]{hastie2019surprises},
%the minimum error is achieved at $\lambda^{\mathrm{opt}}=\sigma_{\varepsilon}^{2}\alpha\gamma_{0}$.
%Comparing this test error at $ \lambda^{\mathrm{opt}}$ to the bound in Theorem \ref{Thm:absulutLR} shows that this estimator asymptotically  achieves the bound.
%\subsection{Stochastic gradient descent}
Since an analytic expression for the optimal estimator is not known, 
%is impossible in the case of nonlinear activation function to our knowledge, 
we instead compare numerically the bounds to the performance of an estimator based on the stochastic gradient descent algorithm\cite{robbins1951stochastic}, which is one of the most common algorithm used to train neural network models.
We do so in the framework of the
``teacher-student model'' \cite{seung1992statistical,engel2001statistical}, which we describe next.
\subsection{Data generation (teacher network)}
Following the ``teacher-student model'', we generate numerically $N_{\theta}$ realization of the teacher true weights $\boldsymbol{\theta}=(\boldsymbol{w}^{(2)};W^{(1)})$ of a two layers neural network. The weights vector $\bm{\theta}$ is drawn from a Gaussian distribution with zero mean and variance $\alpha=1$ for each element in the matrix $W^{(1)}$, and variance $\alpha_2=1$ for each element of the vector $\bm{w}^{(2)}$. For each true realization of $\bm{\theta}$, we generate $N_d$ data-sets of size $M$, and a test set of size
$N_s$. Each training sample $\boldsymbol{x}^{(i)}\sim\mathcal{N}(0,\sigma^2_xI_d)$, and $y^{(i)}=(\boldsymbol{w}^{(2)})^T\sigma(W^{(1)}\boldsymbol{x}^{(i)}/\sqrt{d})/\sqrt{N_1}+\varepsilon^{(i)}$, where $\varepsilon^{(i)}\sim \mathcal{N}(0,\sigma^2_{\varepsilon})$.
\subsection{Estimator (student network)}
Given the data generated by the teacher network, we use a two layers neural network with twice as large number of neurons in the hidden layer then the teacher network. The estimator is obtained by
running a stochastic gradient descent (SGD) algorithm \cite{robbins1951stochastic} for $N_{\mathrm{epochs}}=100$ epochs and batch size $N_b=5$ with learning rate $\eta=0.5/M$ for Sigmoid and $\eta=0.01/M$ for Tanh, linear and Relu. The generalization error is obtained by averaging over all these quantities.

Figure \ref{fig:SGD_vs_bound_sn_small} presents the results for two variances
$\sigma^2_\varepsilon$ of the noise. In Figure \ref{fig:SGD_vs_bound_Uniform_weights}, the bounds are compared to the performance of SGD with data generated by a teacher network with the same size as before but with uniform prior on the weights in both layers. The variance of the weights matches the variance in the Gaussian case. In this case, even though the weights are not drawn form a Gaussian prior, the numerical results suggest that the bounds do apply. In both figures we also compare the performance of SGD to the performance of a trivial mean predictor, $\hat{y}_\mathrm{mean}=\sum^M_{i=1}y_i/M$.

\begin{figure}[h!]
	\begin{centering}
		\includegraphics[trim={1cm 0cm 1cm 0cm},scale=0.4]{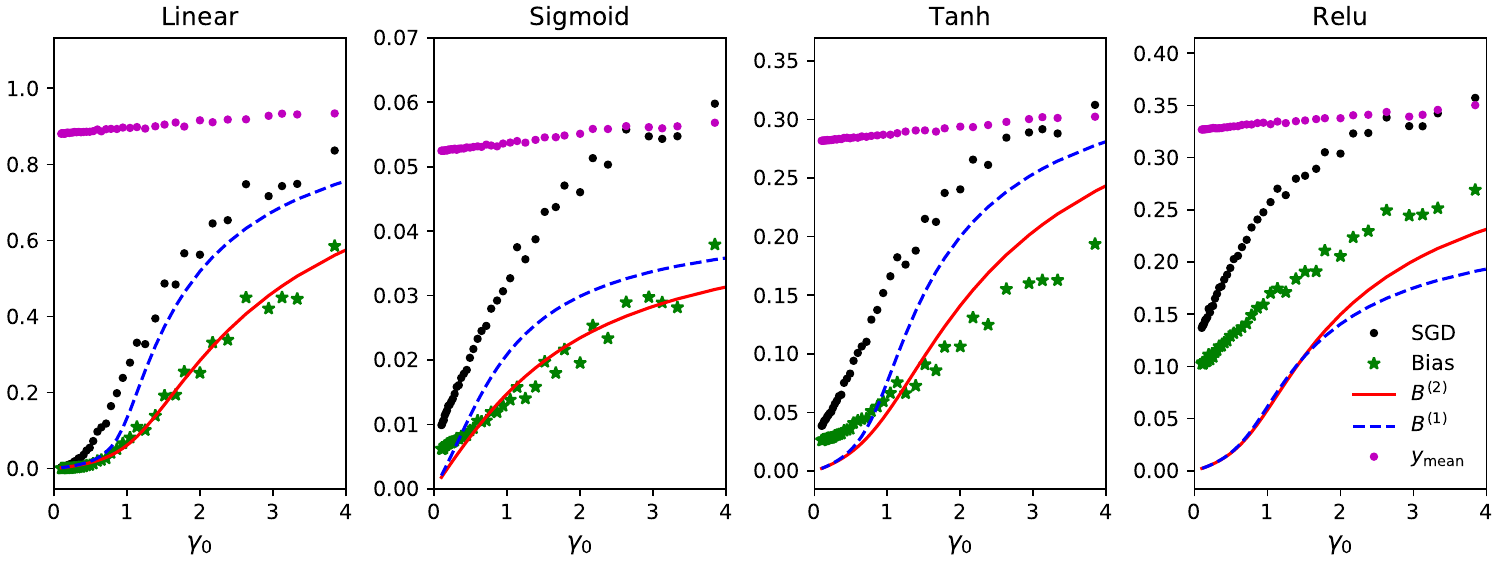}
		\par\end{centering}
\begin{centering}
		\includegraphics[trim={1cm 0cm 1cm 0cm},scale=0.4]{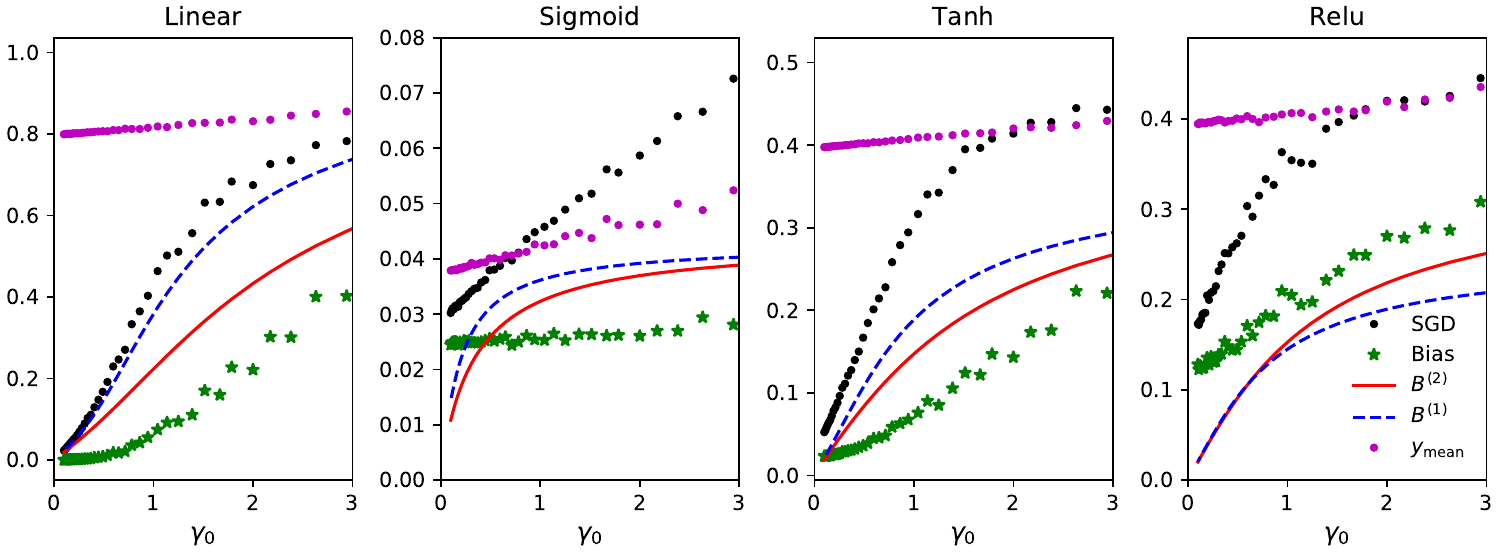}
		\par\end{centering}
	\caption{\label{fig:SGD_vs_bound_sn_small}The generalization performance of
		SGD and the bounds in Theorem \ref{Thm: B1_B2} as  function of $\gamma_{0}={d}/{M}$.
		The generalization error of SGD is in black dots, and its bias in green dots. The performance of trivial mean predictor are in magenta. The dashed blue line is  the bound (\ref{eq:B1Tight}), while the
red solid line is (\ref{eq:B2Tight}). In the top panel, $\sigma_{\varepsilon}^{2}=0.02$,and in the bottom panel,  $\sigma_{\varepsilon}^{2}=0.2$. Other parameters are
		$N_1=d=50$,$\alpha=\alpha_{2}=\sigma_{x}^{2}=1$.}
\end{figure}

\begin{figure}[h!]
	\begin{centering}
		\includegraphics[trim={1cm 0cm 1cm 0cm},scale=0.4]{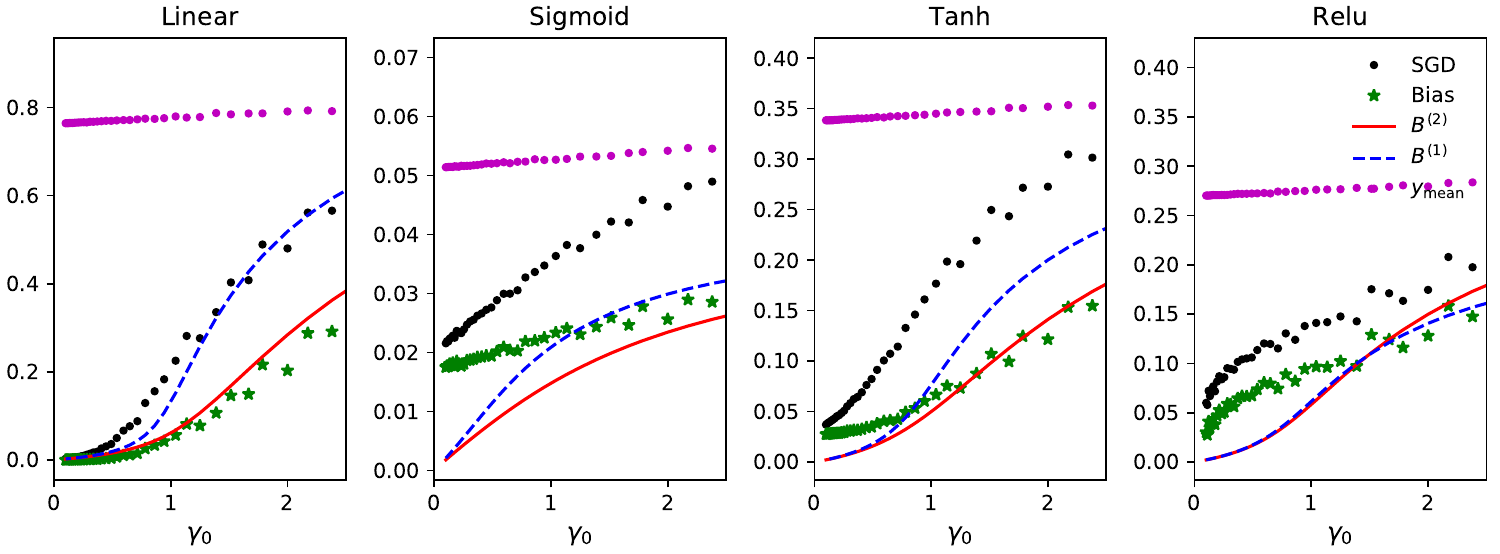}
		\par\end{centering}
	\caption{\label{fig:SGD_vs_bound_Uniform_weights}The generalization performance of
		SGD vs the bounds in Theorem \ref{Thm: B1_B2} as function of $\gamma_{0}={d}/{M}$. The data given by the teacher is generated with a uniform prior on the weights, i.e. $W_{ij}^{(1)},{w}_i^{(2)}\sim U(-b,b)$, where $b=\sqrt{3}/\alpha$.
		The generalization error of SGD is in black dots, and its bias in green dots. The performance of trivial mean predictor are in magenta. The dashed blue line is the bound (\ref{eq:B1Tight}), while the
		red solid line is (\ref{eq:B2Tight}).  The parameters are
		$N_1=d=50$, $\alpha=\alpha_{2}=\sigma_{x}^{2}=1$ and $\sigma_{\varepsilon}^{2}=0.02$. The student network is with $150$ hidden units.}
\end{figure}

\section{Proof of the main Theorems\label{sec: proofs}}
We recall elements of the
Cram\'{e}r-Rao theory in Appendix \ref{Sec:CRB}.

\subsection{Proof of Theorem \ref{Thm: B1_B2}}\label{proof of the any estimator}
\iffalse
In this section, we provide the proof of the bounds for general estimator for data generated by a two layers neural network Theorem \ref{Thm:B1B2VanTrees}. The result in this case also applies to linear activation function.
%\subsubsection{Proof of Corollary \ref{Thm:absulutLR}\label{Sec: Linear absulut bound_LR}}
%We will use Theorem \ref{Thm: conditional bound}. For the  linear regression model, we take $f_{\boldsymbol{\theta}}(\boldsymbol{x})=\boldsymbol{\theta}^{T}\boldsymbol{x}/\sqrt{d}$
%where $\boldsymbol{\theta}\in\mathbb{R}^{d\times1}$.
%Note that $\nabla_{\boldsymbol{\theta}}f_{\boldsymbol{\theta}}(\tilde{\boldsymbol{x}})=\tilde{\boldsymbol{x}}/\sqrt{d}$.
%Taking $\boldsymbol{\theta}_{c}=\boldsymbol{\theta}$, and $\boldsymbol{\theta}_{l}=0,$ and
%substituting in (\ref{eq:vanTreesBoundGeneral}), the bound reads:
%\begin{equation}
%B=\mathbb{E}_{\tilde{\boldsymbol{x}},X}\left[\left(\mathbb{E}_{\boldsymbol{\theta}}\nabla_{\boldsymbol{\theta}}f_{\boldsymbol{\theta}}(\tilde{\boldsymbol{x}})\right)^{T}\left(I_{\alpha}(X)\right)^{-1}\mathbb{E}_{\boldsymbol{\theta}}\nabla_{\boldsymbol{\theta}}f_{\boldsymbol{\theta}}(\tilde{\boldsymbol{x}})\right].\label{eq:B_avarage_varepsilon}
%\end{equation}
%Using  (\ref{eq:conditional Fisher_model1}), the Fisher matrix for
%this model is then
%\[
%I_{\alpha}(X)
%=\frac{1}{\sigma_{\varepsilon}^{2}d}XX^{T}+\alpha I_{d}.
%\]
%Substituting in  (\ref{eq:B_avarage_varepsilon}) yields
%\begin{multline*}
%B=\frac{1}{d}\mathbb{E}_{\tilde{\boldsymbol{x}},X}\left[\tilde{\boldsymbol{x}}^{T}\left(\frac{1}{\sigma_{\varepsilon}^{2}d}XX^{T}+\alpha I_{d}\right)^{-1}\tilde{\boldsymbol{x}}\right]
%\\=\sigma_{\varepsilon}^{2}\sigma_{x}^{2}\mathbb{E}_{X}\frac{1}{d}\mathrm{Tr}\left(\left(\frac{M}{d}\frac{1}{M}XX^{T}+\sigma_{\varepsilon}^{2}\alpha I_{d}\right)^{-1}\right)
%\\=\sigma_{\varepsilon}^{2}\sigma_{x}^{2}\mathbb{E}_{X}\frac{1}{d}\sum_{i=1}^{d}\frac{1}{\lambda_{i}(\frac{M}{d}\frac{1}{M}XX^{T})+\alpha\sigma_{\varepsilon}^{2}}
%\\
%=\sigma_{\varepsilon}^{2}\sigma_{x}^{2}\int\frac{1}{\frac{M}{d}\lambda+\alpha\sigma_{\varepsilon}^{2}}{d\rho_{\frac{1}{M}XX^{T}}(\lambda)}
%\rightarrow \sigma_{\varepsilon}^{2}\sigma_{x}^{2}\int\frac{1}{s/\gamma_{0}+\alpha\sigma_{\varepsilon}^{2}}d\rho_{\gamma_0}(s)
%\end{multline*}
%In the last equality, we used  the empirical distribution $\rho_{\frac{1}{M}XX^{T}}(\lambda)$, which,  by the Marchenko-Pastur
%theorem \cite{marvcenko1967distribution,silverstein1995strong}, converges
% to the Marchenko--Pastur distribution of parameter $\gamma_0$, see  \eqref{eq-PM}.
\subsubsection{Proof of Theorem \ref{Thm:B1B2VanTrees}}
\fi
We will use Corollary \ref{Cor: Layer-wise bound} for depth $L=2$, which states that
\begin{equation}
	E_{g}\geq \max(B^{(1)},B^{(2)})
\end{equation}
Recall that following  (\ref{eq: deep model}) for $L=2$ and  $N_1=N$,  $f_{\boldsymbol{\theta}}(\boldsymbol{x})=(\boldsymbol{w}^{(2)})^T\sigma(\boldsymbol{q})/\sqrt{N}$,
and $\boldsymbol{q}=W^{(1)}\boldsymbol{x}/\sqrt{d}$. The
gradient is then
\[
\nabla_{\boldsymbol{\theta}}f_{\boldsymbol{\theta}}(\boldsymbol{x})=\left[\begin{array}{c}
\nabla_{W^{(1)}}f_{\boldsymbol{\theta}}(\boldsymbol{x})\\
\nabla_{\boldsymbol{w}^{(2)}}f_{\boldsymbol{\theta}}(\boldsymbol{x})
\end{array}\right]=\left[\begin{array}{c}
\frac{1}{\sqrt{Nd}}D^{1}\boldsymbol{w}^{(2)}\otimes\boldsymbol{x}\\
\frac{1}{\sqrt{N}}\sigma(\boldsymbol{q})
\end{array}\right],
\]
where $D^{1}=\mathrm{diag}(\sigma'(\boldsymbol{q}))$.
In the following, we  evaluate separately $B^{(1)}$ and $B^{(2)}$.

\paragraph{Evaluation of $B^{(1)}$:}
Take $\boldsymbol{\theta}_{l}=\boldsymbol{w}^{(2)}$ in Corollary  \ref{Cor: Layer-wise bound}. Then, the bound in
 (\ref{eq:B^{(l)}}) reads
\begin{equation}
	B^{(1)}=\frac{\sigma_{\varepsilon}^{2}}{d}\mathbb{E}_{\boldsymbol{w}^{(2)},X}\left[\mathrm{Tr}\left(\left(I(X,\boldsymbol{w}^{(2)})+\sigma_{\varepsilon}^{2}\alpha I_{Nd}\right)^{-1}Q(\boldsymbol{w}^{(2)})\right)\right],\label{eq:B1_ba}
\end{equation}
where
\begin{multline}
Q(\boldsymbol{w}^{(2)})=d\mathbb{E}_{\tilde{\boldsymbol{x}}}\left[\mathbb{E}_{W^{(1)}}\nabla_{W^{(1)}}f_{\boldsymbol{\theta}}(\tilde{\boldsymbol{x}})\left(\mathbb{E}_{W^{(1)}}\nabla_{W^{(1)}}f_{\boldsymbol{\theta}}(\tilde{\boldsymbol{x}})\right)^{T}\right]\\
=\frac{1}{N}\mathbb{E}_{\tilde{\boldsymbol{x}}}\left[\bar{D}^1\boldsymbol{w}^{(2)}(\boldsymbol{w}^{(2)})^T\bar{D}^1\otimes\tilde{\boldsymbol{x}}\tilde{\boldsymbol{x}}^{T}\right],\label{eq:M^(1)}
\end{multline}
  $\bar{D}^1=\E_{W^{(1)}}\left[D^1\right]$,
and the Fisher matrix which appears in the bound following  (\ref{eq:reduced Fisher layered model})
is
\begin{multline}
I(X,\boldsymbol{w}^{(2)})=\sum_{k=1}^{M}\mathbb{E}_{W^{(1)}|\boldsymbol{w}^{(2)},X}\left[\nabla_{W^{(1)}}f_{\boldsymbol{\theta}}(\boldsymbol{x}^{(k)})\nabla_{W^{(1)}}f_{\boldsymbol{\theta}}(\boldsymbol{x}^{(k)})^{T}\right]\\
=\sum_{k=1}^{M}\mathbb{E}_{W^{(1)}|\boldsymbol{w}^{(2)},X}\left[D^{1(k)}\boldsymbol{w}^{(2)}(\boldsymbol{w}^{(2)})^TD^{1(k)}\otimes\boldsymbol{x}^{(k)}(\boldsymbol{x}^{(k)})^T\right]\label{eq:I^(1)}
\end{multline}
where $\boldsymbol{q}^{(k)}=W^{(1)}\boldsymbol{x}^{(k)}/\sqrt{d}$ and $D^{1(k)}=\mathrm{diag}(\sigma'(\boldsymbol{q}^{(k)}))$. Our work is then to evaluate the
asymptotics of the right side of \eqref{eq:B1_ba}. Toward this end,
we first show that we can replace $Q(\boldsymbol{w}^{(2)})$ in the latter by the simpler matrix
\begin{equation}
	\tilde{Q}(\boldsymbol{w}^{(2)})=\frac{\alpha\xi}{N}\boldsymbol{w}^{(2)}(\boldsymbol{w}^{(2)})^T\otimes I_{d}, \label{eq:Mtilde}
\end{equation}
such that the error due to this replacement goes to zero in the limit of $d\rightarrow \infty$. Indeed,  the replacement error reads
\begin{multline}
	\frac{\sigma_{\varepsilon}^{2}}{d}\mathbb{E}\left[\left|\mathrm{Tr}\left(\left(I(X,\boldsymbol{w}^{(2)})+\sigma_{\varepsilon}^{2}\alpha I_{Nd}\right)^{-1}(Q-\tilde{Q})\right)\right|\right]\leq	\frac{1}{\alpha d}\mathbb{E}\left[\| Q-\tilde{Q}\|_{\mathrm{*}}\right]
	\\ \leq \sqrt{\frac{N}{d}}\frac{1}{\alpha}\mathbb{E}\left[\| Q-\tilde{Q}\|_{\mathrm{HS}}\right]\underset{d\rightarrow\infty}{\rightarrow} 0,
\end{multline}
where $\| \cdot \|_{\mathrm{*}}$ denotes the nuclear norm, the first inequality is due to the positive definiteness of
 $I(X,\boldsymbol{w}^{(2)})$ which implies that the operator norm of the inverse is bounded above by $1/\alpha \sigma_{\varepsilon}^2$, the second follows from
the Cauchy-Schwarz inequality, and the limit follows from  Lemma \ref{lem:Mone}.

Introduce
the constants
\begin{equation}
  \label{eq-morectt}
r_x^{(k)}= {\| \boldsymbol{x}^{(k)}\|_{2}}/{\sqrt{d\alpha}},\quad  \eta_{1}^{(k)}=\mathbb{E}_{z}\left[\sigma'(zr_x^{(k)})^{2}\right],\quad \xi^{(k)}=\mathbb{E}_{z}\left[\sigma'(zr_x^{(k)})\right]^{2},
\end{equation} where
$z\sim\mathcal{N}(0,1)$.
We next show that we can also replace the matrix
$I(X,\boldsymbol{w}^{(2)})$ by the matrix $\tilde{I}(X,\boldsymbol{w}^{(2)})$, where
\begin{equation}
	\tilde{I}(X,\boldsymbol{w}^{(2)})=\frac{\xi}{v^2Nd}\boldsymbol{w}^{(2)}(\boldsymbol{w}^{(2)})^T\otimes XX^{T}+G(X,\boldsymbol{w}^{(2)})
=: \hat I(X,\boldsymbol{w}^{(2)})+G(X,\boldsymbol{w}^{(2)}),\label{eq:Itilde}
\end{equation}
with
\begin{equation}
	G(X,\boldsymbol{w}^{(2)}) = \frac{1}{N}\sum_{k}D^{(2)}\otimes\frac{1}{d}c^{(k)}\boldsymbol{x}^{(k)}(\boldsymbol{x}^{(k)})^T,
\end{equation}
 $D^{(2)}_{ij}=\delta_{ij}\left(w_i^{(2)}\right)^2$  and $c^{(k)}=\eta_{1}^{(k)}-\xi^{(k)}$. Set $I_\alpha = I(X,\boldsymbol{w}^{(2)})+\sigma_\eps^2\alpha I_{Nd}$
and $\tilde{I}_\alpha = \tilde{I}(X,\boldsymbol{w}^{(2)})+\sigma_\eps^2\alpha I_{Nd}$. The error due to the replacement of $I$ by $\tilde I$  is then
\begin{eqnarray}
&&\frac1d  \mathbb{E}\left[\left|\mathrm{Tr}\left(\left(\tilde{I}_{\alpha}^{-1}-I_{\alpha}^{-1}\right)\tilde{Q}\right)\right|\right]
=\frac1d\mathbb{E}\left[\left|\mathrm{Tr}\left(\tilde{I}_{\alpha}^{-1}\left(I-\tilde{I}\right)I_{\alpha}^{-1}\tilde{Q}\right)\right|\right]\nonumber\\
	&\leq&\frac{1}{\alpha^2 d}\mathbb{E}\left[\mathrm{Tr}\left(\tilde{Q}\right)\| I-\tilde{I}\|_{\mathrm{op}}\right]\leq\frac{\xi}{\alpha}\sqrt{\mathbb{E}\left[\frac{\| \boldsymbol{w}^{(2)}\|_{2}^{4}}{N^{2}}\right]}
\left|\sqrt{\mathbb{E}\left[\| I-\tilde{I}\|_{\mathrm{op}}^{2}\right]}\right|\nonumber\\
&\leq&\frac{\xi}{\alpha \alpha_2}\left|\sqrt{\mathbb{E}\left[\| I-\tilde{I}\|_{\mathrm{op}}^{2}\right]}\right|;
\end{eqnarray}
 In the first equality  we used the identity
 \begin{equation}
   \label{eq-AB}
   A^{\text{\textminus}1}\text{\textminus}B^{\text{\textminus}1}=A^{\text{\textminus}1}(B\text{\textminus}A)B^{\text{\textminus}1},
 \end{equation}
in the first inequality  we used the positive definiteness  of $\tilde{Q}$ and that   $\| I^{-1}_\alpha\|_{\mathrm{op}}$, $\|\tilde{I}^{-1}_\alpha \|_{\mathrm{op}}\leq \alpha^{-1}$, in the second inequality we used the  Cauchy-Schwarz inequality, and in the last Assumption \ref{subsec:assumptionsiFF1}. An application of Lemma \ref{lem:FisheraveragedW} below then
shows that $\mathbb{E}\left[\|I-\tilde{I}\|_{\mathrm{op}}^{2}\right]\underset{d\rightarrow\infty}{\rightarrow} 0$, and completes the justification of the
replacement of $I$ by $\tilde I$.
We thus obtained that
\begin{equation}
\label{eq-1703}
B^{(1)}=\frac{\sigma_\eps^2}{d} \E \mathrm{Tr} (\tilde I_\alpha^{-1} \tilde Q)+o(1).
\end{equation}

We next show  that we can replace $\tilde I_\alpha$ in  \eqref{eq-1703} by $\hat I_\alpha=\tilde I_\alpha-G= \hat I(X,\boldsymbol{w}^{(2)})+\sigma_\eps^2\alpha I_{Nd}$, i.e. remove $G$ in \eqref{eq:Itilde}, with an $o(1)$ error. Indeed, using the identity  \eqref{eq-AB}
 with $A=\tilde I_\alpha$ and $B=\hat I_\alpha$,  and using that
 $\|\tilde I_\alpha\|_{\mathrm{op}},\|\hat I_\alpha\|_{\mathrm{op}}
 \leq 1/\alpha \sigma_\eps^2$ we obtain, using that the rank of
 $\tilde Q$ is bounded above by $d$, that
\begin{equation}
  \label{eq-1703aa}
  \frac{1}{d} \E \mathrm{Tr}((\tilde I_\alpha-\hat I_\alpha)\tilde Q)
  \leq C \E [\|G\|_{\mathrm{op}} \|\tilde Q\|_{\mathrm{op}}],
\end{equation}
where $C$, here and in the next few lines,
is a constant independent of $d$. By construction,
$\E \|\tilde Q\|_{\mathrm{op}}^2 \leq C$. Also, from
standard properties of Wishart matrices,
$\E \|d^{-1}\sum_k \boldsymbol{x}^{(k)}(\boldsymbol{x}^{(k)})^T\|_{\mathrm{op}}^4\leq C$ while, using that the entries $w_i^{(2)}$ are standard
i.i.d. Gaussians,
$\E \max_{i} (w_i^{(2)})^4\leq C (\log d)^2$  and therefore
$\E \|G\|_{\mathrm{op}}^2 \leq C \log d/N^2$. Therefore, the right hand side
of \eqref{eq-1703aa} is bounded above by $\sqrt{\log d}/N=o(1)$. Altogether,
we obtain that
\begin{equation}
\label{eq-1703bb}
B^{(1)}=\frac{\sigma_\eps^2}{d} \E \mathrm{Tr} (\hat
I_\alpha^{-1} \tilde Q)+o(1).
\end{equation}
Set  $J_{w}=\boldsymbol{w}^{(2)}\otimes I_{d}/\sqrt{N}\in\mathbb{R}^{Nd\times d}$,
and $J_{X,w}=\boldsymbol{w}^{(2)}\otimes X/\sigma_x\sqrt{Nd}\in\mathbb{R}^{Nd\times M}$. Then,
\begin{equation}
  B^{(1)}=\frac{\sigma_{\varepsilon}^{2}\alpha\xi}{d}\mathbb{E}\left[\mathrm{Tr}\left(\left(\alpha\xi J_{X,w}J_{X,w}^{T}+\alpha\sigma_{\varepsilon}^{2}I_{Nd}\right)^{-1}J_{w}J_{w}^{T}\right)\right]+
  o(1).\label{eq:B1_simplifyied}
\end{equation}
By Woodbury's formula, we have that
\begin{multline}
\left(\alpha\xi J_{X,w}J_{X,w}^{T}+\alpha\sigma^2_{\varepsilon}I_{Nd}\right)^{-1}=(\alpha\sigma^2_{\varepsilon})^{-1}I_{Nd}\\-(\alpha\sigma^2_{\varepsilon})^{-2}\alpha\xi J_{X,w}\left(I_{M}+\sigma_{\varepsilon}^{-2}\xi J_{X,w}^{T}J_{X,w}\right)^{-1}J_{X,w}^{T}\label{eq:Woodbury_1}
\end{multline}
Substituting  (\ref{eq:Woodbury_1}) back in (\ref{eq:B1_simplifyied}), we have that:
\begin{multline}
B^{(1)}=o(1)+\frac{\xi}{d}\mathbb{E}_{\boldsymbol{w}^{(2)}}\left[\mathrm{Tr}\left(J_{w}^{T}J_{w}\right)\right]
\\-\frac{\xi^{2}}{\sigma_{\varepsilon}^{2}}\mathbb{E}_{\boldsymbol{w}^{(2)},X}\left[\frac{1}{d}\mathrm{Tr}\left(\left(I_{M}+\frac{\xi}{\sigma_{\varepsilon}^{2}}J_{X,w}^{T}J_{X,w}\right)^{-1}J_{X,w}^{T}J_{w}J_{w}^{T}J_{X,w}\right)\right],\label{eq:B1_simplified2}
\end{multline}
where we define $r_{w}=\| \boldsymbol{w}^{(2)}\|_{2}^{2}/N$.
The first term in the right hand side of \eqref{eq:B1_simplified2} is simply
\begin{equation}
\frac{1}{d}\mathbb{E}_{\boldsymbol{w}^{(2)}}\left[\mathrm{Tr}\left(J_{w}^{T}J_{w}\right)\right]=\frac{1}{d}\mathbb{E}_{\boldsymbol{w}^{(2)}}\left[\mathrm{Tr}\left(\boldsymbol{w}^{(2)}(\boldsymbol{w}^{(2)})^T\otimes I_{d}\right)\right]=\mathbb{E}_{\boldsymbol{w}^{(2)}}r_{w}=\frac{1}{\alpha_{2}}.
\end{equation}
To analyze the second term, we first calculate,
\begin{equation}
J_{X,w}^{T}J_{w}=\left(\frac{1}{\sqrt{Nd}}(\boldsymbol{w}^{(2)})^T\otimes X^{T}\right)\left(\frac{1}{\sqrt{N}}\boldsymbol{w}^{(2)}\otimes I_{d}\right)=r_{w}\frac{1}{\sqrt{d}}X^{T}\label{eq:JXwJw}
\end{equation}
Hence, $J_{X,w}^{T}J_{w}J_{w}^{T}J_{X,w}=r_{w}^{2}X^{T}X/d$.
We also have that
\begin{equation}
J_{X,w}^{T}J_{X,w}=\left(\frac{1}{\sqrt{Nd}}(\boldsymbol{w}^{(2)})^T\otimes X^{T}\right)\left(\frac{1}{\sqrt{Nd}}\boldsymbol{w}^{(2)}\otimes X\right)=r_{w}\frac{1}{d}X^{T}X\label{eq:JXwJXw}
\end{equation}
Using  (\ref{eq:JXwJw}) and  (\ref{eq:JXwJXw}) the bound in the right hand side
of
 (\ref{eq:B1_simplified2}) reads:
\begin{multline}
\frac{\xi}{\alpha_{2}}-\frac{\xi^{2}}{\sigma_{\varepsilon}^{2}}\mathbb{E}_{\boldsymbol{w}^{(2)},X}\left[\frac{1}{d}\mathrm{Tr}\left(\left(I_{M}+\frac{\xi}{\sigma_{\varepsilon}^{2}}r_{w}\frac{1}{d}X^{T}X\right)^{-1}r_{w}^{2}\frac{1}{d}X^{T}X\right)\right]\\
=\frac{\xi}{\alpha_{2}}-\xi^{2}\mathbb{E}_{\boldsymbol{w}^{(2)},X}\left[\frac{1}{d}\sum_{i}\frac{r_{w}^{2}\lambda_{i}(\frac{1}{d}X^{T}X)}{\sigma_{\varepsilon}^{2}+\xi r_{w}\lambda_{i}(\frac{1}{d}X^{T}X)}\right].
\end{multline}
By dominated convergence and the convergence of the empirical
measure of eigenvalues of $X^TX/d$ in the regime $M,d\rightarrow\infty$ such that  $\gamma_0=\lim_{d\to\infty} {d}/M\in(0,\infty)$
to the Marchenko-Pastur law and the
convergence of $r_w$ to $1/\alpha_2$, the last expression converges
(as $d\to\infty$) to
\begin{multline}
  \frac{\xi}{\alpha_{2}}\left(1-\frac{\xi}{\gamma_{0}}\int\frac{s}{\xi s+\alpha_{2}\sigma_{\varepsilon}^{2}}d\rho_{\gamma_{0}^{-1}}(s)\right)\\
=\sigma_{\varepsilon}^{2}\xi\int\frac{1+\xi(1-\gamma_{0}^{-1})s/\sigma_{\varepsilon}^{2}\alpha_{2}}{\xi s+\alpha_{2}\sigma_{\varepsilon}^{2}}d\rho_{\gamma_{0}^{-1}}(s).
\label{eq:finalB1}
\end{multline}
This yields \eqref{eq:B1Tight}.

\paragraph{Evaluation of $B^{(2)}$}
The bound $B^{(2)}$ is derived by taking $\boldsymbol{\theta}_{l}=W^{(1)}$ in Corollary \ref{Cor: Layer-wise bound}. Hence, following
 (\ref{eq:B^{(l)}}), the bound
reads
\begin{equation}
B^{(2)}=\frac{1}{N}\mathbb{E}_{W^{(1)},X}\left[\mathrm{Tr}\left(\left(I(X,W^{(1)})\right)^{-1}\Sigma(W^{(1)})\right)\right],\label{eq:B^(2) init}
\end{equation}
where $X\in\mathbb{R}^{d\times M}$ is the features matrix and $\boldsymbol{x}^{(k)}\in\mathbb{R}^{d\times1}$ are its columns,  the matrix
$\Sigma=\Sigma(W^{(1)})\in\mathbb{R}^{N\times N}$ is
\begin{equation}
\Sigma(W^{(1)})
=\E_{\bm{x}}\left[\sigma({\boldsymbol{q}})\sigma({\boldsymbol{q}})^{T}\right],\label{eq:M1}
\end{equation}
and the Fisher matrix $I$ is
\begin{equation}
I(X,W^{(1)})=\frac{1}{\sigma_{\varepsilon}^{2}N}\sum_{k=1}^{M}\sigma(\boldsymbol{q}^{(k)})\sigma(\boldsymbol{q}^{(k)})^{T}+\alpha_{2}I_{N},\label{eq:I1}
\end{equation}
where $\boldsymbol{q}^{(k)}=W^{(1)}\boldsymbol{x}^{(k)}/\sqrt{d}$. Substituting
the matrices (\ref{eq:M1}) and  (\ref{eq:I1}) in  (\ref{eq:B^(2) init}),
such that $X^{(1)}=\sigma(W^{(1)}X/\sqrt{d})\in\mathbb{R}^{N\times M}$,
we obtain that
\begin{multline}
B^{(2)}
=\frac{\sigma_{\varepsilon}^{2}}{N}\mathbb{E}_{W^{(1)},X}\left[\mathrm{Tr}\left(\left(\frac{1}{N}X^{(1)}(X^{(1)})^T+\sigma_{\varepsilon}^{2}\alpha_{2}I_{N}\right)^{-1}\Sigma\right)\right]\\
=\frac{\sigma_{\varepsilon}^{2}}{d}\mathbb{E}_{W^{(1)},X}\left[\mathrm{Tr}\left(\left(\frac{1}{d}X^{(1)}(X^{(1)})^T+\alpha_d I_{N}\right)^{-1}\Sigma\right)\right],\label{eq:Bound expression}
\end{multline}
where we define $\alpha_d = {N\sigma_{\varepsilon}^{2}\alpha_{2}}/{d}$. To calculate the expression above, we utilize the framework in
\cite{mei2019generalization} adapted to our settings. We first note
that
by Lemma \ref{lem:population kernel}, the operator norm difference between
the matrix $\Sigma$
and the matrix
\begin{equation}
	\tilde{\Sigma}=I_{N}(\eta_{0}-\xi)+\alpha\xi\frac{1}{d}W^{(1)}(W^{(1)})^T+\frac{a}{d}\left(1_{N}1_{N}^{T}-I_{N}\right)\label{eq:sigma_tilde}
\end{equation}
(with $a\in\mathbb{R}$ an appropriate constant) is $o(1)$.
The error due to this replacement is bounded
by $\| \Sigma-\tilde{\Sigma}\|_{\mathrm{op}}/\alpha_{2}$, hence the expected error goes to zero.
	Next, note that
	the contribution of the third term in  (\ref{eq:sigma_tilde}) is $o(1)$ since
	\[
	\frac{\sigma_{\varepsilon}^{2}}{d^{2}}\left|\mathrm{Tr}\left(\left(\frac{1}{d}X^{(1)}(X^{(1)})^T+\alpha_dI_{N}\right)^{-1}(1_{N}1_{N}^{T}-I_N)\right)\right|\le\frac{1}{\alpha_{2}d}.
	\]
Therefore, to calculate (\ref{eq:Bound expression}) we need
to evaluate the two terms

\[
T_{1}=\mathbb{E}_{W^{(1)},X}\left[\frac{1}{d}\mathrm{Tr}\left(\left(\frac{1}{d}X^{(1)}(X^{(1)})^T+\alpha_dI_{N}\right)^{-1}\right)\right]
\]
and
\[
T_{2}=\mathbb{E}_{W^{(1)},X}\left[\frac{1}{d}\mathrm{Tr}\left(\left(\frac{1}{d}X^{(1)}(X^{(1)})^T+\alpha_dI_{N}\right)^{-1}W^{(1)}(W^{(1)})^T\right)\right].
\]
Adopting the notation of  \cite{mei2019generalization}, we consider
the following matrix $A(\boldsymbol{s})\in\mathbb{R}^{r\times r}$,
where $r=M+N$:
\[
A(\boldsymbol{s})=\left[\begin{array}{cc}
s_{1}I_{N}+s_{2}Q & (X^{(1)})^T\\
X^{(1)} & 0_{M}
\end{array}\right].
\]
For our purposes, we take $\boldsymbol{s}=(s_{1},s_{2})\in\mathbb{R}^{2}$,
and $Q_{ij}=\frac{1}{d}\sum_{k}W_{ik}^{(1)}W_{jk}^{(1)}$.
For $t\in\mathbb{C_{+}},$ we introduce the log-determinant
\[
G_d(t,\boldsymbol{s})=\frac{1}{d}\sum_{i=1}^{r}\mathrm{log}(\lambda_{i}(A)-t)=\frac{1}{d}\mathrm{log}\left(\mathrm{det}\left(A(\boldsymbol{s})-t I_{r}\right)\right)+i2\pi k(\boldsymbol{s},t),
\]
where $k(\boldsymbol{s},t)\in\mathbb{N}$, and
the Stieltjes transform of the empirical measure
of eigenvalues of $A(\boldsymbol{s})$
\begin{equation}
M_d(t,\boldsymbol{s})=\frac{1}{d}\sum_{i}(\lambda_{i}(A)-t)^{-1}=-\frac{d}{dt}G_d(t,\boldsymbol{s}).\label{eq:M_d}
\end{equation}
Using the definition of the log-determinant and applying simple algebraic manipulations, see  \cite[Appendix B]{mei2019generalization} for more details, we then have
\begin{multline*}
\partial_{s_{1}}G_d(iu,\boldsymbol{0})=\frac{1}{d}\mathrm{Tr}\left(\left(A(\boldsymbol{0})-iuI_{r}\right)^{-1}\partial_{s_{1}}A(\boldsymbol{0})\right)\\
=\frac{1}{d}\mathrm{Tr}\left(\left(A(\boldsymbol{0})-iuI_{r}\right)^{-1}\partial_{s_{1}}A(\boldsymbol{0})\right)\\
=\frac{1}{d}\mathrm{Tr}\left(\left(-iuI_{N}+\left(iu\right)^{-1}X^{(1)}(X^{(1)})^T\right)^{-1}\right)=iu\frac{1}{d}\mathrm{Tr}\left(\left(u^{2}I_{N}+X^{(1)}(X^{(1)})^T\right)^{-1}\right)
\end{multline*}
and
\begin{multline*}
\partial_{s_{2}}G_d(iu,\boldsymbol{0})=\frac{1}{d}\mathrm{Tr}\left(\left(A(\boldsymbol{0})-t I_{r}\right)^{-1}\partial_{s_{2}}A(\boldsymbol{0})\right)\\
=\frac{1}{d}\mathrm{Tr}\left\{ \left(-iuI_{N}-iu{}^{-1}X^{(1)}(X^{(1)})^T\right)^{-1}Q\right\} \\
=iu\frac{1}{d}\mathrm{Tr}\left\{ \left(u^{2}I_{N}+X^{(1)}(X^{(1)})^T\right)^{-1}Q\right\}.
\end{multline*}
Therefore, substituting in  (\ref{eq:Bound expression}),
\begin{multline}
\sigma_{\varepsilon}^{2}\left[\alpha\xi T_{2}+(\eta_{0}-\xi)T_{1}\right]\\
=-\frac{\sigma_{\varepsilon}^{2}}{u_c}\mathbb{E}_{W^{(1)},X}\left[\alpha\xi i\partial_{s_{2}}G_d(iu_c,\boldsymbol{0})+(\eta_{0}-\xi)i\partial_{s_{1}}G_d(iu_c,\boldsymbol{0})\right],\label{eq:B vs g}
\end{multline}
where $u_c=\sigma_{\varepsilon}\sqrt{{\alpha_2}{\beta_{1}}}$.

Introduce the (deterministic) functions
\[g(t,\boldsymbol{s})=\varXi(t,z_{1},z_{2},\boldsymbol{s})|_{(z_{1},z_{2})=(m_{1}(t,\boldsymbol{s}),m_{2}(t,\boldsymbol{s}))}
\]
and
\begin{multline*}
\varXi(t,z_{1},z_{2},\boldsymbol{s})=\mathrm{log}\left[(s_{2}z_{1}+1)-\xi z_{1}z_{2}\right]-(\eta_{0}-\xi)z_{1}z_{2}+s_{1}z_{1}\\
-{\beta_{1}}\mathrm{log}\left(\frac{z_{1}}{\beta_{1}}\right)-\frac{1}{\gamma_{0}}\mathrm{log}\left(\gamma_{0}{z_{2}}\right)-\xi(z_{1}+z_{2})-{\beta_{1}}-\frac{1}{\gamma_{0}},
\end{multline*}
where the functions
$m_1(\cdot;\bm{s}), m_2(\cdot;\bm{s}): \mathbb{C}_+ \rightarrow \mathbb{C}_+$
are the unique analytic solutions in $\mathbb{C}_+$
 with growth $1/\xi$ at infinity,
of the following equations:
\begin{equation}
m_{1}={\beta_{1}}\left(-t+s_1-(\eta_{0}-\xi)m_{2}+\frac{s_2-\xi m_{2}}{1+s_2m_1-\xi m_{1}m_{2}}\right)^{-1} \label{eq:m1Fixed}
\end{equation}

\begin{equation}
m_{2}=\frac{1}{\gamma_{0}}\left(-t-(\eta_{0}-\xi )m_{1}-\frac{\xi m_{1}}{1+s_2m_1-\xi m_{1}m_{2}}\right)^{-1}, \label{eq:m2Fixed}
\end{equation}
where we wrote for brevity $m_1=m_1(t;\bm{s})$ and
$m_2=m_2(t;\bm{s})$. (That the solutions of \eqref{eq:m1Fixed}
and \eqref{eq:m2Fixed} are unique and analytic
in a neighborhood of $\infty$ follows from the
implicit function theorem, and the uniqueness in $\mathbb{C}_+$ follows from analytic continuation.)
By
\cite[Proposition 8.4]{mei2019generalization},
we have that for any fixed $t\in \mathbb{C}_+$,
\begin{equation}
  \label{eq-L1L2}
  \E|G_d(t,\bm{s})-g(t,\boldsymbol{s})|+
\E\|\nabla_{\bm{s}}G_d(t,\bm{0})-\nabla_{\bm{s}}g(t,\bm{0})\|^2\to_{d\to\infty} 0.\end{equation}
(We remark that by \cite[Proposition 8.3]{mei2019generalization},
the Stieltjes transform $M_d(t,\bm{s})$ of \eqref{eq:M_d} converges uniformly in compacts,  in probability, to
$m_1+m_2$ as in \eqref{eq:m1Fixed} and \eqref{eq:m2Fixed}.)

Summarizing, we have that
\begin{equation}
B^{(2)}=-\frac{\sigma_{\varepsilon}^{2}}{u_c}\mathbb{E}_{W^{(1)},X}\left[\xi i\partial_{s_{2}}g(iu_c,\boldsymbol{0})+(\eta_{0}-\xi)i\partial_{s_{1}}g(iu_c,\boldsymbol{0})\right] +o(1),
\label{eq-semifinal}
\end{equation}
so that it only remains to evaluate the derivatives of $g$ appearing in
\eqref{eq-semifinal}.
Toward this end, we note, following \cite{mei2019generalization}, that
the fixed point equations (\ref{eq:m1Fixed}) and (\ref{eq:m2Fixed})
imply that $(m_1;m_2)$ is a stationary point of the function
$\Xi(t,\cdot,\cdot,\bm{s})$, that is
$\partial_{z_i} \Xi(t,z_1,z_2,\bm(s))|_{z_1=m_1,z_2=m_2}=0$ for $i=1,2$.
This simplifies the calculation of derivatives with respect to $\bm{s}$ and yields, by
taking the derivative of $\varXi$ with respect to $\boldsymbol{s}$, that
\[
\partial_{s_{2}}g(t,\boldsymbol{0})
=\frac{m_{1}}{1
	-\xi m_{1}m_{2}},
\]
and
\[
\partial_{s_{1}}g(t,\boldsymbol{0})
=m_{1}.
\]
Specializing \eqref{eq:m1Fixed} and \eqref{eq:m2Fixed} to $\bm{s}=\bm{0}$
and using, with a slight abuse of notation, $m_i=m_i(t,\bm{0})$, we have
thus obtained that
\[
m_{1}={\beta_{1}}\left(-t-\left(\eta_{0}-\xi\right)m_{2}+\frac{\xi m_{2}}{\xi m_{1}m_{2}-1}\right)^{-1}
\]
and
\[
m_{2}=\frac{1}{\gamma_{0}}\left(-t-\left(\eta_{0}-\xi\right)m_{1}+\frac{\xi m_{1}}{\xi m_{1}m_{2}-1}\right)^{-1}.
\]
and, from \eqref{eq-semifinal} with
$t_c=iu_c$,
\begin{equation}
B^{(2)}=-\frac{\sigma_{\varepsilon}}{\sqrt{\alpha_2\beta_{1}}}im_{1}(t_c)\left[\frac{\xi}{1-\xi m_{1}(t_c)m_{2}(t_c)}+\eta_{0}-\xi\right]+o(1).\label{eq:B_2_final}
\end{equation}
Note that, by definition, for $t=ib$ when $b>0$, $m_{1}(ib)$ and $m_{2}(ib)$
are purely imaginary and $\Im m_{1}(ib);\Im m_{2}(ib)>0$, hence
the expression in (\ref{eq:B_2_final}) is real valued.
The theorem is obtained by making the substitutions $m_i\to -im_i$.
\qed

We now provide the proof of lemmas used above. Recall the definition of the matrices $Q$ and $\tilde Q$, see \eqref{eq:M^(1)} and \eqref{eq:Mtilde}.
\begin{lemma}
	\label{lem:Mone}Let Assumptions \ref{subsec:assumptions},  \ref{subsec:assumptionsiFF1} and \ref{subsec:assumptionsiFF2} hold.
		Then, in the regime $N,d\rightarrow \infty$ such that
		%{\clr $\gamma_0=\lim_{d\to\infty} {d}/M\in(0,\infty)$, I don't think this is needed} and
		$\beta_1=\lim_{d\to\infty} N/d$,
	it holds that $\mathbb{E}\left[\| Q-\tilde{Q}\|_{\mathrm{HS}}\right]\underset{d\rightarrow\infty}{\rightarrow}0$
\end{lemma}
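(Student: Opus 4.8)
The plan is to make the matrices $Q$ and $\tilde Q$ fully explicit and thereby reduce the Hilbert--Schmidt bound to a one-dimensional concentration estimate. First I would observe that, under Assumption \ref{subsec:assumptionsiFF1}, the entries of $\boldsymbol q=W^{(1)}\tilde{\boldsymbol x}/\sqrt d$ are, conditionally on $\tilde{\boldsymbol x}$, i.i.d.\ centered Gaussians of common variance $r_{\tilde x}^2$, where $r_{\tilde x}=\|\tilde{\boldsymbol x}\|/\sqrt{d\alpha}$. Consequently the averaged diagonal matrix $\bar D^1=\mathbb{E}_{W^{(1)}}[\mathrm{diag}(\sigma'(\boldsymbol q))]=h(r_{\tilde x})\,I_N$ is a scalar multiple of the identity, with $h(r):=\mathbb{E}_z[\sigma'(rz)]$, $z\sim\mathcal N(0,1)$, so that by \eqref{eq:M^(1)},
\[
Q(\boldsymbol w^{(2)})=\frac1N\,\boldsymbol w^{(2)}(\boldsymbol w^{(2)})^T\otimes \mathbb{E}_{\tilde{\boldsymbol x}}\big[h(r_{\tilde x})^2\,\tilde{\boldsymbol x}\tilde{\boldsymbol x}^T\big].
\]

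Next, since $\tilde{\boldsymbol x}\sim\mathcal N(0,\sigma_x^2 I_d)$ is rotationally invariant (Assumption \ref{subsec:assumptions}) and $h(r_{\tilde x})$ depends on $\tilde{\boldsymbol x}$ only through $\|\tilde{\boldsymbol x}\|$, the inner expectation is a scalar multiple of $I_d$; taking traces gives $\mathbb{E}_{\tilde{\boldsymbol x}}[h(r_{\tilde x})^2\tilde{\boldsymbol x}\tilde{\boldsymbol x}^T]=\kappa_d I_d$ with $\kappa_d=\tfrac1d\mathbb{E}[h(r_{\tilde x})^2\|\tilde{\boldsymbol x}\|^2]$. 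Recalling $\tilde Q=\tfrac{\alpha\theta_{11}}N\,\boldsymbol w^{(2)}(\boldsymbol w^{(2)})^T\otimes I_d$ from \eqref{eq:Mtilde}, the difference factors as
\[
Q-\tilde Q=\frac{\kappa_d-\alpha\theta_{11}}{N}\,\boldsymbol w^{(2)}(\boldsymbol w^{(2)})^T\otimes I_d.
\]
Using $\|A\otimes B\|_{\mathrm{HS}}=\|A\|_{\mathrm{HS}}\|B\|_{\mathrm{HS}}$, together with $\|\boldsymbol w^{(2)}(\boldsymbol w^{(2)})^T\|_{\mathrm{HS}}=\|\boldsymbol w^{(2)}\|^2$ and $\|I_d\|_{\mathrm{HS}}=\sqrt d$, and then taking expectations (where $\mathbb{E}\|\boldsymbol w^{(2)}\|^2/N=1/\alpha_2$ by Assumption \ref{subsec:assumptionsiFF1}), the claim reduces to showing
\[
\mathbb{E}\big[\|Q-\tilde Q\|_{\mathrm{HS}}\big]=\frac{\sqrt d}{\alpha_2}\,\big|\kappa_d-\alpha\theta_{11}\big|\longrightarrow 0,
\]
that is, to proving $|\kappa_d-\alpha\theta_{11}|=o(d^{-1/2})$.

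For this last step I would parametrize by $u=\|\tilde{\boldsymbol x}\|^2/d$, so that $r_{\tilde x}=\sqrt{u/\alpha}$ and $\kappa_d=\mathbb{E}[\phi(u)]$ with $\phi(u)=u\,h(\sqrt{u/\alpha})^2$. Since $v=\sigma_x/\sqrt\alpha$ and $\theta_{11}=v^2h(v)^2$ by \eqref{eq: constants}, one has $\phi(\sigma_x^2)=\sigma_x^2 h(v)^2=\alpha\theta_{11}$, while $\sigma_x^2=\mathbb{E}[u]$. Taylor expanding $\phi$ about $\mathbb{E}[u]$, the first-order term vanishes in expectation, leaving $\kappa_d-\alpha\theta_{11}=\tfrac12\phi''(\sigma_x^2)\,\mathrm{Var}(u)+(\text{remainder})$; since $u$ is an average of $d$ i.i.d.\ terms, $\mathrm{Var}(u)=2\sigma_x^4/d=O(1/d)$ and the higher centered moments are $O(d^{-3/2})$. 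Assumption \ref{subsec:assumptionsiFF2} guarantees that $h$ and its first derivatives are bounded, so $\phi$ is smooth with derivatives of at most linear growth (the apparent singularity of $u\mapsto h(\sqrt{u/\alpha})$ at $u=0$ is removable, since $h'(r)=O(r)$ as $r\to0$), and the remainder is thereby controlled. This gives $|\kappa_d-\alpha\theta_{11}|=O(1/d)$, hence $\sqrt d\,|\kappa_d-\alpha\theta_{11}|=O(d^{-1/2})\to0$.

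I expect the only delicate point to be precisely this $O(1/d)$ rate: an $o(1)$ bound on $|\kappa_d-\alpha\theta_{11}|$ would be useless, because it is multiplied by $\|I_d\|_{\mathrm{HS}}=\sqrt d$. What rescues the estimate is the exact cancellation of the first-order Taylor term, due to evaluating $\phi$ at $\mathbb{E}[u]=\sigma_x^2$; the remaining effort is the routine but necessary verification that the second-order remainder, integrated against the Gaussian law of $u$ (including its tails and the region near $u=0$), is genuinely $O(1/d)$, which is where the smoothness and boundedness in Assumption \ref{subsec:assumptionsiFF2} are used.
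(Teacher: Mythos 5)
Your proof is correct, and it takes a genuinely different and cleaner route than the paper's. The paper Taylor-expands the entries $\mathbb{E}[x_{i_1}x_{i_1'}g(\|\boldsymbol{x}\|_2^2/d)]$ around the leave-two-out norm $\tilde z_{i_1i_1'}$ and bounds the Hilbert--Schmidt norms of the resulting correction matrices one by one; you instead exploit the exact rotational invariance of $\tilde{\boldsymbol{x}}$ under Assumption \ref{subsec:assumptions} to show that $Q-\tilde Q$ is \emph{exactly} of the form $\frac{\kappa_d-\alpha\theta_{11}}{N}\boldsymbol{w}^{(2)}(\boldsymbol{w}^{(2)})^T\otimes I_d$, reducing the whole lemma to the scalar estimate $|\kappa_d-\alpha\theta_{11}|=O(1/d)$. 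Both arguments ultimately rest on the same cancellation --- the first-order Taylor term vanishes because the expansion point is the mean of $\|\tilde{\boldsymbol{x}}\|_2^2/d$ --- but your version isolates it transparently, and you correctly identify why an $o(1)$ rate would not suffice against the factor $\|I_d\|_{\mathrm{HS}}=\sqrt d$ (a point the paper's proof passes over somewhat quickly when it invokes bounded convergence for $\mathbb{E}[g(\tilde z_{i_1i_1})]\to\alpha\theta_{11}$). Two small things to nail down: (i) you only need a second-order Lagrange form of the expansion, $\mathbb{E}\phi(u)-\phi(\mathbb{E}u)=\tfrac12\mathbb{E}[\phi''(\xi_u)(u-\mathbb{E}u)^2]$, so it suffices that $\phi''$ has at most linear growth, and then $\mathbb{E}[(1+u)(u-\mathbb{E}u)^2]=O(1/d)$ by Cauchy--Schwarz since $u$ has all moments; (ii) bounding $H''$ for $H(u)=\mathbb{E}_z[\sigma'(\sqrt{u/\alpha}\,z)]$ via Gaussian integration by parts consumes derivatives of $\sigma$ up to fifth order, which is exactly what Assumption \ref{subsec:assumptionsiFF2} supplies, so the hypothesis is used with no slack. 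The one thing your route buys less of is reusability: the paper's entrywise expansion is the template that carries over to Lemmas \ref{lem:FisheraveragedW} and \ref{lem:Asymptotics of the Fisher}, where no exact isotropy reduction is available.
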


\begin{proof}
Recall the definition of the matrix
	\begin{equation}
	Q=\frac{1}{N}\mathbb{E}_{\tilde{\boldsymbol{x}}}\left[\bar{D}^1\boldsymbol{w}^{(2)}(\boldsymbol{w}^{(2)})^T\bar{D}^1\otimes\tilde{\boldsymbol{x}}\tilde{\boldsymbol{x}}^{T}\right],
	\end{equation}
	where
 $\bar{D}^{1}=\E_{W^{(1)}}\left[\mathrm{diag}\left(\sigma'(\boldsymbol{q}_{w})\right)\right]$,
	and $\boldsymbol{q}_{w}=W^{(1)}\boldsymbol{x}/\sqrt{d}$.
Typical elements in the matrix $Q$  are of the form
	\begin{equation}
	Q_{(i_1,i_2,i_1',i_2')}=\frac{1}{N}w^{(2)}_{i_{2}}w^{(2)}_{i'_{2}}\mathbb{E}_{\boldsymbol{x}}\left[\mathbb{E}_{W^{(1)}}\sigma'(q_{wi_{2}})\mathbb{E}_{W^{(1)}}\sigma'(q_{wi'_{2}})x_{i'_{1}}x_{i_{1}}\right]\label{eq:element_expen_sw_su}
	\end{equation}
	Since by assumption $W^{(1)}$ is a Gaussian matrix, the variables $q_{wi}\sim\mathcal{N}(0,r_x^{2})$
	for all $i$, where $r_x ={\| \boldsymbol{x}\|_{2}}/{\sqrt{\alpha d}} $. Therefore, $\mathbb{E}_{W^{(1)}}\left[\sigma'(q_{wi_{2}})\right]=\mathbb{E}_{z}\left[\sigma'(zr_x)\right],$ with
	$z\sim\mathcal{N}(0,1)$. Write for brevity $g({\| \boldsymbol{x}\|_{2}^{2}}/{d})=\mathbb{E}_{z}\left[\sigma'(zr_x)\right]^{2}$
	and $\tilde{z}_{i_1i_1'}=\sum_{k\neq i_{1},i'_{1}}x_{k}^{2}/d$.
	Applying a Taylor expansion around $\tilde{z}_{x}$, we obtain
	\begin{multline}
	\mathbb{E}\left[x_{i'_{1}}x_{i_{1}}g(\frac{\| \boldsymbol{x}\| _{2}^{2}}{d})\right]
	=\delta_{i_{1}i'_{1}}\sigma_{x}^{2}\mathbb{E}\left[g(\tilde{z}_{i_1i_1})\right]+\delta_{i_{1}i'_{1}}\frac{1}{d}6\sigma_{x}^{4}\mathbb{E}\left[g'(\tilde{z}_{i_1i_1})\right]
	+\frac{1}{d^2}\mathcal{E}_{i_1i_1'}
	\label{eq:expan xxNorm}
	\end{multline}
	where $\mathcal{E}_{i_1i_1'}=\mathbb{E}\left[x_{i'_{1}}x_{i_{1}}(x_{i'_{1}}^2+x_{i'_{1}}^2)^2g''(\eta_{i_1i_1'})\right]/2$ and  $\eta_{i_1i_1'}$ is a random point between $\tilde{z}_{i_1i_1'}$ and
	$\| \boldsymbol{x}\|_{2}^{2}/d$. Using Chebyshev's inequality,
	$\| \boldsymbol{x}\|_{2}^{2}/d\rightarrow\sigma_{x}^{2}$ in probability, and since by assumption $\sigma'$ is bounded, the bounded convergence
	theorem  yields that
	$\mathbb{E}\left[g(\tilde{z}_{i_1i_1})\right]\rightarrow\alpha\xi$. Since $g'$ is bounded by assumption, $\mathbb{E}\left[g'(\tilde{z}_{i_1i_1})\right]$ is also some bounded constant which do not depend on the index $i_1$.
	Therefore,
	substituting back in  (\ref{eq:element_expen_sw_su}),
	the first term yields the matrix
	$\tilde{Q}$.
We now calculate the error $	\mathbb{E}\left[\| Q-\tilde{Q}\|_\mathrm{HS}\right]$ due to the second and third terms in  (\ref{eq:expan xxNorm}).
The error from the second term is $o(1)$ since
	\begin{equation}
			\frac{1}{dN}\E\left[\|I_d\otimes\bm{w}^{(2)}(\bm{w}^{(2)})^T\|_{\mathrm{HS}}\right]
			=\frac{1}{\sqrt{d}N}\mathbb{E}\left[\|\bm{w}^{(2)}\textbf{}(\bm{w}^{(2)})^T\|_\mathrm{HS}\right]\leq\frac{1}{\sqrt{d}\alpha_2}.
	\end{equation}
	The error due to the third term is
	also $o(1)$ since
	\begin{equation}
	\frac{1}{d^2N}\mathbb{E}\left[\|\bm{w}^{(2)}\textbf{}(\bm{w}^{(2)})^T\|_\mathrm{HS}\right]\|\mathcal{E}\|_\mathrm{HS}
	\leq\frac{\|\mathcal{E}\|_\mathrm{HS}}{d^2}\leq \frac{c}{d},
	\end{equation}
	since by the Cauchy-Schwarz inequality, the
	boundedness of $g''$,
	and the fact that
	$\boldsymbol{x}$ has finite moments by assumption, we have that $\|\mathcal{E}\|_\mathrm{HS}\leq dc$ where $c$ is a constant independent of $d$. Therefore the error due to the forth term is $o(1)$. This completes the proof.
\end{proof}
Recall the constants in \eqref{eq-morectt}, and the matrices $I$, $\tilde I$,
see \eqref{eq:Itilde} and \eqref{eq:I^(1)}.
\begin{lemma}\label{lem:FisheraveragedW}
	Let Assumptions \ref{subsec:assumptions},  \ref{subsec:assumptionsiFF1}, and \ref{subsec:assumptionsiFF2} hold.
	Then, in the regime $N,d,M\rightarrow \infty$ such that $\gamma_0=\lim_{d\to\infty} {d}/M\in(0,\infty)$ and $\beta_1=\lim_{d\to\infty} N/d$,
	it holds that $\E\left[\| I-\tilde{I}\|_{\mathrm{op}}^{2}\right]\underset{d\rightarrow\infty}{\rightarrow}0$.
\end{lemma}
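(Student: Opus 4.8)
The plan is to first carry out the conditional Gaussian average over $W^{(1)}$ explicitly, obtaining a closed form for $I(X,\boldsymbol{w}^{(2)})$, and then to observe that the difference $I-\tilde I$ collapses to a single Kronecker product whose norm is easy to estimate. Fix $X$ and $\boldsymbol{w}^{(2)}$. For each sample $k$ the rows of $W^{(1)}$ are independent, so for $i\neq j$ the entries $q^{(k)}_i=(W^{(1)}\boldsymbol{x}^{(k)})_i/\sqrt d$ and $q^{(k)}_j$ are independent centered Gaussians of variance $(r_x^{(k)})^2=\|\boldsymbol{x}^{(k)}\|_2^2/(d\alpha)$, while for $i=j$ one uses the second moment. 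Computing the entries of $D^{1(k)}\boldsymbol{w}^{(2)}(\boldsymbol{w}^{(2)})^TD^{1(k)}$ then gives $\E_{W^{(1)}}[D^{1(k)}\boldsymbol{w}^{(2)}(\boldsymbol{w}^{(2)})^TD^{1(k)}]=\theta_{11}^{(k)}\boldsymbol{w}^{(2)}(\boldsymbol{w}^{(2)})^T+c^{(k)}D^{(2)}$, with $\theta_{11}^{(k)}$, $c^{(k)}=\eta_1^{(k)}-\theta_{11}^{(k)}$ and $D^{(2)}$ as in \eqref{eq-morectt} and \eqref{eq:Itilde}. Substituting into \eqref{eq:I^(1)} exhibits $I$ as the sum of a term proportional to $\boldsymbol{w}^{(2)}(\boldsymbol{w}^{(2)})^T$ with per-sample coefficient $\theta_{11}^{(k)}$, plus exactly the matrix $G$ of \eqref{eq:Itilde}. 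Since $\tilde I=\hat I+G$ with $\hat I=\frac{\theta_{11}}{v^2Nd}\boldsymbol{w}^{(2)}(\boldsymbol{w}^{(2)})^T\otimes XX^T$, the two copies of $G$ cancel and $I-\tilde I=\bigl(\tfrac1N\boldsymbol{w}^{(2)}(\boldsymbol{w}^{(2)})^T\bigr)\otimes\bigl(\tfrac1d\sum_k\epsilon_k\boldsymbol{x}^{(k)}(\boldsymbol{x}^{(k)})^T\bigr)$, where $\epsilon_k=\theta_{11}^{(k)}-\theta_{11}/v^2$.

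Next I would use multiplicativity of the operator norm under Kronecker products to write $\|I-\tilde I\|_{\mathrm{op}}=r_w\,\bigl\|\tfrac1d\sum_k\epsilon_k\boldsymbol{x}^{(k)}(\boldsymbol{x}^{(k)})^T\bigr\|_{\mathrm{op}}$ with $r_w=\|\boldsymbol{w}^{(2)}\|_2^2/N$. Since the inner matrix is symmetric, testing against a unit vector $u$ and using $|u^T(\sum_k\epsilon_k\boldsymbol{x}^{(k)}(\boldsymbol{x}^{(k)})^T)u|\le\max_k|\epsilon_k|\,u^T(XX^T)u$ yields the clean bound $\|I-\tilde I\|_{\mathrm{op}}\le r_w\,\max_{k\le M}|\epsilon_k|\,\|\tfrac1dXX^T\|_{\mathrm{op}}$. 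Everything then reduces to showing that $\max_{k\le M}|\epsilon_k|\to0$ quickly enough, with the other two factors controlled in the mean-square sense.

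For the decisive factor, observe that $\epsilon_k=h(\|\boldsymbol{x}^{(k)}\|_2^2/d)-h(\sigma_x^2)$, where $h(t)=(\E_z[\sigma'(z\sqrt{t/\alpha})])^2$; by Assumption \ref{subsec:assumptionsiFF2} the function $\sigma'$ is bounded with bounded derivative, so $h$ is Lipschitz and $|\epsilon_k|\le C\,|\,\|\boldsymbol{x}^{(k)}\|_2^2/d-\sigma_x^2|$. Under Assumption \ref{subsec:assumptions} each $\|\boldsymbol{x}^{(k)}\|_2^2$ is $\sigma_x^2$ times a $\chi^2_d$ variable, so standard chi-square concentration gives $\mathbb{P}(|\,\|\boldsymbol{x}^{(k)}\|_2^2/d-\sigma_x^2|>t)\le 2e^{-cdt^2}$ for small $t$. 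A union bound over the $M=O(d)$ samples yields $\max_{k\le M}|\,\|\boldsymbol{x}^{(k)}\|_2^2/d-\sigma_x^2|=O(\sqrt{\log d/d})$ with probability tending to one, hence $\max_{k\le M}|\epsilon_k|\to0$ in probability, while remaining bounded by a deterministic constant $C_0$ since $\sigma'$ is bounded.

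Finally I would upgrade this to the required $L^2$ bound by truncation. Writing $Y_d=\|I-\tilde I\|_{\mathrm{op}}^2\le C_0^2\,r_w^2\,\|\tfrac1dXX^T\|_{\mathrm{op}}^2$ and splitting on $\{\max_k|\epsilon_k|\le\delta\}$, one gets $\E[Y_d]\le\delta^2\,\E[r_w^2\|\tfrac1dXX^T\|_{\mathrm{op}}^2]+C_0^2\,\E[r_w^2\|\tfrac1dXX^T\|_{\mathrm{op}}^2\mathbf 1_{\{\max_k|\epsilon_k|>\delta\}}]$. The moment $\E[r_w^2\|\tfrac1dXX^T\|_{\mathrm{op}}^2]$ is bounded uniformly in $d$: $\boldsymbol{w}^{(2)}$ is independent of $X$, $r_w$ is a normalized $\chi^2$ with bounded moments of all orders, and the top eigenvalue of the Wishart matrix $\tfrac1dXX^T$ has bounded moments of all orders. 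Thus the first term is $O(\delta^2)$, and Cauchy--Schwarz together with $\mathbb{P}(\max_k|\epsilon_k|>\delta)\to0$ sends the second term to $0$ for each fixed $\delta$; letting $d\to\infty$ and then $\delta\to0$ gives $\E[\|I-\tilde I\|_{\mathrm{op}}^2]\to0$. The main obstacle is precisely the third step: a single $\epsilon_k$ is automatically $o(1)$, but because the bound involves $\max_{k\le M}|\epsilon_k|$ rather than an average, with $M$ growing linearly in $d$, one is forced to control the tail of $\|\boldsymbol{x}^{(k)}\|_2^2/d$ uniformly over all samples through the concentration-plus-union-bound estimate.
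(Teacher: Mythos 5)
Your proof is correct, and while the first half coincides with the paper's (both compute the conditional Gaussian average $\mathbb{E}_{W^{(1)}}[D^{1(k)}\boldsymbol{w}^{(2)}(\boldsymbol{w}^{(2)})^TD^{1(k)}]=\theta_{11}^{(k)}\boldsymbol{w}^{(2)}(\boldsymbol{w}^{(2)})^T+c^{(k)}D^{(2)}$, so that the $G$ terms cancel and only the $\Delta\theta^{(k)}=\theta_{11}^{(k)}-\theta_{11}/v^2$ discrepancy survives), the second half takes a genuinely different route. The paper gives up the common left Kronecker factor and estimates term by term, via $\|\sum_k A_k\|_{\mathrm{op}}^2\leq M\sum_k\|A_k\|_{\mathrm{op}}^2$ followed by Cauchy--Schwarz in expectation, as in \eqref{eq:Error I-I}; this needs only the per-sample moment bound $\mathbb{E}[(\Delta\theta^{(k)})^4]=o(1)$ (in fact $O(d^{-2})$, which beats the $M/d^2$ prefactor) and requires no uniform control over $k$. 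You instead keep the sum intact as the single product $\bigl(\tfrac1N\boldsymbol{w}^{(2)}(\boldsymbol{w}^{(2)})^T\bigr)\otimes\bigl(\tfrac1d\sum_k\Delta\theta^{(k)}\boldsymbol{x}^{(k)}(\boldsymbol{x}^{(k)})^T\bigr)$, exploit positive semidefiniteness of each $\boldsymbol{x}^{(k)}(\boldsymbol{x}^{(k)})^T$ to pull out $\max_k|\Delta\theta^{(k)}|$, and then must pay for the maximum over $M\asymp d$ samples with a chi-square concentration plus union bound, upgraded to $L^2$ by truncation against the bounded moments of $r_w$ and of $\|XX^T/d\|_{\mathrm{op}}$. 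Both arguments are sound; yours yields the stronger, essentially pointwise bound $\|I-\tilde I\|_{\mathrm{op}}\leq r_w\max_k|\Delta\theta^{(k)}|\,\|XX^T/d\|_{\mathrm{op}}$ (hence an explicit $O(\sqrt{\log d/d})$ rate on the event of probability $1-o(1)$), at the cost of the concentration-plus-union-bound and truncation machinery, whereas the paper's term-by-term estimate is shorter precisely because the crude triangle inequality already suffices at this order. Your identification of the $\max_k$ as the main obstacle is accurate, and your resolution of it is complete.
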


\begin{proof}
 Write
 \[I(X,\boldsymbol{w}^{(2)})=\sum_{k}\mathbb{E}_{W^{(1)}|\boldsymbol{w}^{(2)},X}\left[\boldsymbol{s}_{w}^{(k)}(\boldsymbol{s}_{w}^{(k)})^T\otimes\frac{1}{d}\boldsymbol{x}^{(k)}(\boldsymbol{x}^{(k)})^T\right]
	\]
	where $\boldsymbol{s}_{w}^{(k)}=D^{(k)}\boldsymbol{w}^{(2)}/\sqrt{N}$
	with
	$D^{(k)}=\mathrm{diag}\left(\sigma'(\boldsymbol{q}_{w}^{(k)})\right)$,
	and $\boldsymbol{q}_{w}^{(k)}=W^{(1)}\boldsymbol{x}^{(k)}/\sqrt{d}$.

	We will first analyze the matrix
	\begin{multline}
	\mathbb{E}_{W^{(1)}|\boldsymbol{w}^{(2)},X}\left[\boldsymbol{s}_{w}^{(k)}(\boldsymbol{s}_{w}^{(k)})^T\right]\\=\frac{1}{N}\mathbb{E}_{W^{(1)}|\boldsymbol{w}^{(2)},X}\left[\mathrm{diag}\left(\sigma'(\boldsymbol{q}_{w}^{(k)})\right)\boldsymbol{w}^{(2)}(\boldsymbol{w}^{(2)})^T\mathrm{diag}\left(\sigma'(\boldsymbol{q}_{w}^{(k)})\right)\right]\label{eq:SS^T}.
	\end{multline}
	We note that $\mathbb{E}_{W^{(1)}|X}\left[{q}_{i,w}^{(k)}{q}_{jw}^{(k)}\right]=\delta_{ij}(r_{x}^{(k)})^{2}$.
	We now calculate, using that $q_{w,i}^{(k)}\sim\mathcal{N}(0,\left(r_{x}^{(k)}\right)^{2})$
	for all $i$,
	\begin{multline}
	\mathbb{E}_{W^{(1)}|X}\left[\sigma'(q_{w,i}^{(k)})\sigma'(q_{w,j}^{(k)})\right]=\delta_{ij}\mathbb{E}_{W^{(1)}|X}\left[\sigma'(q_{w,i}^{(k)})^{2}\right]+(1-\delta_{ij})\mathbb{E}_{W^{(1)}|,X}\left[\sigma'(q_{w,i}^{(k)})\right]^{2}\\
	=\delta_{ij}\mathbb{E}_{z}\left[\sigma'(zr_{x}^{(k)})^{2}\right]+(1-\delta_{ij})\mathbb{E}_{z}\left[\sigma'(zr_{x}^{(k)})\right]^{2}\\
	=\delta_{ij}\left(\eta_{1}^{(k)}-\xi^{(k)}\right)+\xi^{(k)}.\label{eq:W_correlation}
	\end{multline}
	Substituting  in (\ref{eq:W_correlation})
	and using the definition of $\tilde I$,
	we obtain that 	
	\begin{multline}
	\mathbb{E}\left[\| I-\tilde{I}\|_{\mathrm{op}}^{2}\right]=
	\mathbb{E}\left[\| \sum_{k}\left[\frac{1}{N}\boldsymbol{w}^{(2)}(\boldsymbol{w}^{(2)})^T\Delta \theta^{(k)}\right]\otimes\frac{1}{d}\boldsymbol{x}^{(k)}(\boldsymbol{x}^{(k)})^T\| _{\mathrm{op}}^{2}\right]\\
\leq N
\sum_{k}\mathbb{E}\left[\left\Vert \left(\frac{1}{N}\boldsymbol{w}^{(2)}(\boldsymbol{w}^{(2)})^T\Delta \theta^{(k)}\right)\otimes\frac{1}{d}\boldsymbol{x}^{(k)}(\boldsymbol{x}^{(k)})^T\right\Vert _{\mathrm{op}}^{2}\right]\\
	=\frac{1}{Nd^{2}}\sum_{k}\mathbb{E}\left[\left(\Delta \theta^{(k)}\right)^{2}\left\Vert \boldsymbol{x}^{(k)}\right\Vert _{\mathrm{2}}^{2}\right]\mathbb{E}\left[\left\Vert \boldsymbol{w}\right\Vert _{\mathrm{2}}^{2}\right]\\
	\leq\frac{1}{d^{2}\alpha_{2}}\sum_{k}\sqrt{\mathbb{E}\left[\left(\Delta \theta^{(k)}\right)^{4}\right]}\sqrt{\mathbb{E}\left[\left\Vert \boldsymbol{x}^{(k)}\right\Vert _{\mathrm{2}}^{4}\right]}
	\label{eq:Error I-I},
	\end{multline}
	where we set $\Delta \theta^{(k)}=\xi^{(k)}-{\alpha\xi}/{\sigma_{x}^{2}}$, and used
	in the first inequality the triangle inequality,
	in the second
	equality that
	$\| A\otimes B\|_{\mathrm{op}}=\| A\|_{\mathrm{op}}\|
	B\|_{\mathrm{op}}$, and the Cauchy-Schwarz inequality in the
	last line.
	Note that $\mathbb{E}(\Delta \theta^{(k)})^{4}=o(1)$.
We therefore obtain that the
right hand side of \eqref{eq:Error I-I} is
$o(1)\cdot (M/d)=o(1)$, which completes the proof of the lemma.
\end{proof}

For the next lemma, recall the matrices $\Sigma,\tilde \Sigma$, see
\eqref{eq:M1} and \eqref{eq:sigma_tilde}.
\begin{lemma}
	\label{lem:population kernel}
	Let Assumptions \ref{subsec:assumptions}, \ref{subsec:assumptionsiFF1} and \ref{subsec:assumptionsiFF2} hold. 	Then, in the regime $N,d,M\rightarrow \infty$ such that $\gamma_0=\lim_{d\to\infty} {d}/M\in(0,\infty)$ and $\beta_1=\lim_{d\to\infty} N/d$,
	it holds that
	$\| \Sigma-\tilde{\Sigma}\|_{\mathrm{op}}\underset{d\rightarrow\infty}{\rightarrow0}$
	in probability.
\end{lemma}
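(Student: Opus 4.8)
The plan is to prove the statement entrywise first and then upgrade the entrywise control to an operator-norm bound. Under Assumption \ref{subsec:assumptions} the pair $(q_i,q_j)$, with $q_i=(w_i^{(1)})^T\boldsymbol{x}/\sqrt d$, is jointly centered Gaussian when $\boldsymbol{x}$ is integrated out, so each entry $\Sigma(W^{(1)})_{ij}=\mathbb{E}_{\boldsymbol{x}}[\sigma(q_i)\sigma(q_j)]$ is a fixed deterministic function of the three second moments $\tau_i^2=\sigma_x^2 G_{ii}$, $\tau_j^2=\sigma_x^2 G_{jj}$, $\rho_{ij}=\sigma_x^2 G_{ij}$, where I abbreviate $G=\tfrac1d W^{(1)}(W^{(1)})^T$. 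First I would record the concentration of $G$ under Assumption \ref{subsec:assumptionsiFF1}: by Gaussian concentration of $\|w_i^{(1)}\|_2^2$ and a union bound over the $N=O(d)$ rows, $\max_i|G_{ii}-\alpha^{-1}|\to 0$ and $\max_{i\neq j}|G_{ij}|=O(\sqrt{(\log d)/d})$ in probability, and $\|G\|_{\mathrm{op}}=O(1)$ (Marchenko--Pastur). Thus all variances $\tau_i^2$ are uniformly close to $v^2=\sigma_x^2/\alpha$ and all off-diagonal correlations $\hat r_{ij}=\rho_{ij}/(\tau_i\tau_j)$ are uniformly $o(1)$.

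Next I would expand $\sigma(\tau z)=\sum_{k\ge 0}c_k(\tau)h_k(z)$ in the orthonormal Hermite basis, which diagonalizes the Gaussian expectation and gives $\Sigma_{ij}=\sum_{k\ge 0}c_k(\tau_i)c_k(\tau_j)\hat r_{ij}^{\,k}$. Replacing each $\tau_i$ by $v$ (with error controlled by the uniform closeness of $\tau_i^2$ to $v^2$ and the smoothness of the $c_k$ from Assumption \ref{subsec:assumptionsiFF2}), I would match terms with $\tilde\Sigma$:
\[
\Sigma \;\approx\; \mu_0^2\,1_N1_N^T \;+\; \alpha\theta_{11}\,G \;+\; (\eta_0-\theta_{11})I_N \;+\; \tfrac{a}{d}\,(1_N1_N^T-I_N),
\]
where the $k=1$ term is $c_1(v)^2\hat r_{ij}=\alpha\theta_{11}G_{ij}$ (using $c_1(v)^2=\theta_{11}$ by Gaussian integration by parts and $\sigma_x^2/v^2=\alpha$), the diagonal of $\sum_{k\ge 2}$ is $\sum_{k\ge2}c_k(v)^2=\eta_0-\theta_{11}$, the $k=2$ off-diagonal mean gives the $\tfrac{a}{d}(1_N1_N^T-I_N)$ term with $a=c_2(v)^2$, and $\mu_0=\mathbb{E}[\sigma(vz)]$.

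It then remains to show every other contribution is $o(1)$ in operator norm. The diagonal variance-mismatch term is a diagonal matrix with uniformly $o(1)$ entries, hence $o(1)$. The $k=1$ replacement error has the form $D_\varepsilon(\alpha G)+(\alpha G)D_\varepsilon$ with $D_\varepsilon$ diagonal and $\|D_\varepsilon\|_{\mathrm{op}}=\max_i O(|\tau_i-v|)\to 0$, so it is $o(1)$ since $\|G\|_{\mathrm{op}}=O(1)$. For the off-diagonal nonlinear remainder $\sum_{k\ge 2}c_k(v)^2[\hat r_{ij}^{\,k}]_{i\ne j}$ I would subtract the entrywise mean (already accounted for by the $a/d$ term) and bound the fluctuation: for $k\ge 3$ even the Hilbert--Schmidt norm vanishes (entries $O(d^{-k/2})$, $N$ per row), while for $k=2$ the fluctuation has entries of standard deviation $O(d^{-1})$, so a spectral-norm bound for the resulting (weakly dependent, small-entry) symmetric matrix gives operator norm $O(\sqrt{N}/d)=o(1)$; the tail in $k$ is summable because $\sum_k c_k(v)^2=\eta_0+\mu_0^2<\infty$ by the boundedness in Assumption \ref{subsec:assumptionsiFF2}.

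The main obstacle is exactly this last step: the Hadamard powers $[\hat r_{ij}^{\,k}]$ have strongly dependent entries, so a crude Hilbert--Schmidt estimate on the whole sum is too lossy (it is only $O(1)$ for $k=2$); the argument must first extract the $k=2$ entrywise mean as the explicit rank-one/identity structure $\tfrac{a}{d}(1_N1_N^T-I_N)$ and only then control the genuinely fluctuating part by a spectral (moment/trace-method) bound of the type used for sample-covariance kernels. A second delicate point is the $k=0$ Hermite term, which produces the rank-one matrix $\mu_0^2\,1_N1_N^T$; this vanishes for centered activations (the linear and $\tanh$ cases) and is otherwise the rank-one spike that is handled separately, along the lines of the resolvent/linearization analysis of \cite{mei2019generalization} on which the surrounding computation of $B^{(2)}$ relies.
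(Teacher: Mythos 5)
Your plan is sound and structurally parallel to the paper's argument, but the machinery differs in ways worth recording. Where you expand $\sigma$ in the Hermite basis and must then control an infinite sum of Hadamard powers $[\hat r_{ij}^{\,k}]$, the paper instead applies a finite fifth-order Taylor expansion in the correlation $\varepsilon_{ij}=\sigma_x^2\sum_k W^{(1)}_{ik}W^{(1)}_{jk}/d$ (Lemma \ref{lem: Gaussian expenssion}), which produces exactly the three structured terms you identify --- the linear term proportional to $W^{(1)}(W^{(1)})^T/d$, the diagonal $(\eta_0-\theta_{11})I_N$, and the Hadamard square $R_{ij}=\varepsilon_{ij}^2$ --- plus an explicit $O(\varepsilon_{ij}^{5/2})$ remainder that is dispatched by Gershgorin's theorem using $\max_{i\neq j}|\varepsilon_{ij}|\le d^{-1/2+\delta}$ in probability; this sidesteps your summability-in-$k$ bookkeeping at the price of using the full strength of Assumption \ref{subsec:assumptionsiFF2}. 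The step you correctly single out as the main obstacle --- extracting the entrywise mean of the Hadamard square and bounding the remaining fluctuation in operator norm, where a naive Hilbert--Schmidt estimate is indeed too lossy at $k=2$ --- is precisely where the paper invokes \cite{el2010spectrum}: the matrix $R$ coincides with the matrix $W$ in the proof of Theorem 2.1 there, giving $\|R-\tfrac{1}{d\alpha^2}(1_N1_N^T-I_N)\|_{\mathrm{op}}\to0$ in probability, so the ``moment/trace-method bound for sample-covariance kernels'' you ask for is available off the shelf and your plan closes. Finally, your remark about the $k=0$ Hermite term is well taken, and is a point where you are more careful than the paper: Lemma \ref{lem: Gaussian expenssion} produces the zeroth-order product $\mathbb{E}[\sigma(v_iz)]\,\mathbb{E}[\sigma(v_jz)]$, which for non-centered activations (sigmoid, Relu) contributes a rank-one matrix of operator norm of order $\mu_0^2 N$ with $\mu_0=\mathbb{E}[\sigma(vz)]$; this term is silently dropped between Lemma \ref{lem: Gaussian expenssion} and \eqref{eq:sigma_corr} and does not appear in \eqref{eq:sigma_tilde}, so as stated the conclusion $\|\Sigma-\tilde\Sigma\|_{\mathrm{op}}\to0$ requires either $\mu_0=0$ or the inclusion of the spike $\mu_0^2 1_N1_N^T$ in $\tilde\Sigma$ together with a separate (rank-one, hence trace-negligible) treatment of its contribution to $B^{(2)}$, exactly as you propose.
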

\begin{remark}
	A similar statement for spherical weights and data was proved in
	\cite[Proposition 3]{ghorbani2019linearized}, see also
	\cite[Lemma C.7]{mei2019generalization}.
\end{remark}
\begin{proof}
 We rewrite $\Sigma,\tilde \Sigma$ as
 \begin{equation}
		\Sigma=\mathbb{E}_{\boldsymbol{x}}\left[\sigma(\bm{q})\sigma(\bm{q})^{T}\right],\label{eq:SigmaSigma}
	\end{equation}
	where $\bm{q}=W^{(1)}\bm{x}/\sqrt{d}$, and
	\begin{equation}
	\tilde{\Sigma}=\frac{\alpha\xi}{d}W^{(1)}(W^{(1)})^T+\left(\eta_{0}-\xi\right)I_{N_1}
	+c\frac{1}{d}(1_{N_1}1_{N_1}^{T}-I_{N_1}),
	\end{equation}
	where $c$ is an appropriate  constant.
	We calculate the elements in the matrix $\Sigma$ of
	\eqref{eq:SigmaSigma},
	\[
	\Sigma_{ij}=\mathbb{E}_{x}\left[\sigma(q_{i})\sigma(q_{j})\right],
	\]
	where $q_{i}=\sum_{k}W_{ik}^{(1)}x_{k}/\sqrt{d}$
	and $\mathbb{E}_{\boldsymbol{x}}q_{i}=0$,
	since $\bm{x}$ is centered by assumption.
	We now apply Lemma \ref{lem: Gaussian expenssion} for
	each Gaussian bi-vector $(q_{i},q_{j})$,
	with the functions $f=g=\sigma$, such that
	$\varepsilon_{ij}=\mathbb{E}_{x}\left[q_{i}q_{j}\right]
	=\sigma_{x}^{2}\sum_{k}W_{ik}^{(1)}W_{jk}^{(1)}/d$,
	and $v_{i}^{2}=\sigma_{x}^{2}\sum_{k}\left(W_{ik}^{(1)}\right)^{2}/d$.
	Therefore,
	\begin{eqnarray}
	\Sigma_{ij}&=&\theta_{1,i}\theta_{1,j}\frac{\sigma_x^2}{d}\sum_{k}W_{ik}^{(1)}W_{jk}^{(1)}\nonumber\\
	&&+\left(\eta_{0,i}-\theta_{1,i}^{2}\right)\delta_{ij}
	+\sigma_x^4\sum_{l=1}^{L}a_{l}(v_{i})R_{ij}a_{l}(v_{j})+O(\varepsilon_{ij}^{5/2}){\bf 1}_{i\neq j}\label{eq:sigma_corr}
	\end{eqnarray}
	where for $i,j\in [N_1]$,
	we set
	$R_{ij}=\left(\sum_{k}W_{ik}^{(1)}W_{jk}^{(1)}/d\right)^{2}$ if $i\neq j$,
	$R_{ii}=0$,
	and the implied constant in the $O(\cdot)$ term
	in \eqref{eq:sigma_corr} is uniform in $i,j$.
	We also set $\theta_{1,i}=\mathbb{E}\left[\sigma'(v_{i}z)\right]$,
	$\eta_{0,i}=\mathbb{E}\left[\sigma^{2}(v_{i}z)\right]-\mathbb{E}\left[\sigma(v_{i}z)\right]^2$.
	
	First note that, by Chebyshev's inequality applied on the square of
	the norm of each row of the matrix $W^{(1)}$,
	there exists $t_0=t_0(\sigma_x^2)$ such that
	for any $0<t<t_0$ and all $i$,
	$P(\left|v_i-v\right|\geq t)\leq 2e^{-dt^2/8}$.
	Taking a union bound we have that
	$P(\underset{i}{\max}\left|v_i-v\right|>t)\leq
	2N_1 e^{-dt^2/8}$. Taking  $t=\sqrt{\log N_1}/d^{1/2-\delta}=o(1)$,
	we then have by the Borel--Cantelli lemma that for any $\delta>0$,
	and all large $N$,
\begin{equation}
		\underset{i}{\max}\left|v_i-v\right|< (\log N_1)^{1/2}d^{-1/2+\delta}\,\,\, a.s. \label{eq:norm_w}
\end{equation}

By assumption $\sigma$ and $\sigma'$ are bounded continuous functions, and together with
\eqref{eq:norm_w} we obtain that the
diagonal matrix $(\eta_{0,i}-\theta_{1i}^2)\delta_{ij}$ converges in
operator norm to  $(\eta_{0}-\xi)I_{N_1}$, in probability.
Similarly, using that the matrix  $W^{(1)}(W^{(1)})^T/d$ is a Wishart
matrix and therefore posssesses an operator norm
which is bounded in probability uniformly in $N$, we obtain that
the operator norm of the difference
$(\alpha \xi-\theta_{1,i}\theta_{1,j} \sigma_x^2)
W^{(1)} (W^{(1)})^T/d$
converges to $0$ in probability as $d\to\infty$. It thus remains to handle the
matrices composed of the third terms in \eqref{eq:sigma_corr}, and of
the error terms $\eps_{ij}^{5/2}$.

We now control the third term
	$\sum_{l=1}^{L}a_{l}(v_{i})R_{ij}a_{l}(v_{j})$.
Using the analysis in \cite{el2010spectrum}, we note that the matrix
$R$ is precisely
the matrix $W$ in the notation of
	\cite[Proof of Theorem 2.1]{el2010spectrum},
	hence we have that (see pages 12-20 for the proof details)
	\begin{equation}
	\| R-\frac{1}{d\alpha^{2}}(1_{N_1}1_{N_1}^{T}-I_{N_1})\|_{\mathrm{op}}\rightarrow0\label{eq:R_limit}
	\end{equation}
	in probability. Since the $a_{l}(v_i)$ are continuous and bounded functions by Lemma \ref{lem: Gaussian expenssion}, hence following  (\ref{eq:norm_w}) $a_{l}(v_i)$ is uniformly bounded for all $l$. We now trivially obtain, using the
	boundedness in norm of $R$ in probability, that
	with the matrix $A$ such that $A_{l,i}=a_l(v_i)\delta_{ij}$
	\begin{equation}
	\| \sum_lA_lRA_l-\sum_la_l(v) R\|_{\mathrm{op}}\rightarrow0,
	\quad \mathrm{in \ probability}.
	\end{equation}

	We finally turn to the
	matrix composed of the error terms in
	(\ref{eq:sigma_corr}), that is,
	we bound the operator norm of the symmetric matrix $B$ with entries
	$B_{ij}=\varepsilon_{ij}^{5/2}=R_{ij}\varepsilon_{ij}^{1/2}$. By
	a Chebycheff inequality as above, we obtain that for any $\delta>0$,
	$\max_{i\neq j} |\epsilon_{ij}|<d^{-1/2+\delta}$ in probability
	for all large $d$. Therefore, by Gershgorin's circle theorem,
	the operator norm of $B$ is bounded above by $d\cdot d^{-5/4+ 5\delta/2}=o(1)$, in probability, if $\delta$ is chosen small enough.
\end{proof}

\subsection{Proof of Theorem \ref{Thm:CRLB_unbiased}}
\label{Sec:CRLB_unbiased_proof}
%\subsubsection{Proof of Theorem \ref{Thm:CRLB_unbiased}}
We will use Theorem \ref{thm:CRLB}. Since the estimator is assumed to be unbiased, we have in the notation of Theorem \ref{thm:CRLB}, that $\psi_{\boldsymbol{\theta}}(\tilde{\boldsymbol{x}})=f_{\boldsymbol{\theta}}(\tilde{\boldsymbol{x}})$, hence $\nabla_{\boldsymbol{\theta}}\psi_{\boldsymbol{\theta}}(\tilde{\boldsymbol{x}})=\nabla_{\boldsymbol{\theta}}f_{\boldsymbol{\theta}}(\tilde{\boldsymbol{x}})$.
The Fisher information evaluated on the model (\ref{eq:model})
given a set of $M$ i.i.d. measurements $\boldsymbol{z}^{(i)}$ drawn from the  probability
$p(Z|\boldsymbol{\theta})=\prod_{i}p(\boldsymbol{z}^{(i)}|\boldsymbol{\theta})$,
is then $I(\boldsymbol{\theta})= MA(\boldsymbol{\theta})/\sigma_{\varepsilon}^{2}$, with $A=A(\boldsymbol{\theta})$ as in \eqref{eq:Fisher_model}. Write the spectral resolution of $A$
as
$A=U\Lambda U^{T}$, where $U\in \mathbb{R}^{P\times P}$ is an orthogonal matrix, and $\Lambda$ is diagonal with
entries the eigenvalues of $A$, arranged in decreasing order. Then $\Lambda=\begin{bmatrix} \Lambda_k&0\\0&0
\end{bmatrix}$, where $k=\mbox{\rm rank}(A)$ and $\Lambda_k$ is invertible. Denote by $\Lambda^{-1}$ the pseudo-inverse of $\Lambda$, that is
$\Lambda^{-1}=\begin{bmatrix}\Lambda_k^{-1}&0\\0&0\end{bmatrix}$; we define in the same way $\Lambda^{-1/2}$.
Then, with $J_{\boldsymbol{\theta}}(\tilde{\boldsymbol{x}}):=\nabla_{\boldsymbol{\theta}}f_{\boldsymbol{\theta}}(\tilde{\boldsymbol{x}})$, the CR bound reads
\begin{multline}
B^{\mathrm{ub}}=\frac{\sigma_{\varepsilon}^{2}}{M}\mathbb{E}_{\boldsymbol{\theta},\tilde{\boldsymbol{x}}}\left[\max_{\boldsymbol{a}\in\mathbb{R}^{P}}\frac{\boldsymbol{a}^{T}\nabla_{\boldsymbol{\theta}}f_{\boldsymbol{\theta}}(\tilde{\boldsymbol{x}})\left(\nabla_{\boldsymbol{\theta}}f_{\boldsymbol{\theta}}(\tilde{\boldsymbol{x}})\right)^{T}\boldsymbol{a}}{\boldsymbol{a}^{T}A(\boldsymbol{\theta})\boldsymbol{a}}\right]\\
=\frac{\sigma_{\varepsilon}^{2}}{M}\mathbb{E}_{\boldsymbol{\theta},\tilde{\boldsymbol{x}}}\left[\max_{\boldsymbol{b}\in\mathbb{R}^{P}}\frac{\boldsymbol{b}^{T}U^{T}J_{\boldsymbol{\theta}}(\tilde{\boldsymbol{x}})J_{\boldsymbol{\theta}}(\tilde{\boldsymbol{x}})^{T}U\boldsymbol{b}}{\boldsymbol{b}^{T}\Lambda\boldsymbol{b}}\right]\label{eq:B^ub_part}
\end{multline}
Now, write $\boldsymbol{b}=\Lambda^{-\frac{1}{2}}\hat{\boldsymbol{b}}+\boldsymbol{b}_{\perp}$, where   $\boldsymbol{b}^T_{\perp} = [\boldsymbol{0}_k, \tilde{\boldsymbol{b}}]$  such that $\tilde{\boldsymbol{b}}\in\mathbb{R}^{(P-k)\times 1}$. Note that, $\Lambda^{\frac{1}{2}}\boldsymbol{b}=\hat{\boldsymbol{b}}$. Partition  $U = [U_k, U_{\perp}]$, with $U_k\in \mathrm{R}^{k\times P}$.
The numerator in (\ref{eq:B^ub_part}) is then
\begin{multline}
\boldsymbol{b}^{T}U^{T}J_{\boldsymbol{\theta}}(\tilde{\boldsymbol{x}})J_{\boldsymbol{\theta}}(\tilde{\boldsymbol{x}})^{T}U\boldsymbol{b} = \hat{\boldsymbol{b}}^{T}\Lambda^{-\frac{1}{2}}U^{T}J_{\boldsymbol{\theta}}(\tilde{\boldsymbol{x}})J_{\boldsymbol{\theta}}(\tilde{\boldsymbol{x}})^{T}U\Lambda^{-\frac{1}{2}}\hat{\boldsymbol{b}}\\
+2\tilde{\boldsymbol{b}}^T U_{\perp}^{T}J_{\boldsymbol{\theta}}(\tilde{\boldsymbol{x}})J_{\boldsymbol{\theta}}(\tilde{\boldsymbol{x}})^{T}U\Lambda^{-\frac{1}{2}}\hat{\boldsymbol{b}}
+\tilde{\boldsymbol{b}}^T U_{\perp}^{T}J_{\boldsymbol{\theta}}(\tilde{\boldsymbol{x}})J_{\boldsymbol{\theta}}(\tilde{\boldsymbol{x}})^{T}U_{\perp}\tilde{\boldsymbol{b}}\\
=\hat{\boldsymbol{b}}^{T}\Lambda^{-\frac{1}{2}}U^{T}J_{\boldsymbol{\theta}}(\tilde{\boldsymbol{x}})J_{\boldsymbol{\theta}}(\tilde{\boldsymbol{x}})^{T}U\Lambda^{-\frac{1}{2}}\hat{\boldsymbol{b}}, \label{eq:numer}
\end{multline}
where the last equality is because the columns  of $J_{\boldsymbol{\theta}}(\tilde{\boldsymbol{x}})$ almost surely belong to the span of
$A(\boldsymbol{\theta})=\mathbb{E}_{\boldsymbol{x}} J_{\boldsymbol{\theta}}(\tilde{\boldsymbol{x}})J_{\boldsymbol{\theta}}(\tilde{\boldsymbol{x}})^{T}$, and in particular
are orthogonal to the rows of $U_{\perp}^T$. Substituting  (\ref{eq:numer}) back in  (\ref{eq:B^ub_part}) we obtain that
\begin{eqnarray*}
	&&B^{\mathrm{ub}}=\frac{\sigma_{\varepsilon}^{2}}{M}\mathbb{E}_{\boldsymbol{\theta},\tilde{\boldsymbol{x}}}\left[\max_{\hat{\boldsymbol{b}}\in\mathbb{R}^{P}}\frac{\hat{\boldsymbol{b}}^{T}\Lambda^{-\frac{1}{2}}U^{T}J_{\boldsymbol{\theta}}(\tilde{\boldsymbol{x}})J_{\boldsymbol{\theta}}(\tilde{\boldsymbol{x}})^{T}U\Lambda^{-\frac{1}{2}}\hat{\boldsymbol{b}}}{\left\Vert \hat{\boldsymbol{b}}\right\Vert _{2}^{2}}\right]
	\\&=&\frac{\sigma_{\varepsilon}^{2}}{M}\mathbb{E}_{\boldsymbol{\theta},\tilde{\boldsymbol{x}}}\left[\mathrm{Tr}\left(\Lambda^{-\frac{1}{2}}U^{T}J_{\boldsymbol{\theta}}(\tilde{\boldsymbol{x}})J_{\boldsymbol{\theta}}(\tilde{\boldsymbol{x}})^{T}U\Lambda^{-\frac{1}{2}}\right)\right]
	=\frac{\sigma_{\varepsilon}^{2}}{M}\mathbb{E}_{\boldsymbol{\theta}}\left[\mathrm{Tr}\left(AU\Lambda^{-\frac{1}{2}}\Lambda^{-\frac{1}{2}}U^{T}\right)\right]\\
	&=&\frac{\sigma_{\varepsilon}^{2}}{M}\mathbb{E}_{\boldsymbol{\theta}}\left[\mathrm{Tr}\left(U\begin{bmatrix}I_k&0\\0&0\end{bmatrix}U^{T}\right)\right]=\sigma_{\varepsilon}^{2}\frac{\mathbb{E}_{\boldsymbol{\theta}}\left[k\right]}{M}.
\end{eqnarray*}
(In the last display, we used that the matrix $\Lambda^{-\frac{1}{2}}U^{T}J_{\boldsymbol{\theta}}(\tilde{\boldsymbol{x}})J_{\boldsymbol{\theta}}(\tilde{\boldsymbol{x}})^{T}U\Lambda^{-\frac{1}{2}}$
is of rank one in the first equality; in the next to last, we used that  $A=U\Lambda U^T$.) \qed

\subsection{An auxiliary lemma}
\begin{lemma}
	\label{lem: Gaussian expenssion}
	Let $(q_1,q_2)$ be a  bi-variate centered Gaussian vector, with covariance matrix $C=\begin{bmatrix}
	v_1^{2} & \varepsilon\\
	\varepsilon & v_2^{2}\end{bmatrix}$,
	such that $\eps<v_i^2$, $i=1,2$, and $v_i> 1/2$. Given   functions $f_i:\mathbb{R}\rightarrow \mathbb{R}$ with bounded  derivatives up to fifth order, we have that
	\begin{eqnarray}
	\mathbb{E}\left[f_1(q_1)f_2(q_2)\right]&=&\mathbb{E}\left[f_1(v_1z)\right]\mathbb{E}\left[f_2(v_2z)\right]\\
	&&+\varepsilon \mathbb{E}\left[f_1'(v_1z)\right]\mathbb{E}\left[f_2'(v_2z)\right]+\eps^2 M_{f_1,f_2}
	+O\big(\varepsilon^{5/2}\big), \nonumber
	\end{eqnarray}	
	where $z\sim\mathcal{N}(0,1)$, $M_{f_1,f_2}=\sum_{\ell=1}^L a_{\ell,f_1}(v_1) b_{\ell,f_2}(v_2)$ with $L<20$, and
	the constants $a_{\ell,f_i}(v_1), b_{\ell,f_i}(v_2)$ as well as the implied constant in the $O$ term are bounded by an absolute constant independent of $v_i,\eps$, and
	the   constants $a_{\ell,f_2}$ and $b_{\ell,f_2}$ are continuous  in $v_i$.
\end{lemma}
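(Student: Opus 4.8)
The plan is to freeze $v_1,v_2$ and view the left-hand side as a function of the single real parameter $\eps$, namely $F(\eps)=\E[f_1(q_1)f_2(q_2)]$, where the expectation is taken under the centered Gaussian law with covariance $C$. This is well defined and smooth on the range where $C$ is positive definite, which the hypotheses ensure (since $\eps<\min(v_1^2,v_2^2)\le v_1v_2$). I would then Taylor-expand $F$ around $\eps=0$ to second order and identify the coefficients. The engine for this is Price's theorem, i.e. the identity $F^{(k)}(\eps)=\E[f_1^{(k)}(q_1)f_2^{(k)}(q_2)]$, where the right-hand expectation is again under the correlation-$\eps$ law.

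To establish Price's identity I would use that the density $p_\eps(q_1,q_2)$ of the correlation-$\eps$ Gaussian satisfies the diffusion equation $\partial_\eps p_\eps=\partial_{q_1}\partial_{q_2}p_\eps$ (immediate from the Fourier transform $\exp(-\tfrac12(v_1^2t_1^2+2\eps t_1t_2+v_2^2t_2^2))$, whose $\eps$-derivative pulls down $-t_1t_2$). Differentiating $F$ under the integral sign and integrating by parts twice — the boundary terms vanishing because $f_i,f_i'$ are bounded while $p_\eps$ and its derivatives decay — gives $F'(\eps)=\E[f_1'(q_1)f_2'(q_2)]$, and iterating yields $F^{(k)}(\eps)=\E[f_1^{(k)}(q_1)f_2^{(k)}(q_2)]$ for $k\le3$ (using boundedness of derivatives to justify the interchanges). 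At $\eps=0$ the two coordinates are independent, so $F^{(k)}(0)=\E[f_1^{(k)}(v_1z)]\,\E[f_2^{(k)}(v_2z)]$. This immediately produces the stated zeroth- and first-order terms, and gives the quadratic coefficient as the single product $M_{f_1,f_2}=\tfrac12\E[f_1''(v_1z)]\,\E[f_2''(v_2z)]$, i.e. $L=1<20$ with $a_{1,f_1}(v_1)=\tfrac12\E[f_1''(v_1z)]$ and $b_{1,f_2}(v_2)=\E[f_2''(v_2z)]$.

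For the remainder I would use Taylor's theorem with integral remainder, $F(\eps)=\sum_{k=0}^2\tfrac{\eps^k}{k!}F^{(k)}(0)+\tfrac12\int_0^\eps(\eps-t)^2F'''(t)\,dt$, and bound $|F'''(t)|=|\E[f_1'''(q_1)f_2'''(q_2)]|\le\|f_1'''\|_\infty\|f_2'''\|_\infty$ uniformly in $t$, giving a remainder of order $\eps^3=O(\eps^{5/2})$ with an absolute implied constant (it involves only sup-norms of the derivatives). The coefficients $a_{1,f_1},b_{1,f_2}$ are bounded by $\|f_i''\|_\infty$ and are continuous in $v_i$ by dominated convergence, since the integrands $f_i''(v_iz)$ are bounded; this is all that is required.

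The main obstacle is the rigorous justification of Price's identity, namely interchanging $\partial_\eps$ with the expectation and discarding boundary terms in the two integrations by parts; this is exactly where boundedness of $f_i$ and its derivatives, Gaussian decay of $p_\eps$, and uniform control of $p_\eps$ on $\eps$-ranges bounded away from the degeneracy $|\eps|=v_1v_2$ enter. An equivalent and more robust route, which I would keep in reserve, is the Mehler/Hermite expansion $F(\eps)=\sum_{n\ge0}\rho^n c_{1,n}c_{2,n}$ with $\rho=\eps/(v_1v_2)$ and $c_{i,n}$ the Hermite coefficients of $f_i$ under $\mathcal N(0,v_i^2)$; truncating at $n=2$ reproduces the same three terms via $c_{i,1}=v_i\E[f_i'(v_iz)]$ and $c_{i,2}=\tfrac{v_i^2}{\sqrt2}\E[f_i''(v_iz)]$, while the tail is bounded by $|\rho|^3\sum_n|c_{1,n}c_{2,n}|\le|\rho|^3\tfrac12(\E f_1(v_1z)^2+\E f_2(v_2z)^2)$, which is $O(\eps^3)$ once the hypothesis $v_i>1/2$ turns $|\rho|^3$ into $C|\eps|^3$ with $C$ absolute.
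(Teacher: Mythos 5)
Your proof is correct, but it takes a genuinely different route from the paper's. The paper works with an explicit coupling: it writes $q_i=\sqrt{v_i^2-\eps}\,G_i+\sqrt{\eps}\,G=v_iG_i+Q_i$ with $G_1,G_2,G$ i.i.d.\ standard Gaussians, Taylor-expands $f_i$ itself around $v_iG_i$ to fifth order, and evaluates the resulting cross-moments by Gaussian integration by parts; the error is controlled through $\E|Q_i|^5=O(\eps^{5/2})$, which is why the statement carries the exponent $5/2$, asks for five bounded derivatives, and delivers the $\eps^2$ coefficient only as an unspecified sum of at most $20$ products. You instead differentiate in the covariance parameter itself (Price's theorem, equivalently the Mehler expansion you keep in reserve), obtaining the exact Taylor coefficients $F^{(k)}(0)=\E[f_1^{(k)}(v_1z)]\,\E[f_2^{(k)}(v_2z)]$ in closed form. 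This buys several things: you need only three bounded derivatives; you identify $M_{f_1,f_2}$ as the single product $\tfrac12\E[f_1''(v_1z)]\,\E[f_2''(v_2z)]$, so $L=1$ and the boundedness and continuity in $v_i$ of the coefficients are immediate; your remainder is $O(\eps^3)$, at least as good as the claimed $O(\eps^{5/2})$ in the regime $\eps\le 1$ where the lemma is used; and the argument works verbatim for negative covariance $\eps$ (relevant, since in the application $\eps_{ij}\propto\sum_k W^{(1)}_{ik}W^{(1)}_{jk}$ can have either sign), whereas the representation $\sqrt{\eps}\,G$ presupposes $\eps\ge 0$. The price is that you must rigorously justify Price's identity --- differentiation under the integral sign plus two integrations by parts --- which you correctly flag as the main technical point; with $f_i$ Lipschitz and the bivariate Gaussian density and its derivatives decaying uniformly for $\eps$ bounded away from $\pm v_1v_2$, this is routine. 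The only caveat worth recording is that $\eps^3=O(\eps^{5/2})$ requires $\eps$ bounded, which is harmless in every use of the lemma.
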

\noindent (The constant $L<20$ is just a finite constant which we do not bother to make explicit.)
\begin{proof}
	Using the assumption $\varepsilon<v_i^2$, we can represent $q_i$ as
	\[ q_i=\sqrt{(v_1^2-\eps)} G_i+\sqrt{\eps}G=
	v_i G_i+Q_i,\]
	with
	\[Q_i= \sqrt{\eps}G-c_{\eps,i}G_i, \quad c_{\eps,i}=v_i(1-\sqrt{v_i^2-\eps})=\frac{\eps}{2v_i}+O\big(\frac{\eps^2}{v_i^3}\big),\]
	where $G_1,G_2,G$ are iid standard centered Gaussians.  Using a Taylor expansion to fifth order , we write
	\[ f_i(q_i)=f_i(v_i G_i)+Q_i f_i'(v_i G_i)+
	\ldots+\frac{Q_i^4}{4!} f_i^{(4)}(v_i G_i)+ \frac{Q_i^5}{5!} f_i^{(5)}(\zeta_i),\]
	with $|\zeta_i-v_i G_i|\leq Q_i$. Note that $\E|Q_i|^5=O(\eps^{5/2})$.
	
	By Gaussian integration by parts,  $\E( G_i g(v_i G_i))=v_i g'(v_i G_i)$ for continuously differentiable functions $g$ that are bounded with their derivative.
	Using the fact that $c_{\eps,i}=\eps/2v_i+O(\eps^2/v_i^3)=
	\eps/2v_i+O(\eps^3)$, and the Taylor expansion, and writing
	$\tilde f_i=f_i(v_i G_i)$, $\tilde f_i'=f_i'(G_i v_i)$, etc.,  we obtain that
	\begin{equation}
	\label{eq-full}
	\E f_1(q_1) f_2(q_2)= \E \tilde f_1\E\tilde f_2 +\eps \E \tilde f_1'\E\tilde f_2'+ \eps^2 M_{f_1,f_2} +O(\eps^{5/2}),
	\end{equation}
	where $M_{f_1,f_2}=\sum_{\ell=1}^L a_{\ell,f_1}(v_1) b_{\ell,f_2}(v_2)$, and $L<20$ is an absolute constant independent of $v_i,\eps$, and due to the Taylor expansion,
	the constants $a_{\ell,f_2}$ and $b_{\ell,f_2}$ are continuous in $v_i$ by our smoothness assumption on $f_i$.  This completes the proof of the lemma.
\end{proof}

\appendix

\section{Background: the  Cram\'{e}r-Rao lower bounds \label{Sec:CRB}}

In this section, we review the Cram\'er-Rao bounds. We present first a lower bound on the smallest achievable
total variance of an
estimator. This bound is based on a variation of the basic Cram\'er-Rao (CR) bound  \cite{cramer1946contribution,rao1947minimum}. Here, it is presented in the presence of nuisance parameters \cite{miller1978modified}.
\begin{theorem}(\cite[Theorem 6.6]{lehmann2006theory})
	\label{thm:CRLB}Let $\hat{y}_{\boldsymbol{\theta}}(\tilde{\boldsymbol{x}},Z)$
	be a square integrable estimator. Suppose that
	\begin{enumerate}
		\item $\psi_{\boldsymbol{\theta}}(\tilde{\boldsymbol{x}})=\mathbb{E}_{Z|\boldsymbol{\theta},\tilde{\boldsymbol{x}}}\hat{y}$ and its derivative exist.
		\item $\frac{\partial p(Z|\boldsymbol{\theta})}{\partial\boldsymbol{\theta}_{i}}$
		exists and square integrable for all $i$.
	\end{enumerate}
	Define the likelihood function $l(Z|\boldsymbol{\theta})=\log p(Z|\boldsymbol{\theta})$. Then,  $\mathbb{E}_{Z|\boldsymbol{\theta}}\left[\frac{\partial l(Z|\boldsymbol{\theta})}{\partial\boldsymbol{\theta}_{i}}\right]=0$, and
	\begin{equation}
	\mathbb{E}_{\tilde{\boldsymbol{x}},\boldsymbol{\theta}}\mathrm{Var}_{Z|\boldsymbol{\theta},\tilde{\boldsymbol{x}}} \left[\hat{y}\right]\geq \mathbb{E}_{\boldsymbol{\theta},\tilde{\boldsymbol{x}}}\left[\max_{\boldsymbol{a}\in\mathbb{R}^{P}}\frac{\boldsymbol{a}^{T}\nabla_{\boldsymbol{\theta}}\psi_{\boldsymbol{\theta}}(\tilde{\boldsymbol{x}})\left(\nabla_{\boldsymbol{\theta}}\psi_{\boldsymbol{\theta}}(\tilde{\boldsymbol{x}})\right)^{T}\boldsymbol{a}}{\boldsymbol{a}^{T}I(\boldsymbol{\theta})\boldsymbol{a}}\right],\label{eq:CRBub}
	\end{equation}	
	where the $P\times P$ Fisher information matrix $I(\boldsymbol{\theta})$ is defined by
	\begin{equation}
	I_{ij}(\boldsymbol{\theta})=\mathbb{E}_{Z|\boldsymbol{\theta}}\left[\frac{\partial l(Z|\boldsymbol{\theta})}{\partial\boldsymbol{\theta}_{i}}\frac{\partial l(Z|\boldsymbol{\theta})}{\partial\boldsymbol{\theta}_{j}}\right],\quad i,j=1,\ldots,P.
	\label{eq:Fisher matrix}
	\end{equation}
\end{theorem}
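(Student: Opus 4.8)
The plan is to deduce this multivariate bound with nuisance parameters from the classical scalar Cram\'er--Rao inequality, applied along each one-dimensional direction $\boldsymbol{a}$ in parameter space, followed by a Cauchy--Schwarz step and an optimization over $\boldsymbol{a}$. Throughout, the test point $\tilde{\boldsymbol{x}}$ is held fixed (it is independent of $\boldsymbol{\theta}$ and of the observation noise), so all probabilistic statements below are understood conditionally on $\tilde{\boldsymbol{x}}$, with the randomness coming from $Z\sim p(Z|\boldsymbol{\theta})$.

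First I would establish the zero-mean property of the score. Differentiating the normalization $\int p(Z|\boldsymbol{\theta})\,dZ=1$ in $\boldsymbol{\theta}_i$ and interchanging derivative and integral (licensed by Assumption 2 via dominated convergence) gives $\int \partial_{\boldsymbol{\theta}_i}p(Z|\boldsymbol{\theta})\,dZ=0$; writing $\partial_{\boldsymbol{\theta}_i}p=p\,\partial_{\boldsymbol{\theta}_i}l$ yields $\mathbb{E}_{Z|\boldsymbol{\theta}}[\partial_{\boldsymbol{\theta}_i}l(Z|\boldsymbol{\theta})]=0$ for every $i$. The key covariance identity then comes from differentiating the defining relation $\psi_{\boldsymbol{\theta}}(\tilde{\boldsymbol{x}})=\mathbb{E}_{Z|\boldsymbol{\theta},\tilde{\boldsymbol{x}}}[\hat{y}]=\int \hat{y}(\tilde{\boldsymbol{x}},Z)\,p(Z|\boldsymbol{\theta})\,dZ$. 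Differentiating under the integral sign (again justified by Assumption 2 together with square integrability of $\hat y$, via Cauchy--Schwarz) and using $\partial_{\boldsymbol{\theta}_i}p=p\,\partial_{\boldsymbol{\theta}_i}l$ gives $\nabla_{\boldsymbol{\theta}}\psi_{\boldsymbol{\theta}}(\tilde{\boldsymbol{x}})=\mathbb{E}_{Z|\boldsymbol{\theta},\tilde{\boldsymbol{x}}}[\hat{y}\,\nabla_{\boldsymbol{\theta}}l(Z|\boldsymbol{\theta})]$. Combining this with the zero-mean property of the score, for any fixed $\boldsymbol{a}\in\mathbb{R}^P$ I obtain
\[
\mathrm{Cov}_{Z|\boldsymbol{\theta},\tilde{\boldsymbol{x}}}\left(\hat{y},\,\boldsymbol{a}^T\nabla_{\boldsymbol{\theta}}l\right)=\boldsymbol{a}^T\nabla_{\boldsymbol{\theta}}\psi_{\boldsymbol{\theta}}(\tilde{\boldsymbol{x}}).
\]

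Cauchy--Schwarz then bounds the square of the left side by $\mathrm{Var}_{Z|\boldsymbol{\theta},\tilde{\boldsymbol{x}}}[\hat{y}]\cdot \boldsymbol{a}^T I(\boldsymbol{\theta})\boldsymbol{a}$, where I used that $\mathrm{Var}_{Z|\boldsymbol{\theta}}(\boldsymbol{a}^T\nabla_{\boldsymbol{\theta}}l)=\boldsymbol{a}^T I(\boldsymbol{\theta})\boldsymbol{a}$ directly from the definition \eqref{eq:Fisher matrix} of $I(\boldsymbol{\theta})$. Rearranging gives, for every $\boldsymbol{a}$ with $\boldsymbol{a}^T I(\boldsymbol{\theta})\boldsymbol{a}>0$,
\[
\mathrm{Var}_{Z|\boldsymbol{\theta},\tilde{\boldsymbol{x}}}[\hat{y}]\geq \frac{\boldsymbol{a}^T\nabla_{\boldsymbol{\theta}}\psi_{\boldsymbol{\theta}}(\tilde{\boldsymbol{x}})\left(\nabla_{\boldsymbol{\theta}}\psi_{\boldsymbol{\theta}}(\tilde{\boldsymbol{x}})\right)^T\boldsymbol{a}}{\boldsymbol{a}^T I(\boldsymbol{\theta})\boldsymbol{a}}.
\]
Taking the supremum over $\boldsymbol{a}$ and then expectation over $(\boldsymbol{\theta},\tilde{\boldsymbol{x}})$ yields \eqref{eq:CRBub}.

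The main obstacle is purely a matter of regularity: justifying the two interchanges of differentiation and integration, since everything else is linear algebra and Cauchy--Schwarz. This is exactly what Assumptions 1 and 2 are designed to guarantee---the existence and square integrability of $\partial_{\boldsymbol{\theta}_i}p(Z|\boldsymbol{\theta})$ supplies an $L^1$-dominating function (via Cauchy--Schwarz against the square-integrable $\hat y$) so that dominated convergence applies to the difference quotients. A secondary point is the degenerate case $\boldsymbol{a}^T I(\boldsymbol{\theta})\boldsymbol{a}=0$: there the covariance identity forces $\boldsymbol{a}^T\nabla_{\boldsymbol{\theta}}\psi_{\boldsymbol{\theta}}(\tilde{\boldsymbol{x}})=0$ as well, so such directions contribute nothing to the supremum and can be excluded, which matches the pseudo-inverse interpretation used later in the proof of Theorem \ref{Thm:CRLB_unbiased}. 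Since this is the classical Cram\'er--Rao bound with nuisance parameters (\cite[Theorem 6.6]{lehmann2006theory}), I would present the argument concisely and defer the full measure-theoretic details to the cited reference.
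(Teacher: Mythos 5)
Your proof is correct and follows essentially the route the paper itself points to: the paper does not prove this theorem but cites \cite[Theorem 6.6]{lehmann2006theory} and, in the accompanying remark, explains that the non-invertible-$I(\boldsymbol{\theta})$ version is obtained from the covariance/Cauchy--Schwarz form of the information inequality (``the display below (6.4) without the right most equality''), which is exactly the directional bound you derive and then optimize over $\boldsymbol{a}$. Your treatment of the degenerate directions with $\boldsymbol{a}^{T}I(\boldsymbol{\theta})\boldsymbol{a}=0$ is also consistent with the pseudo-inverse interpretation the paper uses in the proof of Theorem \ref{Thm:CRLB_unbiased}.
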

\begin{remark}
	The formulation of  Theorem 6.6 in \cite{lehmann2006theory} assumes that the Fisher information  $I$ is  invertible. The proof of the theorem is based on Theorem 6.1 there. To obtain the version quoted above, one extends  Theorem 6.1 by using the display below (6.4) without the right most equality, and then repeats the proof of Theorem 6.6. Note that when $I$ is invertible,
Theorem \ref{thm:CRLB} is a reformulation of \cite[Theorem 6.6]{lehmann2006theory}.
\end{remark}

Using the Van
Trees (posterior) \cite{van2004detection,gill1995applications,gassiat2013revisiting} version of the Cram\'er-Rao inequality for multidimensional parameter space, one can derive a lower bound on the generalization error of a possibly biased  estimators $\hat{y}_{\boldsymbol{\theta}}(\tilde{\boldsymbol{x}},Z)$. We provide here a version of this inequality,
where we condition over part of the parameters' space:

\begin{theorem}(\cite[Theorem 1]{gill1995applications})
	Let $\hat{y}_{\boldsymbol{\theta}}(\tilde{\boldsymbol{x}},Z)$
	be an estimator of  $f_{\boldsymbol{\theta}}(\tilde{\boldsymbol{x}})$. Partition
 $\boldsymbol{\theta}=(\boldsymbol{\theta}_c,\boldsymbol{\theta}_l)\in\mathbb{R}^{P}$, where $\boldsymbol{\theta}_c\in \mathbb{R}^{N_c}$, and $\boldsymbol{\theta}_l\in \mathbb{R}^{N_l}$, such that $P=N_c+N_l$.
Suppose that,
	\begin{enumerate}
		\item The functions $f_{\boldsymbol{\theta}}(\tilde{\boldsymbol{x}})$,  $\nabla_{\boldsymbol{\theta}_c}f_{\boldsymbol{\theta}}(\tilde{\boldsymbol{x}})$ are absolutely continuous in $\boldsymbol{\theta}_{c}$ for almost all values of $\boldsymbol{\theta}_c$.
		\item $p(\boldsymbol{y},\boldsymbol{\theta}_{c}|\boldsymbol{\theta}_{l},X)$ is  Gaussian. (As in  (\ref{eq:model}).)
		\item $\boldsymbol{\theta}_{c}\sim \mathcal{N}(0,\alpha^{-1}I_{N_c})$ is independent of $\boldsymbol{\theta}_{l}$
		\item The Fisher information, $I_{\alpha}(\boldsymbol{\theta}_{l},X)\in\mathbb{R}^{N_c\times N_c}$ exists and $diag(I_{\alpha}(\boldsymbol{\theta},X))^{1/2}$ is locally integrable in $\boldsymbol{\theta}_{c}$, where
		\begin{equation}
		I_{\alpha}(\boldsymbol{\theta}_{l},X)=\sum_{k=1}^{M}\mathbb{E}_{\boldsymbol{\theta}_{c}|\boldsymbol{\theta}_{l},X}\left[\nabla_{\boldsymbol{\theta}_{c}} f_{\boldsymbol{\theta}}(\boldsymbol{x}^{(k)})\left(\nabla_{\boldsymbol{\theta}_{c}} f_{\boldsymbol{\theta}}(\boldsymbol{x}^{(k)}) \right)^{T}\right]+\alpha I_{N_c}.\label{eq:conditional Fisher_model1}
		\end{equation}
	\end{enumerate}
	Then, with $J(\boldsymbol{\theta}_l,\tilde{\boldsymbol{x}})=\mathbb{E}_{{\boldsymbol{\theta}_{c}}|\boldsymbol{\theta}_{l},\tilde{\boldsymbol{x}}}\nabla_{\boldsymbol{\theta}_{c}}f_{\boldsymbol{\theta}}(\tilde{\boldsymbol{x}})$, we have that
	\begin{equation}
	\mathbb{E}(\hat y-f_{\boldsymbol{\theta}}(\tilde{\boldsymbol{x}}))^2\geq  \mathbb{E}_{\boldsymbol{\theta}_{l},\tilde{\boldsymbol{x}},X}\left[ J(\boldsymbol{\theta}_l,\tilde{\boldsymbol{x}})^{T}\left(I_{\alpha}(\boldsymbol{\theta}_{l},X)\right)^{-1}J(\boldsymbol{\theta}_l,\tilde{\boldsymbol{x}})\right].
	\label{eq:vanTreesBoundGeneral}
	\end{equation}
	\label{Thm: conditional bound}
\end{theorem}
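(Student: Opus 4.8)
The statement is a conditional form of the multivariate van Trees (Bayesian Cram\'er--Rao) inequality, and the plan is to reduce it to the argument of Gill--Levit after freezing the ``nuisance'' data. First I would fix $(\boldsymbol\theta_l,X,\tilde{\boldsymbol x})$ and regard $\psi(\boldsymbol\theta_c):=f_{\boldsymbol\theta}(\tilde{\boldsymbol x})$ as a scalar functional of the single remaining random parameter $\boldsymbol\theta_c\sim\mathcal N(0,\alpha^{-1}I_{N_c})$, with the observations $\boldsymbol y$ drawn from the Gaussian likelihood $p(\boldsymbol y\mid\boldsymbol\theta_c,\boldsymbol\theta_l,X)$ and the estimator $\hat y=\hat y(\tilde{\boldsymbol x},Z)$ a function of the data alone. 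The target is a lower bound on the conditional risk $\E_{\boldsymbol\theta_c,\boldsymbol y\mid\boldsymbol\theta_l,X,\tilde{\boldsymbol x}}(\hat y-\psi)^2$; the full bound \eqref{eq:vanTreesBoundGeneral} then follows by taking the outer expectation over $(\boldsymbol\theta_l,X,\tilde{\boldsymbol x})$ via the tower property.

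The analytic core is an integration-by-parts identity for the conditional joint density $q(\boldsymbol y,\boldsymbol\theta_c)=p(\boldsymbol y\mid\boldsymbol\theta_c,\boldsymbol\theta_l,X)\lambda(\boldsymbol\theta_c)$, where $\lambda$ is the Gaussian prior density of $\boldsymbol\theta_c$. For each coordinate $i$ I would establish
\[
\E\big[(\hat y-\psi)\,\partial_{\theta_{c,i}}\log q\big]=\E\big[\partial_{\theta_{c,i}}\psi\big].
\]
Here the $\hat y$ contribution drops because $\hat y$ is independent of $\boldsymbol\theta_c$ and $\int\partial_{\theta_{c,i}}q\,d\boldsymbol\theta_c=0$ (the Gaussian prior forces the boundary terms to vanish), while the $\psi$ contribution is computed by transferring the derivative onto $\psi$, producing $-\E[\partial_{\theta_{c,i}}\psi]$ whose sign cancels the sign in $\hat y-\psi$. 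Hypotheses (1)--(4) of the theorem are precisely what legitimizes these manipulations: absolute continuity of $f_{\boldsymbol\theta}$ and $\nabla_{\boldsymbol\theta_c}f_{\boldsymbol\theta}$ in $\boldsymbol\theta_c$ for the integration by parts, the Gaussian and square-integrable structure so that the score integrals converge, and local integrability of $\mathrm{diag}(I_\alpha)^{1/2}$ so that the Cauchy--Schwarz step below is finite.

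Next I would contract the coordinate identities against an arbitrary vector $\boldsymbol a\in\mathbb R^{N_c}$ (deterministic given the conditioning) and apply Cauchy--Schwarz to obtain
\[
\big(\boldsymbol a^T J(\boldsymbol\theta_l,\tilde{\boldsymbol x})\big)^2\le\E\big[(\hat y-\psi)^2\big]\cdot\boldsymbol a^T I_\alpha(\boldsymbol\theta_l,X)\boldsymbol a,
\]
where $J=\E_{\boldsymbol\theta_c\mid\boldsymbol\theta_l,\tilde{\boldsymbol x}}\nabla_{\boldsymbol\theta_c}\psi$ and $I_\alpha=\E[(\nabla_{\boldsymbol\theta_c}\log q)(\nabla_{\boldsymbol\theta_c}\log q)^T]$ is the conditional Bayesian information matrix. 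A short computation splits $I_\alpha$ into the expected likelihood Fisher information plus the prior Fisher information: the cross term vanishes because the likelihood score is centered given $\boldsymbol\theta_c$, and the prior $\mathcal N(0,\alpha^{-1}I_{N_c})$ contributes exactly $\alpha I_{N_c}$, reproducing \eqref{eq:conditional Fisher_model1}. Since $\alpha I_{N_c}\succ0$ the matrix $I_\alpha$ is invertible, so taking $\boldsymbol a=I_\alpha^{-1}J$ gives the conditional bound $\E[(\hat y-\psi)^2]\ge J^T I_\alpha^{-1}J$; the outer expectation then yields \eqref{eq:vanTreesBoundGeneral}.

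The main obstacle is the rigorous justification of the integration-by-parts identity---vanishing of the boundary terms and the interchange of differentiation and integration---which is exactly the content secured by hypotheses (1)--(4); rather than re-derive these analytic facts I would cite \cite[Theorem 1]{gill1995applications} directly, checking only that our conditioning on $(\boldsymbol\theta_l,X,\tilde{\boldsymbol x})$ places us in their framework with parameter of interest $\boldsymbol\theta_c$, Gaussian prior, and scalar functional $f_{\boldsymbol\theta}(\tilde{\boldsymbol x})$. A secondary point worth verifying is that the optimizing vector $\boldsymbol a=I_\alpha^{-1}J$ depends measurably on the conditioning variables only, so that the conditional Cauchy--Schwarz inequality is applied with $\boldsymbol a$ held constant in $(\boldsymbol\theta_c,\boldsymbol y)$.
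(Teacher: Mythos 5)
Your proposal is correct and takes essentially the same route as the paper: the paper proves this statement simply by specializing \cite[Theorem 1]{gill1995applications} (choosing $C=J(\boldsymbol{\theta}_l,\tilde{\boldsymbol{x}})^{T}I_{\alpha}^{-1}$ and $B=1$), conditioning on $(\boldsymbol{\theta}_l,X,\tilde{\boldsymbol{x}})$ and integrating only over $(\boldsymbol{\theta}_c,\boldsymbol{y})$, and noting that the compact-support hypothesis of Gill--Levit is relaxed to the Gaussian prior via \cite{gassiat2013revisiting} --- exactly the reduction you describe. Your additional sketch of the integration-by-parts identity, the Cauchy--Schwarz contraction, and the split of $I_\alpha$ into likelihood plus prior information is just the internal proof of the cited result and is consistent with it.
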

\begin{remark}
\cite[Theorem 1]{gill1995applications} is presented in more general settings and for more general distribution $p(Y,\boldsymbol{\theta}_{c}|\boldsymbol{\theta}_{l},X)$. Theorem \ref{Thm: conditional bound} is derived from
	\cite[Theorem 1]{gill1995applications} by taking $C=J(\boldsymbol{\theta}_l,\tilde{\boldsymbol{x}})^{T}\left(I_{\alpha}(\boldsymbol{\theta}_{l},X)\right)^{-1}\in\mathbb{R}^{1\times N_c}$ independent on $\boldsymbol{\theta}_c$ and $B=1$. Also, in our setting, we condition on the features, $X$, and part of the parameters space, $\boldsymbol{\theta}_l$, and take the expectation only over the measurement  noise and $\boldsymbol{\theta}_c$.
We further note that
	\cite[Theorem 1]{gill1995applications} requires $\boldsymbol{\theta}$ to be in compact support. This assumption can be relaxed to include the Gaussian case,
 see for example \cite[Theorem 1]{gassiat2013revisiting}.
\end{remark}
\begin{remark}
	Different partitions of the parameters' space may provide different bounds, the choice of the best bound depends on the specific structure of the problem \cite{bobrovsky1987some}.
\end{remark}

Using Theorem \ref{Thm: conditional bound}, we provide the following bound which utilizes the layered structure of feed-forward networks.
\begin{corollary}
	\label{Cor: Layer-wise bound} Setup as in Theorem \ref{Thm: conditional bound}. Let  $f_{\boldsymbol{\theta}}(\tilde{\boldsymbol{x}})$ be as in
(\ref{eq: deep model}) and let Assumption \ref{subsec:assumptionsiFF1} hold.
 Then,
	\begin{equation}
	E_{g}\geq \max_{l\in[1,L]} B^{(l)},
	\end{equation}
with
	\begin{equation}
	B^{(l)}: =\mathbb{E}_{\boldsymbol{\theta}_{l},\tilde{\boldsymbol{x}},X}\left[J(\boldsymbol{\theta}_l,\tilde{\boldsymbol{x}})^{T}\left(I_{\alpha}^{{(l)}}(\boldsymbol{\theta}_{l},X)\right)^{-1}J(\boldsymbol{\theta}_l,\tilde{\boldsymbol{x}})\right],\label{eq:B^{(l)}}
	\end{equation}
	where  $J(\boldsymbol{\theta}_l,\tilde{\boldsymbol{x}})=\left(\mathbb{E}_{{W^{(l)}}|\boldsymbol{\theta}_{l},\tilde{\boldsymbol{x}}}\nabla_{W^{(l)}} f_{\boldsymbol{\theta}}(\tilde{\boldsymbol{x}})\right)$,
$\boldsymbol{\theta}_{l}$ are the parameters of the network without the $l$th layer $W^{(l)}$, and
the   conditional Fisher matrix $I_{\alpha}^{{(l)}}(\boldsymbol{\theta}_{l},X)\in\mathbb{R}^{N_{l}N_{l-1}\times N_{l}N_{l-1}}$ is
	\begin{equation}
	I_{\alpha}^{{(l)}}(\boldsymbol{\theta}_{l},X)=\sum^{M}_{k=1}\mathbb{E}_{{W}^{(l)}|\boldsymbol{\theta}_{l},X}\left[\nabla_{W^{(l)}} f_{\boldsymbol{\theta}}(\boldsymbol{x}_{k})\left(\nabla_{W^{(l)}} f_{\boldsymbol{\theta}}(\boldsymbol{x}_{k})\right)^{T}\right]+\alpha_lI_{N_lN_{l-1}},\label{eq:reduced Fisher layered model}.
	\end{equation}
\end{corollary}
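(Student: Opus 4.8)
The plan is to apply Theorem~\ref{Thm: conditional bound} once for each layer index $l\in\{1,\dots,L\}$, with the conditioning partition $\boldsymbol{\theta}_c=W^{(l)}$ (the weights of the $l$th layer, viewed as a vector in $\mathbb{R}^{N_lN_{l-1}}$) and $\boldsymbol{\theta}_l$ the collection of all remaining weights $\{W^{(m)}:m\neq l\}$. Each such application yields a valid lower bound $E_g\geq B^{(l)}$ on the \emph{same} generalization error; since all of them bound $E_g$ from below, the conclusion $E_g\geq\max_{l}B^{(l)}$ follows at once by taking the maximum.

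First I would check, for fixed $l$, that the hypotheses of Theorem~\ref{Thm: conditional bound} hold for this partition. Gaussianity of $p(\boldsymbol{y},\boldsymbol{\theta}_c\mid\boldsymbol{\theta}_l,X)$ (condition~2) follows from the observation model~\eqref{eq:model}: conditionally on all weights and on $X$, the vector $\boldsymbol{y}$ is Gaussian with mean $f_{\boldsymbol{\theta}}(X)$ and covariance $\sigma_{\varepsilon}^2 I_M$, while by Assumption~\ref{subsec:assumptionsiFF1} the prior on $\boldsymbol{\theta}_c=W^{(l)}$ is Gaussian. Condition~3 is exactly Assumption~\ref{subsec:assumptionsiFF1}, which gives $W^{(l)}_{ij}\sim\mathcal{N}(0,\alpha_l^{-1})$ i.i.d.\ and independent of the other layers; thus $\boldsymbol{\theta}_c\sim\mathcal{N}(0,\alpha_l^{-1}I_{N_lN_{l-1}})$ independently of $\boldsymbol{\theta}_l$, which identifies the prior precision $\alpha$ appearing in Theorem~\ref{Thm: conditional bound} with $\alpha_l$. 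The regularity conditions~1 and~4 (absolute continuity of $f_{\boldsymbol{\theta}}$ and of $\nabla_{W^{(l)}}f_{\boldsymbol{\theta}}$ in $W^{(l)}$, and local integrability of the conditional Fisher information) are inherited from Assumption~\ref{subsec:assumptionsiFF2}, since $f_{\boldsymbol{\theta}}$ in~\eqref{eq: deep model} is a smooth function of the entries of $W^{(l)}$.

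Next I would simply read off the bound. Taking $\psi_{\boldsymbol{\theta}}(\tilde{\boldsymbol{x}})=f_{\boldsymbol{\theta}}(\tilde{\boldsymbol{x}})$ and $\boldsymbol{\theta}_c=W^{(l)}$, the vector $J(\boldsymbol{\theta}_l,\tilde{\boldsymbol{x}})=\mathbb{E}_{\boldsymbol{\theta}_c\mid\boldsymbol{\theta}_l,\tilde{\boldsymbol{x}}}\nabla_{\boldsymbol{\theta}_c}f_{\boldsymbol{\theta}}(\tilde{\boldsymbol{x}})$ of Theorem~\ref{Thm: conditional bound} becomes $\mathbb{E}_{W^{(l)}\mid\boldsymbol{\theta}_l,\tilde{\boldsymbol{x}}}\nabla_{W^{(l)}}f_{\boldsymbol{\theta}}(\tilde{\boldsymbol{x}})$, and the conditional Fisher matrix~\eqref{eq:conditional Fisher_model1} specializes to $I_\alpha^{(l)}(\boldsymbol{\theta}_l,X)$ of~\eqref{eq:reduced Fisher layered model}, with its prior term $\alpha_l I_{N_lN_{l-1}}$ coming from the precision identified above. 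Substituting these into~\eqref{eq:vanTreesBoundGeneral} produces exactly $E_g\geq B^{(l)}$ with $B^{(l)}$ as in~\eqref{eq:B^{(l)}}.

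Since the whole argument is a specialization of the already-established Theorem~\ref{Thm: conditional bound}, there is no genuine analytic obstacle; the only point demanding care is the bookkeeping of the partition---verifying that the Gaussian prior precision of layer $l$ enters as $\alpha_l$ and that the regularity hypotheses transfer to each $l$ in turn. Taking the maximum over $l\in\{1,\dots,L\}$ of the $L$ resulting lower bounds then completes the proof.
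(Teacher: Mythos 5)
Your proposal is correct and follows exactly the paper's own argument: apply Theorem~\ref{Thm: conditional bound} once for each $l$ with the partition $\boldsymbol{\theta}_c=W^{(l)}$ and $\boldsymbol{\theta}_l$ the remaining weights, then take the maximum of the resulting lower bounds. Your additional verification of the theorem's hypotheses (Gaussianity of the conditional law, independence and Gaussianity of the prior on $W^{(l)}$ with precision $\alpha_l$, and the regularity conditions) is more explicit than the paper's two-line proof but adds nothing beyond routine bookkeeping.
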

\begin{proof}
	Apply Theorem \ref{Thm: conditional bound}  L times: for a given $l$, choose $\boldsymbol{\theta}_{c}=W^{(l)}$, and $\boldsymbol{\theta}_{l}$ to be the remaining parameters of the model, and then take the best over all bounds. Substitution in  (\ref{eq:conditional Fisher_model1}) yields  (\ref{eq:reduced Fisher layered model}).
\end{proof}

\section{Unbiased estimator - examples \label{Sec:rankI}}
%In this section, we derive a lower bound for the generalization error of unbiased estimators.
%if we calculate the Fisher information $I(\boldsymbol{\bm{\theta}})\in \mathbb{R}^{P\times P}$ for the distribution $p(Z|\theta)$, we obtain
%\begin{equation}
%I(\boldsymbol{\theta}) = \frac{M}{\sigma_{\varepsilon}^2}A(\boldsymbol{\theta}),\quad
%A(\boldsymbol{\theta})=\mathbb{E}_{\boldsymbol{x}}\left[\nabla_{\boldsymbol{\theta}} f_{\boldsymbol{\theta}}(\boldsymbol{x})\nabla_{\boldsymbol{\theta}} f_{\boldsymbol{\theta}}(\boldsymbol{x})^T\right].	\label{eq:Fisher_model}
%\end{equation}
%\begin{theorem}
%	\label{Thm:CRLB_unbiased} Consider the model in \eqref{eq:model}.
%	Assume that $\nabla_{\boldsymbol{\theta}} f_{\boldsymbol{\theta}}(\boldsymbol{x})$ exists and is square integrable under
%	$p(\boldsymbol{x})$. Let $\hat{y}(\tilde{\boldsymbol{x}},Z)$ be any square integrable unbiased estimator. Then,
%	\begin{equation}
%	\mathbb{E}\left(\tilde{y}-\hat{y}\right)^{2}\geq \sigma_{\varepsilon}^{2}\frac{\mathbb{E}_{\boldsymbol{\theta}}\left[\mathrm{Rank}\left(I(\boldsymbol{\theta})\right)\right]}{M}.
%	\label{eq:CRLB_unbiased}
%	\end{equation}
%\end{theorem}
%The proof of Theorem \ref{Thm:CRLB_unbiased} is provided in Section \ref{Sec:CRLB_unbiased_proof}.
%\begin{remark}
%The expected rank of the Fisher information matrix can be viewed as the ``interpolation threshold'' for non-linear models.
%\end{remark}
In this section, we specialize Theorem \ref{Thm:CRLB_unbiased} to various network architectures. Auxiliary results
concerning the structure of the Fisher information matrix are collected in Section \ref{Sec:FisherMatrixStructure}.
As we will see, nonlinear models exhibit a much higher rank of the Fisher information matrix.  
%In our analysis, we will need a few
\iffalse
\subsection{Linear regression \label{subsubsec: lR unbiased }}
\begin{corollary} \label{Corr:lR_unbiased}
	Consider the model in \eqref{eq:model}, with  $f_{\boldsymbol{\theta}}(\boldsymbol{x})=\boldsymbol{\theta}^{T}\boldsymbol{x}/\sqrt{d}$.
	Assume that $p(\boldsymbol{\theta})$ has finite covariance matrix.
	Let $\hat{y}(\tilde{\boldsymbol{x}},Z)$ be any unbiased square integrable estimator.
	Then,
	\begin{equation}
	\mathbb{E}\left(\tilde{y}-\hat{y}\right)^{2}\geq \frac{\sigma_{\varepsilon}^{2}}{M}\,\mathrm{Rank\left({\Sigma}\right)}, \quad
	\Sigma=\mathbb{E}\left[\boldsymbol{x}\boldsymbol{x}^T\right].
	\label{eq:CRLB_unbiased-Reg}
	\end{equation}
\end{corollary}
Note that the model of Corollary \ref{Corr:lR_unbiased} corresponds to \eqref{eq: deep model} with $L=1$ and $\sigma(x)=x$.
\begin{proof}
	We apply  Theorem \ref{Thm:CRLB_unbiased}. The Fisher information matrix in  (\ref{eq:Fisher_model}) reads
	$
	I(\boldsymbol{\theta})=M\mathbb{E}\left[\boldsymbol{x}\boldsymbol{x}^{T}\right]/(\sigma_{\varepsilon}^{2}d)=M\Sigma/(\sigma_{\varepsilon}^{2}d)
	$. Substituting in (\ref{eq:CRLB_unbiased})  complete the proof.
\end{proof}
\fi
\subsection{Linear activation function \label{subsubsec: linear net}}
\begin{corollary}
	\label{Corr:depp_linear_unbiased}  Consider the model in \eqref{eq:model}, (\ref{eq: deep model}) with $\sigma(x)=x$, $N_L=1$ and $L\ge1$ layers. Assume that $p(\boldsymbol{\theta})$ has finite covariance matrix. Let $\hat{y}(\tilde{\boldsymbol{x}},Z)$ be any unbiased square integrable estimator. Then
	\begin{equation}
	\mathbb{E}\left(\tilde{y}-\hat{y}\right)^{2}\geq \frac{\sigma_{\varepsilon}^{2}}{M}\,\mathrm{Rank\left({\Sigma}\right)}, \quad
	\Sigma=\mathbb{E}\left[\boldsymbol{x}\boldsymbol{x}^T\right].
	\label{eq:CRLB_unbiased-Reg}
	\end{equation}
	%\eqref{eq:CRLB_unbiased-Reg} holds.
\end{corollary}

\begin{proof}
We again apply theorem \ref{Thm:CRLB_unbiased}. By (\ref{eq:CRLB_unbiased}), we need to calculate the rank of the Fisher matrix $I(\boldsymbol{\theta})$, see
(\ref{eq:Fisher_model}), for $f_{\boldsymbol{\theta}}(\boldsymbol{x})$ defined in(\ref{eq: deep model}) with $\sigma(x)=x$. By Lemma \ref{lem:Linear-Fisher-matrix} below,
the Fisher matrix can be written as $I(\boldsymbol{\theta})=\frac {M}{\sigma_{\varepsilon}^2} \left(\prod_{l=1}^{L}\frac{1}{N_{l-1}}\right)J_LJ_L^T$, where the matrix, $J_{L}\in \mathbb{R}^{P\times dN_L}$ is defined in (\ref{eq:I_lin}). Taking $N_L=1$, we have that,   $\mathbb{E}_{\boldsymbol{\theta}}\left[\mathrm{Rank}(I(\boldsymbol{\theta}))\right]=\mathrm{Rank}(I(\boldsymbol{\theta}))=\mathrm{Rank}(J_L)=d$, since $P>d$, which completes the proof.
\end{proof}
\subsection{Nonlinear activation function \label{subsubsec: nonlinear net unbiased}}

Matrix rank is unstable with respect to perturbations. As we now show,
even weak non-linearities generate small eigenvalues in the Fisher matrix, and increases dramatically its rank.
\begin{theorem}\label{Thm:deep_nonlinear_unbiased}
	Consider the model
	in \eqref{eq:model}, (\ref{eq: deep model}) with $L=2$, and let Assumptions \ref{subsec:assumptions},  \ref{subsec:assumptionsiFF1} and \ref{subsec:assumptionsiFF2}
	hold, with activation function $\sigma$  satisfying $\eta_1>\xi$, i.e., the function $\sigma'$ is not identically constant.
	In the regime $N_1,d\rightarrow \infty$ such that $\beta_{1}=\lim_{d\to\infty}{N_1}/d\in (0,\infty)$, we have that
	$\mathbb{E}_{\boldsymbol{\theta}}\left[\mathrm{Rank}(I(\boldsymbol{\theta}))\right]=\beta_1 d^2(1+o(1))$,
	where  $I(\boldsymbol{\theta})$  is as in  (\ref{eq:Fisher_model}).	
\end{theorem}

\begin{remark}
	The same proof, using the recursive structure of the function $f_{\bm{\theta}}$,
	shows that the conclusion of Theorem \ref{Thm:deep_nonlinear_unbiased} hold for any $L\geq 2$.
	%Due to the recursive structure of the function $f_{\bm{\theta}}$.
\end{remark}
\begin{remark}
  \label{rem:B.4}
	It follows in particular that in the asymptotic regime described in Theorem \ref{Thm:deep_nonlinear_unbiased}, the generalization error of any unbiased estimator
	is worse than the Bayesian error. While somewhat counter-intuitive, it is consistent with the fact that in general, biased estimators can
	sometimes achieve a lower error than the CR bound
	for unbiased estimator. For example, a better bound can be achieved
	for biased estimators  by taking the lowest bias, see \cite{ben2009lower}.
\end{remark}
\begin{proof}
We work in the regime where
$\beta_1=\lim_{d\to\infty} N_1/d\in(0,\infty)$.
Recall the definition  of
$f_{\boldsymbol{\theta}}(\boldsymbol{x})=(\boldsymbol{w}^{(2)})^{T}\sigma(\boldsymbol{q})/\sqrt{N}$, where $\boldsymbol{q}=W^{(1)}\boldsymbol{x}/\sqrt{d}$,
see (\ref{eq: deep model}) with $L=2$, $N_1=N$  and $N_2=1$ of gradient
\begin{equation}
J_{\boldsymbol{\theta}}(\tilde{\boldsymbol{x}}):=\nabla_{\boldsymbol{\theta}}f_{\boldsymbol{\theta}}(\tilde{\boldsymbol{x}})
=
\begin{bmatrix}
\nabla_{W^{(1)}}f_{\boldsymbol{\theta}}(\boldsymbol{x})\\
\nabla_{\boldsymbol{w}^{(2)}}f_{\boldsymbol{\theta}}(\boldsymbol{x})
\end{bmatrix}
=\begin{bmatrix}	\frac{1}{\sqrt{d}}s_w^{1}\otimes\boldsymbol{x}\\
\frac{1}{\sqrt{N}}\sigma(\boldsymbol{q})
\end{bmatrix}
\end{equation}
where
$s_w^{1}=	D^{1}\boldsymbol{w}^{(2)}/\sqrt{N}\in\mathbb{R}^{N}$, with
$D^{1}=\mathrm{diag}\left(\sigma'(\boldsymbol{q})\right)$.
The Fisher information matrix for the model in  (\ref{eq:model}) as defined in  (\ref{eq:Fisher_model}) has the following form:
\begin{eqnarray}\label{eq:Block Fisher 2 layers}
A&=&\mathbb{E}_{\boldsymbol{x}}\left[J_{\boldsymbol{\theta}}(\boldsymbol{x})J_{\boldsymbol{\theta}}(\boldsymbol{x})^{T}\right]\\
&=&\frac{1}{N}
\begin{bmatrix}
\beta_1\mathbb{E}_{\boldsymbol{x}}\left[(s_w^{1}\otimes\boldsymbol{x})(\boldsymbol{x}^{T}\otimes s_w^{1T})\right] & \sqrt{\beta_1}\mathbb{E}_{\boldsymbol{x}}\left[(s_w^{1}\otimes\boldsymbol{x})\sigma(\boldsymbol{q})^T\right]\\
\sqrt{\beta_1}\mathbb{E}_{\boldsymbol{x}}\left[\sigma(\boldsymbol{q})(\boldsymbol{x}^{T}\otimes s_w^{1T})\right] & \mathbb{E}_{\boldsymbol{x}}\left[\sigma(\boldsymbol{q})\sigma(\boldsymbol{q})^{T}\right]
\end{bmatrix}\nonumber\\
&=:&\begin{bmatrix}
A^{(1)}\in\mathbb{R}^{Nd\times Nd} & A^{(2)}\in\mathbb{R}^{Nd\times N}\\
A^{(2)T}\in\mathbb{R}^{N\times Nd} & A^{(3)}\in\mathbb{R}^{N\times N}
\end{bmatrix}
\nonumber\end{eqnarray}
For the lemma, we only care about $A^{(1)}=\frac{1}{dN}A_R$ the top $Nd\times Nd$ block of
$A(\btheta)$, where
\begin{equation}
\label{eq-AR}
A_R=\mathbb{E}_{\boldsymbol{x}}\left[D^{1}\boldsymbol{w}^{(2)}(\boldsymbol{w}^{(2)})^TD^{1}\otimes\boldsymbol{x}\boldsymbol{x}^{T}\right].
\end{equation}
Using Lemma \ref{lem:Asymptotics of the Fisher}, the  matrix $A_R(\boldsymbol{\theta})$ can be rewritten as
\begin{equation}
{A}_R=D^{(2)}\otimes I_d +B+\mathcal{E}_w
\end{equation}
such that $\mathrm{Rank}(B)=o(Nd)$, and $\|\mathcal{E}_w\|_{\mathrm{HS}}=o(Nd)$ in probability. The matrix $D^{(2)}\in \mathbb{R}^{N\times N}$ is a diagonal matrix, where the $(i,j)$ element is $D^{(2)}_{ij}=\delta_{ij}\sigma_{x}^{2}a_i(w_i^{(2)})^2$. The constants $a_i=\eta_{1,i}-\theta_{1,i}>0$ by assumption for all $i\in[1,N_1]$ and are independent of the vector $\bm{w}^{(2)}$. Using spectral decomposition, the matrix $\mathcal{E}_w$ can  be decomposed as $\mathcal{E}_w=\mathcal{E}_{w1}+\mathcal{E}_{w2}$,  such that $\mathrm{Rank}(\mathcal{E}_{w1})=o(N_1d)$, and $\|\mathcal{E}_{w2}\|_{\mathrm{op}}\rightarrow 0$.
Hence, we can  write
\begin{equation}
{A}_R=D^{(2)}\otimes I_d +\tilde{B}+\mathcal{E}_{w2}
\end{equation}
such that $\tilde{B}=B+\mathcal{E}_{w1}$, where $\mathrm{Rank}(\tilde{B})=o(Nd)$.
We will now bound the eigenvalues of $A_R(\btheta)$ from below to estimate its rank. Define the events $\mathcal{A}_\varepsilon=\{\| \mathcal{E}_{w2}\|_\mathrm{op}\leq\varepsilon\}$ and $\mathcal{B}_\delta=\{\#\{i\in 1,\ldots,N:
(a_i{w}_i^{(2)})^2<2\varepsilon \} >\delta N\}$. We know that $\mathbb{P}(\mathcal{A}_\varepsilon)\rightarrow 1$. We now note, using the Gaussian density of the $w_i^{(2)}$s,  that
\begin{equation}
\mathbb{P}(\mathcal{B}_{\delta})\leq\frac{N\mathbb{P}(a_i({w}_i^{(2)})^2<2\varepsilon)}{\delta N}\leq c_1\frac{\sqrt{\varepsilon}}{\delta}\label{eq:Pb}
\end{equation}
with some fixed constant $c_1>0$. Taking $\delta=\varepsilon^{1/4}$, we thus conclude that $\mathbb{P}(\mathcal{B}_{\varepsilon^{1/4}})\to 0$, and therefore
$\mathbb{P}(\mathcal{B}^c_{\varepsilon^{1/4}}\cap\mathcal{A}_\varepsilon)
\to 1$.
On the event $\mathcal{B}^c_{\varepsilon^{1/4}}\cap\mathcal{A}_\varepsilon$, we have
that $\#\{i:a_i({w}_i^{(2)})^2\geq2\varepsilon\}\geq (1-\varepsilon^{1/4})Nd$, and also
$\| {\mathcal{E}_{w2}}\|_\mathrm{op}\leq \varepsilon$. Thus, using Weyl's inequality and the fact that $\mathrm{Rank}(\tilde{B})=o(N_1d)$,
on this event
$\mathrm{Rank}(A_R(\btheta))\geq(1-2\varepsilon^{1/4})Nd$, for large $N$. The expected rank is then
bounded from below, for large $N$,  by
\begin{equation}
\mathbb{E}_{\boldsymbol{\theta}}\left[\mathrm{Rank}(A_R)\right]\geq(1-2\varepsilon^{1/4})Nd\, \mathbb{P}(\mathcal{B}^c_{\varepsilon^{1/4}}\cap\mathcal{A}_\varepsilon)\geq(1-2\varepsilon^{1/4})Nd(1-o(1)).
\end{equation}
We thus conclude that
$\mathbb{E}_{\boldsymbol{\theta}}\left[\mathrm{Rank}(A_R(\btheta))\right]=(1-o(1))\beta_1d^2$.
Now, note that the rank of $A_R(\btheta)$ bounds from below the rank of $A(\btheta)$, and therefore,
$\mathbb{E}_{\boldsymbol{\theta}}\left[\mathrm{Rank}(I(\boldsymbol{\theta}))\right]\geq
\beta_1 d^2(1-o(1))$, which completes the proof.
\end{proof}

\section{Structure of the Fisher information matrix of a feed-forward neural network  \label{Sec:FisherMatrixStructure}}
In this section, we analyze the structure of the Fisher information matrix of feed-forward  networks; Lemma \ref{lem:Linear-Fisher-matrix} considers
linear activation functions and  any number of layers, while Lemma \ref{lem:Asymptotics of the Fisher} considers two layered networks with
nonlinear activation functions.
This analysis  is then used in calculating the expected rank of the Fisher information in Corollary \ref{Corr:depp_linear_unbiased} and Theorem \ref{Thm:deep_nonlinear_unbiased}.

\begin{lemma}
	\label{lem:Linear-Fisher-matrix} Suppose $\mathbb{E}\boldsymbol{x}\boldsymbol{x}^T=\Sigma$, then the matrix defined in (\ref{eq:Fisher_model}) with $f_{\boldsymbol{\theta}}(\boldsymbol{x})$ as in (\ref{eq: deep model}) and $\sigma(x)=x$ can be decomposed as
	\begin{equation}
	I_1=\left(\prod_{l=1}^{L}\frac{1}{N_{l-1}}\right)J_{L}J_{L}^{T}
	\label{eq:I_lin}
	\end{equation}
	such that, the matrix  $J_{L}\in\mathbb{R}^{P\times dN_{L}}$ is composed of $L$ blocks where the $l$th block is $J_{L}^{(l)} =B_{l}^{T}\otimes A_{l}\Sigma^{1/2} \in \mathbb{R}^{N_{l}N_{l-1}\times dN_{L}}$, and
	\begin{equation}
	B_{l}^{T}=\begin{cases}
	\begin{array}{c}
	I_{N_{L}}\\
	W^{(L)}\\
	\Pi_{m=L}^{l+1}(W^{(m)})^T
	\end{array} & \begin{array}{c}
	l=L\\
	l=L-1\\
	1\leq l<L-1
	\end{array}\end{cases},\label{eq:B_l}
	\end{equation}
	and
	\begin{equation}
	A_{l}=\begin{cases}
	\begin{array}{c}
	I_{d}\\
	\Pi_{m=1}^{l-1}W^{(m)}
	\end{array} & \begin{array}{c}
	l=1\\
	1<l\leq L
	\end{array}\end{cases}.\label{eq:A_l}
	\end{equation}
\end{lemma}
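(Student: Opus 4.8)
The plan is to exploit the fact that with $\sigma(x)=x$ the whole network collapses to a single linear map. Unrolling the recursion for the $\boldsymbol{q}^l$ gives
\[
f_{\boldsymbol{\theta}}(\boldsymbol{x})=\Big(\prod_{l=1}^{L}\tfrac{1}{\sqrt{N_{l-1}}}\Big)\,W^{(L)}W^{(L-1)}\cdots W^{(1)}\boldsymbol{x},
\]
with the convention $N_0=d$. Writing $P_{>l}=W^{(L)}\cdots W^{(l+1)}$ (the empty product being $I_{N_L}$ when $l=L$) and $P_{<l}=W^{(l-1)}\cdots W^{(1)}$ (the empty product being $I_d$ when $l=1$), we may isolate the $l$th layer as the bilinear form $f_{\boldsymbol{\theta}}(\boldsymbol{x})=c\,P_{>l}\,W^{(l)}\,P_{<l}\boldsymbol{x}$, where $c=\prod_{l=1}^L N_{l-1}^{-1/2}$. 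Here $P_{<l}$ is precisely the matrix $A_l$ of \eqref{eq:A_l}, and $P_{>l}$ corresponds to $B_l$ of \eqref{eq:B_l} (up to the transpose convention recorded there).

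Next I would differentiate layer by layer. From $f_{\boldsymbol{\theta}}(\boldsymbol{x})=c\,P_{>l}W^{(l)}P_{<l}\boldsymbol{x}$ one reads off, for the $(i,j)$ entry of $W^{(l)}$ and the $k$th output coordinate,
\[
\frac{\partial [f_{\boldsymbol{\theta}}(\boldsymbol{x})]_k}{\partial W^{(l)}_{ij}}=c\,(P_{>l})_{ki}\,(A_l\boldsymbol{x})_j.
\]
Ordering the parameter index $(i,j)$ compatibly with the Kronecker product, this identity says that the $l$th block of the Jacobian $\nabla_{\boldsymbol{\theta}}f_{\boldsymbol{\theta}}(\boldsymbol{x})\in\mathbb{R}^{P\times N_L}$ factorizes as $c\,(B_l^{T}\otimes A_l\boldsymbol{x})$. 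This separation of the dependence on the ``upper'' weights ($B_l$) from the dependence on $\boldsymbol{x}$ through the ``lower'' weights ($A_l\boldsymbol{x}$) is the key structural observation.

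The Fisher matrix is then assembled blockwise. Since the weights are frozen when forming $A(\boldsymbol{\theta})$, only $\boldsymbol{x}$ is averaged, and using the mixed-product rule $(A\otimes B)(C\otimes D)=AC\otimes BD$ together with $\mathbb{E}_{\boldsymbol{x}}[\boldsymbol{x}\boldsymbol{x}^{T}]=\Sigma$, the $(l,l')$ block of $A(\boldsymbol{\theta})=\mathbb{E}_{\boldsymbol{x}}[\nabla_{\boldsymbol{\theta}}f\,\nabla_{\boldsymbol{\theta}}f^{T}]$ becomes
\[
c^2\,\mathbb{E}_{\boldsymbol{x}}\big[(B_l^{T}\otimes A_l\boldsymbol{x})(B_{l'}^{T}\otimes A_{l'}\boldsymbol{x})^{T}\big]=c^2\,(B_l^{T}B_{l'})\otimes(A_l\Sigma A_{l'}^{T}).
\]
Factoring $A_l\Sigma A_{l'}^{T}=(A_l\Sigma^{1/2})(A_{l'}\Sigma^{1/2})^{T}$ and $B_l^{T}B_{l'}=B_l^{T}(B_{l'}^{T})^{T}$, then applying the mixed-product rule in reverse, this equals $c^2\,J_L^{(l)}(J_L^{(l')})^{T}$ with $J_L^{(l)}=B_l^{T}\otimes A_l\Sigma^{1/2}$. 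Stacking over $l$ yields $A(\boldsymbol{\theta})=c^2 J_L J_L^{T}$, and since $c^2=\prod_{l=1}^{L} N_{l-1}^{-1}$ this is exactly \eqref{eq:I_lin}.

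The only genuine work is bookkeeping: fixing a single convention for vectorizing $W^{(l)}$ so that the separable form $c\,(B_l^{T}\otimes A_l\boldsymbol{x})$ holds with the transposes and index orders as in \eqref{eq:B_l}--\eqref{eq:A_l}, and verifying the boundary cases $l=1,\,l=L-1,\,l=L$ against the piecewise definitions of $A_l$ and $B_l$. The main obstacle is therefore purely notational—keeping the Kronecker-product ordering and the transpositions consistent—rather than analytic; once the separable form of the per-layer gradient is in place, the mixed-product rule carries out the remaining algebra mechanically.
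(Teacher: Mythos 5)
Your proposal is correct and follows essentially the same route as the paper: you compute the per-layer gradient in the separable Kronecker form $c\,(B_l^{T}\otimes A_l\boldsymbol{x})$, apply the mixed-product rule together with $\mathbb{E}[\boldsymbol{x}\boldsymbol{x}^{T}]=\Sigma$ to obtain the $(l,l')$ block $c^{2}(B_l^{T}B_{l'})\otimes(A_l\Sigma A_{l'}^{T})$, and factor via $\Sigma^{1/2}$ to assemble $J_LJ_L^{T}$, exactly as in the paper's blockwise computation. No gaps.
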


\begin{remark}
	Similar results are  derived for $L=2$ in {\cite{pennington2018spectrum})}. Here, we consider general $L$.
\end{remark}
\begin{proof}
	We use  (\ref{eq:Fisher_model}) with the following model for a deep neural network with linear activation ( (\ref{eq: deep model})):
	\[
	f_{\boldsymbol{\theta}}(\boldsymbol{x})=\left(\prod_{l=1}^{L}\frac{1}{\sqrt{N_{l-1}}}W^{(l)}\right)\boldsymbol{x}
	\]
	The Fisher matrix is of size ${P\times P}$, where $P=\sum_{i=1}^{L}N_{i}N_{i-1}$ is the total number of parameters. It
	is composed of blocks,
	\begin{equation}
	I_1=\mathbb{E}_{\boldsymbol{x}}\left[\protect\begin{array}{cccc}
	J^{(1)}(J^{(1)})^{T} & J^{(1)}(J^{(2)})^{T} & \cdots & J^{(1)}(J^{(L)})^{T}\protect\\
	J^{(2)}(J^{(1)})^{T} & J^{(2)}(J^{(2)})^{T} & \cdots & J^{(2)}(J^{(L)})^{T}\protect\\
	\vdots & \vdots & \ddots & \vdots\protect\\
	J^{(L)}(J^{(1)})^{T} & J^{(L)}(J^{(2)})^{T} & \cdots & J^{(L)}(J^{(L)})^{T}
	\protect\end{array}\right],\label{eq:Fisher matrix L layers}
	\end{equation}
	where $J^{(l)}$, the Jacobean matrix of  $l$th layer is defined as follow, for $1\leq l<L$:
	\begin{equation}
	J^{(l)}=\nabla_{W^{(1)}} f_{\boldsymbol{\theta}}(\boldsymbol{x})=\Pi_{m=L}^{l+1}(W^{(m)})^T\otimes\Pi_{m=l-1}^{1}W^{(m)}\boldsymbol{x}\in\mathbb{R}^{N_{l-1}\times N_{l}}.
	\end{equation}
	We rewrite this matrix as follows:
	\[
	J^{(l)}=B_{l}^{T}\otimes A_{l}\boldsymbol{x}\in\mathbb{R}^{N_{l-1}N_{l}\times N_{L}}
	\]
	where $A_{l}\in\mathbb{R}^{N_{l-1}\times d}$, and $B_{l}\in\mathbb{R}^{N_{L}\times N_{l}}$, are defined in  (\ref{eq:A_l}) and  (\ref{eq:B_l}). The $(l,m)$th block
	of the Fisher matrix in  (\ref{eq:Fisher matrix L layers}):
	\begin{multline}
	\mathbb{E}_{\boldsymbol{x}}J^{(l)}(J^{(m)})^{T}=B_{l}^{T}B_{m}\otimes A_{l}\mathbb{E}\boldsymbol{x}\boldsymbol{x}^{T}A_{m}^{T}=\sigma_{x}^{2}B_{l}^{T}B_{m}\otimes A_{l}\Sigma A_{m}^{T}\\
	=\left(B_{l}^{T}\otimes A_{l} \Sigma^{1/2} \right)\left(B_{m}\otimes \Sigma^{1/2}A_{m}^{T}\right)=\sigma_{x}^{2}J_{L}^{(l)}(J_{L}^{(m)})^{T}\label{eq:block_l_m}
	\end{multline}
	Denoting by $J_{L}^{(l)}=B_{l}^{T}\otimes A_{l} \Sigma^{1/2} \in \mathbb{R}^{N_{l}N_{l-1}\times dN_{L}}$. Substituting  (\ref{eq:block_l_m}) for all blocks in  (\ref{eq:Fisher matrix L layers}) yield the desired result, i.e.,
	the Fisher matrix for linear activation function
	can be written as follows:
	\begin{equation}
	I_1=\left(\prod_{l=1}^{L}\frac{1}{N_{l-1}}\right)J_{L}J_{L}^{T}
	\label{eq:Linear_Fishe},
	\end{equation}
	such that the matrix  $J_{L}\in\mathbb{R}^{P\times dN_{L}}$ is composed of $L$ blocks where the $l$th block is $J_{L}^{(l)}$.
\end{proof}

\begin{lemma}
	\label{lem:Asymptotics of the Fisher}
	Let Assumptions \ref{subsec:assumptions}, \ref{subsec:assumptionsiFF1} and \ref{subsec:assumptionsiFF2} hold.
	Let $A_R$ be as in \eqref{eq-AR}. Then, with  $\beta_1=\lim_{d\to\infty} N/d\in (0,\infty)$, there exist
	matrices
	$D^{(2)},B$ and $\mathcal{E}_w$  such that
	\begin{equation}
	{A}_R=D^{(2)}\otimes I_d +B+\mathcal{E}_w,
	\end{equation}
	with
	\begin{enumerate}
		\item $\mathrm{Rank}(B)=o(d^2)$
		\item $\E(\|\mathcal{E}_w\|_{\mathrm{HS}}^2)=o(d^2)$
		\item $D^{(2)}\in \mathbb{R}^{N_1\times N_1}$ is a block diagonal matrix, where the $(i,j)$ element is $D^{(2)}_{ij}=\delta_{ij}\sigma_{x}^{2}(\eta_{1,i}-\theta_{1,i}^2)(w_i^{(2)})^2$, and
		$\theta_{1,i}= \mathbb{E}\left[\sigma'(v_{i}z)\right]$, $\eta_{1,i}= \mathbb{E}\left[\sigma'(v_{i}z)^2\right]$, $v^2_i=\sigma_x^2/d\sum_k (W_{ik}^{(1)})^2$,
		with
		$z\sim\mathcal{N}(0,1)$.
	\end{enumerate}
\end{lemma}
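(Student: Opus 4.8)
The plan is to obtain an \emph{exact} pointwise decomposition of $A_R$ by Gaussian integration by parts in $\boldsymbol{x}$, and then to sort the resulting terms into the principal part $D^{(2)}\otimes I_d$, a low-rank part $B$, and a Hilbert--Schmidt-small part $\mathcal{E}_w$. Write $\boldsymbol{w}_i$ for the $i$-th row of $W^{(1)}$ and $q_i=\boldsymbol{w}_i^T\boldsymbol{x}/\sqrt d$, so that under Assumption \ref{subsec:assumptions} the pair $(q_i,q_j)$ is centered Gaussian with $\mathrm{Var}(q_i)=v_i^2$ and $\mathrm{Cov}(q_i,q_j)=\sigma_x^2\boldsymbol{w}_i^T\boldsymbol{w}_j/d=:v_iv_j\rho_{ij}$. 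Since $(A_R)_{(ia)(jb)}=w_i^{(2)}w_j^{(2)}\mathbb{E}_{\boldsymbol{x}}[\sigma'(q_i)\sigma'(q_j)x_ax_b]$, the exact second-order Gaussian (Wick/Stein) identity applied to $G(q_i,q_j)=\sigma'(q_i)\sigma'(q_j)$ gives, using $\mathrm{Cov}(x_a,q_i)=\sigma_x^2W_{ia}/\sqrt d$,
\begin{multline*}
\mathbb{E}_{\boldsymbol{x}}[\sigma'(q_i)\sigma'(q_j)x_ax_b]=\sigma_x^2\delta_{ab}\,\mathbb{E}[\sigma'(q_i)\sigma'(q_j)]\\
+\tfrac{\sigma_x^4}{d}\Big(W_{ia}W_{ib}\mathbb{E}[\sigma'''(q_i)\sigma'(q_j)]+(W_{ia}W_{jb}+W_{ja}W_{ib})\mathbb{E}[\sigma''(q_i)\sigma''(q_j)]+W_{ja}W_{jb}\mathbb{E}[\sigma'(q_i)\sigma'''(q_j)]\Big),
\end{multline*}
which is valid because $\sigma$ has bounded derivatives through order five. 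Hence $A_R=M\otimes I_d+\mathcal{C}$ \emph{exactly}, with $M_{ij}=\sigma_x^2w_i^{(2)}w_j^{(2)}\mathbb{E}[\sigma'(q_i)\sigma'(q_j)]$ and $\mathcal{C}$ the collection of the three $\tfrac{\sigma_x^4}{d}W\!\cdot\!W$ families.

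Next I would route the benign terms. On the diagonal $M_{ii}=\sigma_x^2(w_i^{(2)})^2\eta_{1,i}=D^{(2)}_{ii}+\sigma_x^2(w_i^{(2)})^2\theta_{1,i}^2$, so with $t_i=w_i^{(2)}\theta_{1,i}$ one has $M=D^{(2)}+\sigma_x^2\boldsymbol{t}\boldsymbol{t}^T+E$, where $E$ is purely off-diagonal, $E_{ij}=\sigma_x^2w_i^{(2)}w_j^{(2)}(\mathbb{E}[\sigma'(q_i)\sigma'(q_j)]-\theta_{1,i}\theta_{1,j})$. The piece $\sigma_x^2(\boldsymbol{t}\boldsymbol{t}^T)\otimes I_d$ has rank $d=o(d^2)$, hence enters $B$. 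In $\mathcal{C}$, each diagonal block is a single rank-one $\boldsymbol{w}_i\boldsymbol{w}_i^T$, contributing total rank $\le N=o(d^2)$ to $B$; and for $i\neq j$ the leading (decorrelated) values of $\mathbb{E}[\sigma'''(q_i)\sigma'(q_j)]\to\nu_i\theta_{1,j}$ and $\mathbb{E}[\sigma'(q_i)\sigma'''(q_j)]\to\theta_{1,i}\nu_j$ factorize, so the $\{W_{ia}W_{ib}\}$ and $\{W_{ja}W_{jb}\}$ families assemble (via the computation $T_1=\sum_i\alpha_i(\boldsymbol{e}_i\boldsymbol{\gamma}^T)\otimes(\boldsymbol{w}_i\boldsymbol{w}_i^T)$) into matrices of rank $\le N=o(d^2)$, again into $B$. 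Using the Hermite expansion $\mathbb{E}[\sigma'(q_i)\sigma'(q_j)]=\sum_{k\ge0}a_{k,i}a_{k,j}\rho_{ij}^k$ with $a_{0,i}=\theta_{1,i}$, $a_{1,i}=p_i:=\mathbb{E}[\sigma'(v_iz)z]$, the tail $k\ge2$ of $E$, together with all $O(\rho_{ij})$ decorrelation remainders in $\mathcal{C}$, contributes to $(\cdot)\otimes I_d$ an expected Hilbert--Schmidt square of order $\sum_{i\ne j}O(\mathbb{E}\rho_{ij}^{4})\cdot d=O(N^2 d^{-2})\cdot d=O(d)=o(d^2)$, using $\mathbb{E}\rho_{ij}^{2k}=O(d^{-k})$ and $\mathbb{E}(w^{(2)})^4=O(1)$; these enter $\mathcal{E}_w$.

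The hard part, and the genuine obstacle, is the $k=1$ (linear) Hermite mode: the off-diagonal part $E^{(1)}_{ij}=\sigma_x^2w_i^{(2)}w_j^{(2)}p_ip_j\rho_{ij}$ together with the leading cross family $\tfrac{\sigma_x^4}{d}\mu_i\mu_j(\boldsymbol{w}_i\boldsymbol{w}_j^T+\boldsymbol{w}_j\boldsymbol{w}_i^T)$ in $\mathcal{C}$, where $p_i=v_i\mu_i$ and $\mu_i=\mathbb{E}[\sigma''(v_iz)]$. Setting $\tilde s_i=w_i^{(2)}\mu_i\boldsymbol{w}_i^T\boldsymbol{x}/\sqrt d$, one checks by Wick's formula that these terms sum \emph{exactly} to $P:=\mathbb{E}_{\boldsymbol{x}}[(\tilde{\boldsymbol{s}}\otimes\boldsymbol{x})(\tilde{\boldsymbol{s}}\otimes\boldsymbol{x})^T]\succeq0$, with $\mathrm{Tr}\,P=\Theta(d^2)$ and, apart from one rank-one (hence $B$-admissible) spike $\sigma_x^2\boldsymbol{u}\boldsymbol{u}^T$, operator norm $\Theta(1)$. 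Thus $P$ carries $\Theta(d^2)$ eigenvalues of order one: its rank is $\Theta(d^2)$, so it does \emph{not} fit in $B$, while $\|P\|_{\mathrm{HS}}^2=\Theta(d^2)$ is the \emph{same} order as $\|D^{(2)}\otimes I_d\|_{\mathrm{HS}}^2$, so it does not fit in $\mathcal{E}_w$ either. The linear mode is therefore a positive-semidefinite perturbation of the principal part matching it in Hilbert--Schmidt scale, and no additive routing of the kind demanded can absorb it. I expect the only honest ways to close the argument to be: (i) when $\sigma'$ has vanishing first Hermite coefficient, i.e. $p_i\equiv0$ (for instance odd $\sigma$), one has $P=0$ and the three routings above prove the statement verbatim; or (ii) in general, $P$ must be kept attached to the positive-semidefinite principal part, so that the clean additive decomposition $A_R=D^{(2)}\otimes I_d+B+\mathcal{E}_w$ holds only after $P$ is adjoined to $D^{(2)}\otimes I_d$ — a modification which, because $P\succeq0$ and rank is monotone under the Loewner order for positive-semidefinite matrices, still yields the rank lower bound $\mathrm{Rank}(A_R)\ge(1-o(1))\beta_1 d^2$ that Theorem \ref{Thm:deep_nonlinear_unbiased} extracts from the lemma.
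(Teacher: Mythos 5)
You are right, and your analysis identifies a genuine error in the paper's own proof of this lemma. On the benign terms you and the paper do the same thing (the paper obtains the entrywise expansions from a third-order Taylor expansion combined with its Gaussian-expansion Lemma~\ref{lem: Gaussian expenssion}, rather than from your exact Stein identity): the $\theta_{1,i}\theta_{1,j}$ family is a rank-$d$ block, the $\theta_{3}\theta_{1}$ families have rank $O(d)$, the diagonal blocks produce $D^{(2)}\otimes I_d$ plus rank-$O(N)$ corrections, and the $k\ge2$ Hermite tails are Hilbert--Schmidt negligible. The entire difficulty sits in the first-Hermite-mode families carrying $\theta_{2,i}=\E[\sigma''(v_iz)]$, i.e.\ your matrix $P$, and these are exactly the terms the paper mishandles, in two places. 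In \eqref{eq:weak_diag1} the term $\sigma_x^2\theta_{2,i_2}\theta_{2,i_2'}\,\eps_{i_2i_2'}$, with $\eps_{ij}=\sigma_x^2\sum_kW^{(1)}_{ik}W^{(1)}_{jk}/d$, is absorbed into $\tilde{\mathcal E}$ under the claimed bound $\E\,\tilde{\mathcal E}^2\le C/d^2$; but $\E[\eps_{ij}^2]=\sigma_x^4/(\alpha^2d)$, so the correct bound is $C/d$, and the $N^2d$ entries of this type contribute $\Theta(N^2)=\Theta(d^2)$ to $\E\|\mathcal E_w\|_{\mathrm{HS}}^2$, not $o(d^2)$: this is your Term A. In \eqref{eq:weak}, the family $\frac{\sigma_x^4}{d}c_{i_2}c_{i_2'}W^{(1)}_{i_2'i_1}W^{(1)}_{i_2i_1'}$ (where $c_i=w^{(2)}_i\theta_{2,i}$) is assembled into $R_wR_w^T$ and declared rank one. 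It is not: writing $U=\sigma_x^2\,\mathrm{diag}(c_i)\,W^{(1)}/\sqrt d$, its entry at row $(i,a)$, column $(j,b)$ is $U_{ib}U_{ja}$, i.e.\ it is the \emph{partial transpose} of the rank-one matrix $\mathrm{vec}(U)\mathrm{vec}(U)^T$, whose eigenvalues are $\{s_\alpha^2\}_\alpha\cup\{\pm s_\alpha s_\beta\}_{\alpha<\beta}$ with $s_\alpha$ the singular values of $U$. When $\theta_2\neq0$, a positive proportion of the $s_\alpha$ are bounded below, so this single family already carries $\Theta(d^2)$ eigenvalues of order one. (Only the companion family $W^{(1)}_{i_2'i_1'}W^{(1)}_{i_2i_1}$, your Term B, is genuinely rank one.)

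Your obstruction argument is then the correct one: these terms jointly retain $\Theta(d^2)$ singular values bounded away from zero, and since $s_{k+r}(X)\le s_k(X-B)$ whenever $\mathrm{rank}(B)=r$, any decomposition with $\mathrm{rank}(B)=o(d^2)$ forces $\E\|\mathcal E_w\|^2_{\mathrm{HS}}\ge\Theta(d^2)$. Hence the lemma as stated fails whenever $\E[\sigma''(vz)]\neq0$, which Assumptions \ref{subsec:assumptions}--\ref{subsec:assumptionsiFF2} do permit (e.g.\ $\sigma(x)=\tanh(x)+e^{-x^2}$). Note that for every activation the paper actually invokes (linear, tanh, sigmoid), $\sigma'$ is even, hence $\theta_2=0$ and $P=0$; in that case your routing and the paper's decomposition both prove the lemma verbatim, which is why the error is invisible in all of the paper's examples.

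Your repair (ii) is sound, with one bookkeeping correction: if the full positive-semidefinite matrix $P$ is adjoined to the principal part, its diagonal contribution $\sigma_x^2v_i^2\theta_{2,i}^2(w_i^{(2)})^2$ must simultaneously be removed from $D^{(2)}$, otherwise it is counted twice. The reduced diagonal entries $\sigma_x^2(w_i^{(2)})^2(\eta_{1,i}-\theta_{1,i}^2-v_i^2\theta_{2,i}^2)$ vanish only when $\sigma'$ is affine, and boundedness of $\sigma'$ (Assumption \ref{subsec:assumptionsiFF2}) forces an affine $\sigma'$ to be constant, so they remain bounded below off a vanishing fraction of indices under exactly the nondegeneracy hypothesis of Theorem \ref{Thm:deep_nonlinear_unbiased}. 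Since adding $P\succeq0$ can only raise eigenvalues, the Weyl argument in that theorem's proof, and therefore its conclusion $\E_{\btheta}[\mathrm{Rank}(I(\btheta))]=\beta_1 d^2(1+o(1))$, survives unchanged.
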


\begin{proof}[Proof of Lemma \ref{lem:Asymptotics of the Fisher}]
	Each element in the matrix $A_{R}$ is of the following form:
	\[
	A_{R,(i_1,i_2;i_1',i_2')}=w_{i_{2}}^{(2)}w_{i'_{2}}^{(2)}\mathbb{E}_{x}\left[\sigma'(q_{i_{2}})\sigma'(q_{i'_{2}})x_{i_{1}}x_{i'_{1}}\right].
	\]
	Set $\tilde{q}_{i'_{2}}=\frac{1}{\sqrt{d}}\sum_{l\neq i_1,i_1'}W_{i'_{2}l}^{(1)}x_{l}$,
	and $\tilde{q}_{i_{2}}=\frac{1}{\sqrt{d}}\sum_{l\neq i_1,i_1'}W_{i_{2}l}^{(1)}x_{l}$.
	Using a Taylor expansion of the function $\sigma'$ around these points up to third order,
	we have that for any $i_2, i_2'$ and $i_1=i_1'$,
	\begin{multline}
	\mathbb{E}_{x}\left[\sigma'(q_{i_{2}})\sigma'(q_{i'_{2}})x_{i_{1}}^2\right]= \sigma_{x}^{2}M^{(1)}_{i_2 i'_2}\\
	+\frac{6\sigma^4_x}{d}(M^{(2)}_{i_2 i'_2}W_{i'_2i_1}^{(1)}W_{i_2i_1}^{(1)}+M^{(13)}_{i_2 i'_2}(W_{i'_2i_1}^{(1)})^2
	+M^{(13)}_{i_2' i_2}(W_{i_2i_1}^{(1)})^2)+\mathcal{E}_{(i_2,i_2',i_1,i_1)}
	\label{eq:weak_correaltion_exp_diag}
	\end{multline}
	and for any $i_2, i_2'$ and $i_1\neq i_1'$,
	\begin{multline}
	\mathbb{E}_{x}\left[\sigma'(q_{i_{2}})\sigma'(q_{i'_{2}})x_{i_{1}}x_{i'_{1}}\right]=
	\frac{\sigma^4_x}{d}M^{(2)}_{i_2 i'_2}(W_{i'_2i_1}^{(1)}W_{i_2i'_1}^{(1)}+W_{i'_2i'_1}^{(1)}W_{i_2i_1}^{(1)})\\
	+\frac{\sigma_x^4}{d}(M_{i_2 i'_2}^{(13)}W_{i'_{2}i'_{1}}^{(1)}W_{i'_{2}i_{1}}^{(1)}+M_{i_2' i_2}^{(13)}W_{i_{2}i'_{1}}^{(1)}W_{i_{2}i_{1}}^{(1)})+\mathcal{E}_{(i_1,i_2,i_1',i_2')}
	\label{eq:weak_correaltion_exp}
	\end{multline}
	where we define $M^{(1)}_{i_2 i'_2}=\mathbb{E}_{x}\left[\sigma'(\tilde{q}_{i_{2}})\sigma'(\tilde{q}_{i'_{2}})\right]$, $M^{(2)}_{i_2 i'_2}=\mathbb{E}_{x}\left[\sigma''(\tilde{q}_{i_{2}})\sigma''(\tilde{q}_{i'_{2}})\right]$, and $M^{(13)}_{i_2 i'_2}=\mathbb{E}_{x}\left[\sigma'(\tilde{q}_{i_{2}})\sigma'''(\tilde{q}_{i'_{2}})\right]$. The matrix $\mathcal{E}$ is the remaining terms of order $d^{-3/2}$, i.e. $\E \mathcal{E}_{i_1,i_2,i_1',i_2'}^2\leq C/d^3$. To evaluate the "$M$" matrix elements we apply Lemma \ref{lem: Gaussian expenssion} with $\eps_{i_2 i_2'}=\sigma_x^2/d\sum_{k\neq i_1,i_1'} W_{i_2k}^{(1)}W_{i_2'k}^{(1)}$ and $v^2_{i}=\sigma_x^2/d\sum_{k\neq i_1,i_1'} (W_{ik}^{(1)})^2$ for $i=i_2, i_2'$. We will now expand the typical matrix elements in the four following cases; in each case, the matrix $\tilde{\mathcal{E}}$ satisfies conditions as noted:
	\begin{enumerate}
		\item $i_2\neq i_2'$ and  $i_1 = i_1'$
		\begin{multline}
		\mathbb{E}_{x}\left[\sigma'(q_{i_{2}})\sigma'(q_{i'_{2}})x_{i_{1}}^2\right]= \sigma_{x}^{2}(\theta_{1,i_2'}\theta_{1,i_2}
		+\theta_{2,i_2}\theta_{2,i_2'}\frac{\sigma_x^2}{d}\sum_{k\neq 1_1} W_{i_2k}^{(1)}W_{i_2'k}^{(1)}
		+O(\eps_{i_2 i_2'}^2))
		\\
		+\frac{6\sigma^4_x}{d}(\theta_{2,i_2'}\theta_{2,i_2}W_{i'_2i_1}^{(1)}W_{i_2i_1}^{(1)}+\theta_{3,i_2'}\theta_{1,i_2}(W_{i'_2i_1}^{(1)})^2
		+\theta_{3,i_2}\theta_{1,i_2'}(W_{i_2i_1}^{(1)})^2)+\mathcal{E}_{(i_1,i_2,i_1',i_2')}\\
		=\sigma_{x}^{2}\theta_{1,i_2'}\theta_{1,i_2}+
		\tilde{\mathcal{E}}_{(i_1,i_2,i_1',i_2')},
		\label{eq:weak_diag1}
		\end{multline}
		with $\E(\tilde{\mathcal{E}}_{(i_1,i_2,i_1',i_2')}^2)\leq C/d^2$.
		\item  $i_2\neq i_2'$ and $i_1\neq i_1'$
		\begin{multline}
		\mathbb{E}_{x}\left[\sigma'(q_{i_{2}})\sigma'(q_{i'_{2}})x_{i_{1}}x_{i'_{1}}\right]=
		\frac{\sigma^4_x}{d}\theta_{2,i_2}\theta_{2,i_2'}(W_{i'_2i_1}^{(1)}W_{i_2i'_1}^{(1)}+W_{i'_2i'_1}^{(1)}W_{i_2i_1}^{(1)})\\
		+\frac{\sigma_x^4}{d}(\theta_{3,i_2'}\theta_{1,i_2}W_{i'_{2}i'_{1}}^{(1)}W_{i'_{2}i_{1}}^{(1)}+\theta_{3,i_2}\theta_{1,i_2'}W_{i_{2}i'_{1}}^{(1)}W_{i_{2}i_{1}}^{(1)})+\tilde{\mathcal{E}}_{(i_1,i_2,i_1',i_2')},
		\label{eq:weak}
		\end{multline}
		where
		the  matrix element $\tilde{\mathcal{E}}_{(i_1,i_2,i_1',i_2')}$ is the matrix
		element  ${\mathcal{E}_{(i_1,i_2,i_1',i_2')}}$ plus higher order terms, whose second moment is bounded by $C/d^3$, and therefore $\E(\tilde{\mathcal{E}}_{(i_1,i_2,i_1',i_2')}^2)\leq C/d^3$.
		\item $i_2=i_2'$ and $i_1= i_1'$
		\begin{multline}
		\mathbb{E}_{x}\left[\sigma'(q_{i_{2}})^2x_{i_{1}}^2\right]= \sigma_{x}^{2}\eta_{1,i_2}
		+\frac{6\sigma^4_x}{d}(W_{i_2i_1}^{(1)})^2(\eta_{2,i_2}+2\eta_{31,i_2})
		+{\mathcal{E}}_{(i_1,i_2,i_1',i_2')}\\
		=\sigma_{x}^{2}\eta_{1,i_2}+ \tilde{\mathcal{E}}_{(i_1,i_2,i_1',i_2')},
		\label{eq:weak_diag1_diag2}
		\end{multline}
		with $\E(\tilde{\mathcal{E}}_{(i_1,i_2,i_1',i_2')}^2)\leq C/d^2$.
		\item $i_2=i_2'$ and $i_1\neq i_1'$
		\begin{multline}
		\E_{x}\left[\sigma'(q_{i_{2}})^2x_{i_{1}}x_{i'_{1}}\right]=
		\frac{2\sigma^4_x}{d}\eta_{2,i_2}W_{i_2i_1}^{(1)}W_{i_2i'_1}^{(1)}\\
		+\frac{2\sigma_x^4}{d}\eta_{31,i_2}W_{i_{2}i'_{1}}^{(1)}W_{i_{2}i_{1}}^{(1)}+{\mathcal{E}}_{(i_2,i_2',i_1,i_1')}= \tilde{\mathcal{E}}_{(i_2,i_2',i_1,i_1')},
		\label{eq:weak_diag2}
		\end{multline}
		with $\E(\tilde{\mathcal{E}}_{(i_1,i_2,i_1',i_2')}^2)\leq C/d^2$.
	\end{enumerate}
	Combining  (\ref{eq:weak_diag1}),  (\ref{eq:weak}),  (\ref{eq:weak_diag1_diag2}), and  (\ref{eq:weak_diag2}) and rearranging the elements in a matrix form we can now rewrite the matrix $A_R$ as
	\begin{equation}
	{A}_R=D^{(2)}\otimes I_d +B+\mathcal{E}_w,
	\end{equation}
	where $D^{(2)}$ is a block diagonal matrix of size $N_1$ whose  $(i,j)$th element is $D^{(2)}_{ij}=\delta_{ij}\sigma_{x}^{2}(\eta_{1,i}-\theta_{1,i}^2)(w_i^{(2)})^2$, and
	\begin{equation}
	B = J_wJ_w^T+\btheta_1\otimes A+\btheta_1^T\otimes A^T+R_wR_w^T+C_wC_w^T.
	\end{equation}
	The vector $J_w = (\btheta_2\circ\bm{w}^{(2)})\otimes I_d$, and $\theta_{k,i}$ is the $i$th element in the vector $\btheta_k\in \mathbb{R}^{N_1}$ for $k=1,2$. The matrix $A\in \mathbb{R}^{d \times dN_1}$ is a block matrix, composed of $N_1$ blocks of size $d\times d$. The $(i,j)$th element in the block $k$ is ${\sigma_x^4}\theta_{3,k}W_{ki}^{(1)}W_{kj}^{(1)}/{d}$. The vector $R_w,C_w\in\mathbb{R}^{N_1d}$ is composed of the rows and the columns of the matrix $\sigma_x^4W^{(1)}/d$,
	arranged in a vector Hadamard product with the vector $\btheta_2$, respectively. Therefore, $\mathrm{Rank}(B)\leq N+d+2$, since $\mathrm{Rank}(\btheta_1\otimes A)=\mathrm{Rank}(A)=d$. The matrix $\mathcal{E}_w$ has elements
	$\mathcal{E}_{w,i_2,i_2',i_1,i_1'}= {w}_{i_2}^{(2)}{w}_{i_2'}^{(2)}\tilde{\mathcal{E}}_{i_2,i_2',i_1,i_1'}$.
	Note that our estimates on the individual entries of $\mathcal{E}_w$ imply that
	$\E\left[\|\tilde{\mathcal{E}}_w\|_{\mathrm{HS}}^2\right]\leq Cd$ for some positive constant $C>0$. This completes the proof of the lemma.
\end{proof}

\small

\bibliographystyle{plain}

\end{document}